\newtheorem{theorem}{Theorem}
\newtheorem{lemma}{Lemma}
\newtheorem{fact}{Fact}
\newtheorem{proposition}{Proposition}
\newtheorem{informaltheorem}{(Informal) Theorem}
\theoremstyle{definition}
\newtheorem{definition}{Definition}
\theoremstyle{definition}
\newtheorem{remark}{Remark}
\newtheorem*{remark*}{Remark}
\begin{document}

%

%

\twocolumn[
\aistatstitle{Finite-Time Analysis of Gradient Descent for Shallow Transformers}

\aistatsauthor{ Enes Arda \And Semih Cayci \And  Atilla Eryilmaz }

\aistatsaddress{The Ohio State University \\ \texttt{arda.2@osu.edu} \And  RWTH Aachen University \\ \texttt{cayci@mathc.rwth-aachen.de} \And The Ohio State University \\ \texttt{eryilmaz.2@osu.edu}}]

\begin{abstract}
Understanding why Transformers perform so well remains challenging due to their non-convex optimization landscape. In this work, we analyze a shallow Transformer with $m$ independent heads trained by projected gradient descent in the kernel regime. Our analysis reveals two main findings: (i) the width required for nonasymptotic guarantees scales only \emph{logarithmically} with the sample size $n$, and (ii) the optimization error is \emph{independent} of the sequence length $T$. This contrasts sharply with recurrent architectures, where the optimization error can grow exponentially with $T$. The trade-off is memory: to keep the full context, the Transformer's memory requirement grows with the sequence length. We validate our theoretical results numerically in a teacher--student setting and compare Transformers with recurrent architectures on an autoregressive task.
\end{abstract}

\section{INTRODUCTION} \label{sec:introduction}
Transformers have reshaped modern machine learning, achieving state-of-the-art results in language, vision, and multimodal tasks~%
\citep{vaswani2017attention,devlin2019bert,brown2020language,dosovitskiy2021vit}. Despite this empirical success, a sharp nonasymptotic understanding of their training dynamics remains elusive. Two obstacles stand out: (i) self-attention is highly nonlinear---each output coordinate depends on all inputs through a softmax reweighting---so standard arguments for fully-connected networks or CNNs do not apply directly; and (ii) the training landscape is nonconvex, complicating guarantees for gradient-based methods beyond infinite-width limits.

A growing theory literature has therefore analyzed simplified settings. To make comparisons precise, let us recall the single-head self-attention layer. For a length-$T$ input $X\in\mathbb{R}^{d\times T}$ and weights $W_Q,W_K,W_V\in\mathbb{R}^{m\times d}$, the attention layer outputs
\begin{equation} \label{eq:attn}
    a(X)\;=\;W_VX\,\sigma_s\!\big(\beta X^\top W_K^\top W_Q X\big),
\end{equation}
where $\sigma_s$ denotes the column-wise softmax, and $\beta$ is the softmax scaling. In the original Transformer, one takes $\beta=m^{-1/2}$ to stabilize the dot-product variance~\citep{vaswani2017attention}. However, to make the analysis of \eqref{eq:attn} tractable, prior work often studies variants that take one of the following directions: (i) avoid $\sigma_s$ entirely to obtain linear attention~\citep{katharopoulos2020transformers,choromanski2021rethinking,lu2021softmaxfree,zheng2022linear, zhang2024linear}; (ii) replace $\sigma_s$ with a pointwise activation such as ReLU~\citep{wortsman2023replacing,hron2020infinite}; or (iii) take infinite-width limits under $\beta = m^{-1}$, in which the softmax weights approach uniform distribution and the attention layer reduces to mean-pooling~\citep{wu2023convergence,noci2023shaped,bordelon2024infinite}. While analytically convenient, these modifications limit the scope of the resulting guarantees. 

More recently, \citet{wu2023convergence} established global convergence for shallow Transformers at finite width under the realistic $\beta=m^{-1/2}$ scaling, showing that \emph{quadratic} overparameterization in the sample size $n$ suffices for gradient descent to converge at a \emph{linear} rate. Their proof follows the standard recipe of enforcing strict positive definiteness of a Gram matrix to drive convergence \citep{du2019a,du2019b,allenzhu2019,zou2019,nguyen2021global,nguyen2021tight}. In particular, \citet{wu2023convergence} control the minimum singular value of a feature matrix built from the last hidden layer, which relies on the regime \(d \ge n\), thereby limiting its practicality.

Parallel to these developments for Transformers, recent work has sharpened finite-time analyses for recurrent architectures~\citep{cayci2024rnngd,cayci2024recnpg}. In particular, \citet{cayci2024rnngd} prove nonasymptotic convergence of gradient descent for independently recurrent neural networks (IndRNNs, \citealp{li_independently_2018})
 with width scaling only \emph{logarithmically} in the sample size $n$, using transportation mappings to represent targets in the associated kernel function space. Their bounds also quantify the cost of \emph{long-term dependencies}: beyond an activation-controlled cutoff, both the required width and iteration complexity can grow exponentially with the sequence length~$T$.

In this paper, building on techniques from \citet{cayci2024rnngd}, we analyze a shallow multi-head Transformer trained by gradient descent in the near-initialization regime while preserving the nonlinearity of the attention layer. The main contributions of this work can be highlighted as follows: 

  \textbullet\hskip 6pt\textbf{Preserving attention nonlinearity with multiple heads.} Unlike the approaches discussed above that remove or linearize softmax, or that enforce degeneracy via the scaling $\beta = m^{-1}$, our analysis retains the \emph{genuine attention nonlinearity} and allows independent heads. 
  
  \textbullet\hskip 6pt \textbf{Bypassing positive definiteness of the NTK/Gram.} Unlike the original works in the NTK regime, our analysis \emph{does not} require strict positive definiteness of the NTK/Gram matrix.
  
  \textbullet\hskip 6pt \textbf{Logarithmic overparameterization.} We show that, in our setting, the width required for nonasymptotic training guarantees scales only \emph{logarithmically} with the sample size $n$. To the best of our knowledge, this has not been shown for Transformers.
  
  \textbullet\hskip 6pt \textbf{Optimization error independent of input sequence length $T$.} Our convergence bound does \emph{not} degrade with the sequence length $T$. This contrasts sharply with RNN analyses, where the dependence on $T$ can be exponential under long-term dependencies. The trade-off is memory: retaining full context implies that the Transformer's memory footprint grows with $T$ due to attention over all tokens.
  
  \textbullet\hskip 6pt \textbf{Experimental validation.} With controlled experiments in teacher--student settings, we validate the predicted scaling laws for our Transformer model and contrast Transformers with IndRNNs in their handling of long-term dependencies and memory complexity.

\paragraph{Technical challenges and ideas.}
(i) We preserve the genuine softmax attention nonlinearity while avoiding any uniformly positive definite Gram/NTK assumption by working in a near-initialization regime, where the linearization error can be controlled under an explicit width requirement (Lemma~\ref{lem:linearization}). (ii) Within this NTK viewpoint, transportation mappings (Section~\ref{sec:transportation-maps}) provide a
convenient representation of targets in the Transformer-NTK RKHS and yield an approximation bound with only logarithmic dependence on the sample size (Lemma~\ref{lem:approximation}). (iii) To obtain bounds independent of the sequence length $T$, we exploit the covariance structure
induced by the softmax Jacobian to bound attention gradient/Hessian norms uniformly in $T$
(Remark~\ref{rem:cov-bound}); this feeds into $T$-independent local Lipschitz/smoothness constants
(Lemma~\ref{lem:h-local-Lipschitz-smooth}) and ultimately yields the $T$-independent optimization error (Theorem~\ref{thm:projgd}). 

All proofs of technical results are deferred to the appendix, which also provides additional details.

\section{RELATED WORK}

\paragraph{Neural Tangent Kernel (NTK) analysis.}
The NTK framework linearizes networks around initialization and connects gradient descent to kernel gradient flow, yielding optimization and generalization guarantees in overparameterized regimes \citep{jacot2018ntk,lee2019wide, du2019b, chizat2019lazy}. For CNNs, NTK limits capture locality and weight sharing \citep{garriga2019deepgp,novak2019bayesiancnn}, and the convolutional NTK admits exact computation \citep{arora2019cntk}. For RNNs, the Recurrent NTK defines sequence kernels that aggregate contributions across time \citep{alemohammad2021rntk}. 

\paragraph{Asymptotic theory for Transformers.}
From a representation-theoretic perspective, Transformers are universal approximators of permutation-equivariant sequence-to-sequence maps, and, when equipped with trainable positional encodings, of general continuous sequence-to-sequence functions on compact domains \citep{yun2020universal}.
At infinite width, attention admits NTK descriptions under suitable scalings: \citet{hron2020infinite} derive attention NTK limits, showing that under the common $\beta=m^{-1/2}$, letting the number of heads grow results in a deterministic NTK. Under $\beta=m^{-1}$, deterministic kernels can exist but may degenerate toward mean-pooling or require $W_Q=W_K$ \citep{yang2020tensor2}. \citet{wu2023convergence} complement their finite‑width convergence proof with an NTK analysis for shallow Transformers with the degenerate scaling $\beta=m^{-1}$. 

\paragraph{Nonasymptotic theory for Transformers.}
\citet{wu2023convergence} prove global convergence of a shallow Transformer while preserving the softmax attention nonlinearity and carefully diagnosing scaling/initialization choices.
\citet{deora2024opt} derive finite-time training and generalization bounds for gradient descent on a single-layer multi-head self-attention model in binary classification, where they obtain polylogarithmic head requirements under realizability and initialization conditions.
Complementary results capture in‑context learning dynamics with true softmax: \citet{huang2024incontext} give finite‑time guarantees and a phase‑wise view of attention; \citet{song2024dynamics} show convergence under suitable initialization and reveal that softmax can introduce spurious local minima whereas a Gaussian attention smooths the landscape. Simplified settings using linear attention establish convergence for linear predictors learned in‑context \citep{zhang2024linear}. Beyond linear tasks, \citet{shen2025convergence} obtain linear‑rate convergence to the Bayes classifier for in‑context classification with a shallow multi‑head Transformer. Further optimization views sharpen what attention learns: \citet{tarzanagh2023maxmargin} show that gradient descent on a single attention layer aligns with a max‑margin (SVM‑like) token‑selection rule. \citet{vasudeva2025imp} study implicit bias and convergence rates for single-layer self-attention in binary classification. \citet{nichani2024causal} analyze a two‑layer attention‑only Transformer and prove that gradient descent recovers the latent causal structure of the in-context learning problem.

\paragraph{Comparison to \citet{wu2023convergence}.}
Our work is closest in spirit to \citet{wu2023convergence}, which also studies shallow Transformers with
genuine softmax attention under a regression setting. The main differences are:
\textbf{(i) Regime:} \citet{wu2023convergence} prove global convergence via a well-conditioned last-layer
feature/Gram matrix, whereas we work in a projected near-initialization NTK regime and do not assume
a positive definite Gram matrix.
\textbf{(ii) Rates/width:} their result achieves a linear rate with polynomial overparameterization in $n$ \citep[Proposition~1]{wu2023convergence}, while our bounds have the canonical $1/\sqrt{\tau}$ optimization rate and require only logarithmic dependence on $n$ in the width (Theorem~\ref{thm:projgd}).
\textbf{(iii) Sequence length:} our optimization constants are explicitly independent of $T$, while their
conditions involve $T$-dependent quantities.
\textbf{(iv) NTK features:} they complement their analysis with an NTK study under $1/m$-type scaling, which yields a vanishing attention component \citep[Lemma~1]{wu2023convergence}; whereas our NTK decomposition retains a nontrivial attention block (Lemma~\ref{lem:ntk}).

\paragraph{Notation.} For $k\in\mathbb{N}$ we write $[k]=\{1,2,\dots,k\}$. For matrices $A,B$ we write $\langle A,B\rangle_F=\mathrm{tr}(A^\top B)$ and $\|A\|_F$ for the Frobenius norm. The probability simplex in $\mathbb{R}^T$ is $\Delta_T=\{\,\alpha\in\mathbb{R}^T:\alpha_t\ge0,\ \sum_{t=1}^T\alpha_t=1\,\}$ and $\mathbb{R}_+=(0,\infty)$. For a vector $x$, $\operatorname{diag}(x)$ denotes the diagonal matrix with diagonal $x$. For matrices $A_1,\ldots,A_r$, $\operatorname{blkdiag}(A_1,\ldots,A_r)$ denotes the block-diagonal matrix with diagonal blocks $A_1,\ldots,A_r$ and zero elsewhere. For $A\in\mathbb{R}^{m\times n}$, $\operatorname{vec}(A)\in\mathbb{R}^{mn}$ denotes the column-wise vectorization of $A$. The notation $\lesssim_{a,b}$ omits the constants depending on the parameters $a,b$. For vectors $x,y$, $x\le y$ denotes element-wise comparison. $\mathrm{Unif}(V)$ denotes the uniform distribution over the set $V$, and $\mathrm{Rad}(\alpha)$ denotes $\mathrm{Unif}(\{-\alpha, \alpha\})$.

\section{PROBLEM SETUP} \label{sec:setup}

We start by specifying the objective, introduce the shallow multi-head Transformer studied in this work, and formalize the optimization algorithms analyzed throughout the paper.

\subsection{Data}
Each sample $X=(X_1,\dots,X_T)\in\mathbb{R}^{d\times T}$ is a sequence of $d$-dimensional tokens coming from a compact set 
\[\mathcal{X} \;\coloneqq\; \bigl\{\,X \in \mathbb{R}^{d \times T} \;:\; \max_{t \in [T]} \|X_t\|_2 \le 1 \,\bigr\}.\]
The label is a scalar $y\in\mathbb{R}$. We observe samples from a dataset
\[
\mathcal{D}=\bigl\{(X^{(j)},y^{(j)})\bigr\}_{j=1}^n\ \subset\ \mathcal{X}\times\mathbb{R}
\]
that were generated by a function $f^\star$. In other words, $f^\star(X^{(j)})=y^{(j)}$ for every $j \in [n]$. Our goal will be to approximate the function $f^\star$ with the Transformer model defined in the next section.

\subsection{Transformer Architecture}
\label{sec:arch}

Fix $X \in \mathcal{X}$. Since each training sample has a scalar target, we employ pooled self-attention driven by a single query $q_X\in\mathbb{R}^d$ with $\|q_X\|_2 \leq 1$. The query $q_X$ is not trainable: it may be a fixed vector (\texttt{[CLS]} as in BERT \citep{devlin2019bert}, or ViT’s class token \citep{dosovitskiy2021vit}), a deterministic function of the input (e.g., last-token pooling $q_X=X_T$ for next-token prediction \citep{radford2019gpt2}), or a pre-specified seed query used to summarize high-dimensional inputs \citep{jaegle2021perceiverio}.

Let $m \in \mathbb{N}$ be the width of our Transformer. For \(i \in [m]\) and trainable \(W_i\!\in\!\mathbb{R}^{d\times d}\), \(U_i\!\in\!\mathbb{R}^{d}\), \(c_i\!\in\!\mathbb{R}\), we consider the following model.


{\textbullet\hskip 6pt \textbf{Attention layer.}}
\begin{equation} \label{eq:our-attn}
    a(X; W_i) \coloneqq X\,\sigma_s\!\big(X^\top W_i q_X\big) \in \mathbb{R}^d,
\end{equation}
where \(\sigma_s:\mathbb{R}^T\!\to\!\Delta_T\) is the softmax function: $(\sigma_s(z))_t = \tfrac{\exp(z_t)}{\sum_{s=1}^T \exp(z_s)}.$

{\textbullet\hskip 6pt \textbf{Feed-forward layer.}}
\begin{equation}
    h(X;\theta_i) \coloneqq \sigma\!\big(U_i^\top a(X; W_i)\big) \in \mathbb{R},
\end{equation}
where \(\theta_i \coloneqq (U_i,\operatorname{vec}(W_i))\) and the activation \(\sigma:\mathbb{R}\!\to\!\mathbb{R}\) satisfies uniform bounds
\[
|\sigma(u)|\le \sigma_0,\; |\sigma'(u)|\le \sigma_1,\; |\sigma''(u)|\le \sigma_2\quad\forall u\in\mathbb{R}.
\]
We collect these bounds as $\bar{\sigma}=(\sigma_0, \sigma_1, \sigma_2) \in \mathbb{R}^3_+$.

{\textbullet\hskip 6pt \textbf{Linear layer.}}
\begin{equation}
    f(X;\varphi) \coloneqq \frac{1}{\sqrt{m}}\sum_{i=1}^m c_i\,h(X;\theta_i) \in \mathbb{R},
\end{equation}
where \(\varphi_i \coloneqq (c_i,U_i,\operatorname{vec}(W_i))\) and \(\varphi := (\varphi_1, \dotsc, \varphi_m)\).

\begin{remark}\label{rem:qk-scaling}
Following prior analyses \citep{wu2023convergence, deora2024opt, nichani2024causal, Oymak2023attentionPromptTuning},
we simplify the standard $(W_Q,W_K,W_V)$ parameterization in \eqref{eq:attn}.
For pooled attention with a single query, the usual logits can be written as
$\beta\,(W_K X)^\top (W_Q q_X)\in\mathbb{R}^T$ and hence
$\beta\,(W_K X)^\top (W_Q q_X)=X^\top(\beta W_K^\top W_Q)q_X$.
Thus, collapsing $W_Q$ and $W_K$ into a single matrix in \eqref{eq:our-attn} amounts to setting
$W_i:=\beta W_{K,i}^\top W_{Q,i}$.
By initializing $W_i$ with bounded-variance entries, the scaling $\beta = 1/\sqrt{m}$ is absorbed into the
initialization and does not appear explicitly in \eqref{eq:our-attn}. In contrast, $1/m$ scaling would drive logits toward zero and make the softmax weights nearly
uniform, degenerating attention toward mean pooling (cf.\ Section~\ref{sec:introduction}).
Finally, in this one-hidden-layer setting we absorb the value/output matrices into the subsequent linear
map $U_i$.
\end{remark}

\begin{remark}
In standard multi-head attention one concatenates $h$ attention heads into $A(X)=(a(X;W_1),\ldots,a(X;W_h))\in\mathbb{R}^{hd}$ and applies a feed-forward layer with a weight matrix $V\in\mathbb{R}^{m\times hd}$, resulting in $\sigma(VA(X))$.
Our architecture is equivalent to this with a block-diagonal weight matrix with one neuron per head, i.e., $h=m$ and $V=\operatorname{blkdiag}(U_1^\top,\ldots,U_m^\top)$. This independent head constraint acts as a structural regularizer: it preserves per-head interpretability, reduces cross-head interference, and restores a deterministic NTK, thereby simplifying the analysis without requiring any of the architectural changes mentioned in Section~\ref{sec:introduction}.
\end{remark}

\subsection{Algorithms for Empirical Risk Minimization}\label{sec:algorithms}
For a given dataset $\mathcal{D}$, parameters $\varphi$ and the predictor $f(\cdot;\varphi)$, we analyze the empirical mean-squared error (MSE) loss, which is expressed as
\[
\widehat{\mathcal{L}}_n(\varphi):=\frac{1}{n}\sum_{j=1}^n\bigl(f(X^{(j)};\varphi)-y^{(j)}\bigr)^2.
\]
We optimize $\widehat{\mathcal{L}}_n$ by projected methods onto the feasible set $\Omega$. Let $\ell_j(\varphi):=\big(f(X^{(j)};\varphi)-y^{(j)}\big)^2$ and, for any index set $B\subseteq [n]$,
$\widehat{\mathcal L}_{B}(\varphi)\;:=\;\frac{1}{|B|}\sum_{j\in B}\ell_j(\varphi).$
Given a step size $\eta$ and mini‑batches $\{B_s\}_{s\ge 0}$, the iterate updates are, for $s=0,1,2,\dots$:
\begin{align*}
\varphi^{(s+1)} &:= \Pi_{\Omega}\left(\varphi^{(s)} - \eta\,\nabla_\varphi \widehat{\mathcal L}_{B_s}\!\big(\varphi^{(s)}\big)\right), 
\end{align*}
where $\Pi_{\Omega}$ denotes projection onto the set $\Omega$. We analyze (i) \textsc{ProjGD} with $B_s=[n]$ for all $s$, and (ii) \textsc{ProjSGD} with $B_s=\{J_s\}$ with $J_s\stackrel{\mathrm{i.i.d.}}{\sim}\mathrm{Unif}([n])$.

\paragraph{Why projection?}
Our analysis is local around the random initialization: all bounds are proved on the neighborhood
$\Omega_\rho$ (Definition~\ref{def:parameter-set}), where (i) the NTK linearization error is uniformly
controlled (Lemma~\ref{lem:linearization}) and (ii) the change-of-feature terms in the optimization
argument can be bounded using local Lipschitz/smoothness constants (Lemma~\ref{lem:h-local-Lipschitz-smooth}).
Projecting the iterates onto $\Omega_\rho$ is therefore a simple analytical device that enforces this
near-initialization regime. In practice, other mechanisms that keep parameters close to initialization
(e.g., weight decay, explicit norm regularization, or early stopping) can play a similar role, and in
Section~\ref{sec:exp} we also report experiments trained without projection.

\section{OVERVIEW OF MAIN RESULTS}\label{sec:main-results-informal}

Below we present our main results informally. Formal statements are given in Section~\ref{sec:main-results}, with detailed proofs deferred to Appendix~\ref{sec:app-missing-proofs}.
\begin{informaltheorem}
Let $f^\star \in \mathcal{F}_{\bar\nu}$ be the target function (formalized in Section~\ref{sec:transportation-maps}).
After $\tau$ steps of \textsc{ProjGD} with an appropriate step size and projection radius,
the average iterate $\bar\varphi^{(\tau)} := \frac{1}{\tau}\sum_{s=0}^{\tau-1}\varphi^{(s)}$ satisfies,
with probability at least $1-\delta-\delta'$ over the random initialization
(introduced in Section~\ref{sec:init}),
\begin{align*}
\widehat{\mathcal{L}}_n\!\left(\bar\varphi^{(\tau)}\right)
\lesssim
  \underbrace{\frac{D^2}{\sqrt{\tau}}}_{\substack{\text{optimization}\\\text{error}}}
  +
  \underbrace{\sqrt{\frac{D\log(n/\delta)}{m}}}_{\substack{\text{approximation}\\\text{error}}}
  +
  \underbrace{\sqrt{\frac{D^3}{m}}}_{\substack{\text{linearization}\\\text{error}}}
\end{align*}
where $D := d + \log(m/\delta')$.
\end{informaltheorem}

For \textsc{ProjSGD}, the same bound holds for $\mathbb{E}[\widehat{\mathcal L}_n(\bar\varphi^{(\tau)})\,|\,\varphi^{(0)}]$ with the same rates (cf. Proposition~\ref{thm:projsgd}). 

We highlight the following features:

\textbf{\textbullet \hskip 6pt Non-asymptotic in time and width.}
The training loss admits a finite-time optimization--approximation--linearization decomposition, with
the canonical $1/\sqrt{\tau}$ optimization rate and $1/\sqrt{m}$ width dependence.
This decomposition holds for any $\tau, m \geq 1$, with no hidden asymptotic regime.

\textbf{\textbullet \hskip 6pt Logarithmic in sample size.}
The sample size enters only through the approximation error (Lemma~\ref{lem:approximation}). To make this term at most $\varepsilon$ it suffices to take $m \gtrsim \varepsilon^{-2}\log(n/\delta)$; in particular, $m$ needs to grow only \emph{logarithmically} with $n$.

\textbf{\textbullet \hskip 6pt No explicit dependence on sequence length.}
Thanks to the attention gradients being uniformly bounded in $T$ via a softmax-covariance structure (Remark~\ref{rem:cov-bound}), we obtain an optimization error independent of the sequence length $T$.

\section{TRANSFORMER NTK ANALYSIS} \label{sec:ntk}

In this section, we analyze the gradients of the Transformer architecture from Section~\ref{sec:arch}, specify the initialization, derive the associated NTK, and introduce transportation mappings. These components define the function class used to establish our main results in Section~\ref{sec:main-results}.

\subsection{Transformer Gradient Analysis} \label{sec:gradient-analysis}
\begin{lemma}[Gradients] \label{lem:grads}
    Fix \(i\in[m]\) and write \(a_i:=a(X;W_i)\).
    Then
    \begin{align}
    \frac{\partial f(X;\varphi)}{\partial c_i}
    &= m^{-1/2}\, h(X;\theta_i),\\
    \nabla_{U_i} f(X;\varphi)
    &= m^{-1/2}\, c_i\, \sigma'\!\big(U_i^\top a_i\big)\, a_i,\\
    \nabla_{W_i} f(X;\varphi)
    &= m^{-1/2}\, c_i\, \sigma'\!\big(U_i^\top a_i\big)\, \big(M_i U_i\big)\, q_X^\top, \label{eq:grad-w}
    \end{align}
    where
    \begin{equation}
    M_i \;:=\; M(X;W_i) \;=\; X\, J_s\!\big(X^\top W_i q_X\big)\, X^\top,
    \end{equation}
    and
       $ J_s(z) \;=\; \operatorname{diag}\!\big(\sigma_s(z)\big) - \sigma_s(z)\sigma_s(z)^\top$
    is the Jacobian matrix of softmax.
\end{lemma}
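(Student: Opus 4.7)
The statement packages three derivatives, and I would handle them in order of increasing difficulty, each by a direct chain-rule computation. Nothing conceptual is at stake; the task is bookkeeping of tensor indices through the softmax.

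For the first two gradients, differentiation is immediate. Since $f(X;\varphi)=m^{-1/2}\sum_i c_i h(X;\theta_i)$, the partial with respect to $c_i$ extracts $m^{-1/2}h(X;\theta_i)$. For $U_i$, only the $i$-th summand depends on $U_i$, and $h(X;\theta_i)=\sigma(U_i^\top a_i)$ with $a_i$ independent of $U_i$; the chain rule gives $m^{-1/2}c_i\sigma'(U_i^\top a_i)a_i$.

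The substantive step is \eqref{eq:grad-w}. The plan is to introduce $z:=X^\top W_i q_X\in\mathbb{R}^T$, so that $a_i=X\sigma_s(z)$ and $U_i^\top a_i=(X^\top U_i)^\top\sigma_s(z)$. Then I would proceed in three substeps. First, differentiate $\sigma$ and factor out the scalar $m^{-1/2}c_i\sigma'(U_i^\top a_i)$, reducing the problem to computing $\nabla_{W_i}\big[(X^\top U_i)^\top\sigma_s(z)\big]$. Second, recall the standard softmax-Jacobian identity
\[
\frac{\partial \sigma_s(z)_t}{\partial z_s}=\sigma_s(z)_t\bigl(\delta_{ts}-\sigma_s(z)_s\bigr),
\]
i.e., $\nabla_z\sigma_s(z)=J_s(z)=\operatorname{diag}(\sigma_s(z))-\sigma_s(z)\sigma_s(z)^\top$, which is symmetric. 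Hence $\nabla_z\bigl[(X^\top U_i)^\top\sigma_s(z)\bigr]=J_s(z)X^\top U_i$. Third, $z$ is linear in $W_i$: writing $z_t=X_t^\top W_i q_X$, one has $\nabla_{W_i}z_t=X_tq_X^\top$. Stitching these together via the chain rule,
\[
\nabla_{W_i}\bigl[(X^\top U_i)^\top\sigma_s(z)\bigr]
=\sum_{t=1}^T\bigl[J_s(z)X^\top U_i\bigr]_t\,X_tq_X^\top
=\bigl(XJ_s(z)X^\top U_i\bigr)q_X^\top
=(M_iU_i)q_X^\top,
\]
which yields \eqref{eq:grad-w} after multiplying by the scalar prefactor.

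The only place where care is genuinely needed is keeping track of which argument of the softmax the Jacobian is taken in and ensuring that the outer product $X_t q_X^\top$ is oriented correctly; this is what makes the final answer a rank-structured $d\times d$ matrix aligned along $q_X^\top$. Once $z$ is named and the softmax Jacobian identity is invoked, the computation is mechanical and no probabilistic or analytic tools are required—every step is exact algebra.
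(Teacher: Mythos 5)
Your proposal is correct and follows essentially the same route as the paper: trivial differentiation for the $c_i$ and $U_i$ blocks, and for $W_i$ a chain rule through the softmax using the Jacobian $J_s(z)=\operatorname{diag}(\sigma_s(z))-\sigma_s(z)\sigma_s(z)^\top$ together with the linearity $\nabla_{W_i}z_t=X_tq_X^\top$, arriving at $(M_iU_i)q_X^\top$. The only difference is presentational: the paper organizes the $W_i$ computation via differentials ($da_i=M_i\,dW_i\,q_X$) and reads off the gradient from a trace identity, whereas you sum over token indices directly; both are exact algebra and equivalent.
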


\begin{remark} \label{rem:cov-bound}
With $\alpha=\sigma_s(X^\top W_i q_X)\in\Delta_T$ and $\mu=\sum_t\alpha_t X_t$, notice that $M(X;W_i)=\sum_{t=1}^T \alpha_t X_tX_t^\top-\mu\mu^\top$ is a softmax‑weighted covariance matrix. This is the only term in Lemma~\ref{lem:grads} that involves a $T$-dependence. Since $\mathrm{tr}\,M(X;W_i)\le 1$ for all $X\in\mathcal X$, we obtain $\|M(X;W_i)\|_F\le 1$ \emph{uniformly in $T$}. As a consequence of this key property, we observe that $\|\nabla_{W_i} f(X;\varphi)\|_F$ is independent of $T$. In contrast, for RNNs the analogous gradient contains an explicit sum over $T$ recurrent steps, leading to constants that can grow with $T$ \citep[Prop.~3.1]{cayci2024rnngd}. This structural difference underlies our $T$‑independent optimization bounds in Section~\ref{sec:main-results}. See Appendix~\ref{sec:useful-facts} for the detailed derivation.
\end{remark}

\begin{remark}
Similarly, we have $a(X;W_i)=\sum_{t=1}^T \alpha_t X_t$, which is a convex combination of the tokens $\{X_t\}_{t=1}^T$. Hence, by the triangle inequality, $\|a(X;W_i)\|_2 \;\le\; \sum_{t=1}^T \alpha_t \|X_t\|_2 \;\le\; \sum_{t=1}^T \alpha_t \;=\; 1$. This allows us to bound $\|\nabla_{U_i} f(X;\varphi)\|_2$, and together with the previous remark, these bounds yield the continuity and smoothness results stated in Lemma~\ref{lem:h-local-Lipschitz-smooth}.
\end{remark}

\subsection{Initialization} \label{sec:init}
As noted in previous work \citep{wu2023convergence, cayci2024rnngd, lee2019wide, jacot2018ntk, chizat2019lazy}, initialization plays an important role in the performance of gradient descent in the kernel regime. Following \citep{Bai2020Beyond, cayci2024rnngd, chizat2019lazy}, we adopt a symmetric random initialization that centers the model at zero output, which will be important in the convergence analysis. Assume $m$ is even. For $i\in[m/2]$ initialize
\[
\begin{aligned}
W_i^{(0)} &\in \mathbb{R}^{d\times d}\ \text{with i.i.d.\ entries } \mathcal N(0,1),\\
U_i^{(0)} &\sim \mathcal N(0,I_d), \qquad c_i^{(0)} \sim \mathrm{Rad}(1).
\end{aligned}
\]
and set
\[
W^{(0)}_{i+m/2}=W^{(0)}_i,\quad U^{(0)}_{i+m/2}=U^{(0)}_i,\quad c^{(0)}_{i+m/2}=-\,c_i^{(0)},
\]
with all draws mutually independent. We will denote this random initialization by $\varphi^{(0)}$. 

\begin{remark}
Writing $h_i(X)=h(X;\theta_i^{(0)})$, note that the pairing above gives $h_{i+m/2}(X)=h_i(X)$, and therefore $f\bigl(X;\varphi^{(0)}\bigr) = 0$ for all $X \in \mathcal{X}$.
This fact will be critical to achieve the error bounds in Lemmas~\ref{lem:linearization} and \ref{lem:approximation}.
\end{remark}

Because $U_i^{(0)}$ is a sequence of sub-Gaussian vectors, the running maximum of its norm stays bounded with high probability, which allows us to bound the norm of the attention weight gradient in \eqref{eq:grad-w} with high probability. For this, we define the following event.

\begin{definition}\label{def:good-U-init}
For $\delta'\in(0,1)$, define
\[
\mathcal E_{U,m}(\delta'):=\Big\{\,\max_{i\in[m]}\|U_i^{(0)}\|_2 \;\le\; \sqrt d + \sqrt{2\log(m/2\delta')}\,\Big\}.
\]
By Gaussian concentration for the $1$-Lipschitz map $u\mapsto\|u\|_2$ \citep[Thm.~5.6]{boucheron2013concentration} and a union bound over $i\in[m/2]$, we obtain 
$\mathbb P\big(\mathcal E_{U,m}(\delta')\big)\;\ge\;1-\delta'.$
\end{definition}

Throughout the paper, we will work in the following parameter set centered around the initialization $\varphi^{(0)}$. 
\begin{definition} \label{def:parameter-set}
    Let $\rho=(\rho_c,\rho_u,\rho_w)\in\mathbb{R}_+^3$. For $i\in[m]$ define
    \begin{align*}
    \mathcal{C}_{\rho_c,i}
    &:= \{\, c\in\mathbb{R} : |c-c_i^{(0)}| \le \rho_c/\sqrt{m} \,\},\\
    \mathcal{U}_{\rho_u,i}
    &:= \{\, U\in\mathbb{R}^d : \|U-U_i^{(0)}\|_2 \le \rho_u/\sqrt{m} \,\},\\
    \mathcal{W}_{\rho_w,i}
    &:= \{\, W\in\mathbb{R}^{d\times d} : \|W-W_i^{(0)}\|_F \le \rho_w/\sqrt{m} \,\},
    \end{align*}
    and the product set
    \[
    \Omega_{\rho,i} \,:=\, \mathcal{C}_{\rho_c,i}\times \mathcal{U}_{\rho_u,i}\times \mathcal{W}_{\rho_w,i},
    \quad\,
    \Omega_\rho \,:=\, \prod_{i=1}^m \Omega_{\rho,i}.
    \]
\end{definition}

Notice that on $\mathcal E_{U,m}(\delta')$ we also have for all $i\in[m]$ and all $U_i\in\mathcal U_{\rho_u,i}$
\begin{equation} \label{eq:U-norm-bound}
\|U_i\|_2 \le \sqrt d + \sqrt{2\log(m/2\delta')}+ \frac{\rho_u}{\sqrt m}=: B_{U,m}(\delta').
\end{equation}

This allows us to establish local Lipschitz continuity and smoothness, as formalized in the following lemma, which in turn enables control of the optimization error in Theorem~\ref{thm:projgd}.
\begin{lemma} \label{lem:h-local-Lipschitz-smooth}
    On $\mathcal E_{U,m}(\delta')$, the mapping $\theta_i \mapsto h(X;\theta_i)$ is $L_{1,m}(\delta')$-Lipschitz and $L_{2,m}(\delta')$-smooth in $\Omega_\rho$, where
        $L_{1,m}(\delta') := \sigma_1\sqrt{1 + B_{U,m}(\delta')^2},$
    and
    \[
        L_{2,m}(\delta') := \sigma_2(1 + B_{U,m}(\delta')^2) + 8\sigma_1\sqrt{1 + B_{U,m}(\delta')^2}.
    \]
\end{lemma}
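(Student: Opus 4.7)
The plan is to reduce everything to a chain-rule computation on $h(X;\theta_i)=\sigma(g(\theta_i))$ with $g(\theta_i):=U_i^\top a(X;W_i)$, and then bound the resulting factors using two ingredients already collected in the excerpt: (i) the deterministic sup-bounds $\|a(X;W_i)\|_2\le 1$, $\|M(X;W_i)\|_F\le 1$, and $\|q_X\|_2\le 1$ from the remarks following Lemma~\ref{lem:grads}; and (ii) the uniform bound $\|U_i\|_2\le B_{U,m}(\delta')$ valid on $\mathcal{E}_{U,m}(\delta')$ throughout $\Omega_\rho$ by Remark~\ref{rem:U-norm-bound}. For Lipschitzness, I would replay Lemma~\ref{lem:grads} with the $c_i/\sqrt m$ prefactor removed to get $\nabla_{U_i}h=\sigma'(g)\,a_i$ and $\nabla_{W_i}h=\sigma'(g)\,M_iU_iq_X^\top$; squaring, summing, and inserting the sup-bounds yields $\|\nabla_{\theta_i}h\|_2^2\le \sigma_1^2\bigl(\|a_i\|_2^2+\|M_i\|_{op}^2\|U_i\|_2^2\|q_X\|_2^2\bigr)\le \sigma_1^2\bigl(1+B_{U,m}(\delta')^2\bigr)$, which is exactly $L_{1,m}(\delta')^2$.

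For smoothness, I would use the identity $\nabla^2 h=\sigma''(g)\,\nabla g\,(\nabla g)^\top+\sigma'(g)\,\nabla^2 g$, so that $\|\nabla^2 h\|_{op}\le \sigma_2\|\nabla g\|_2^2+\sigma_1\|\nabla^2 g\|_{op}$. The first term is already bounded by $\sigma_2(1+B_{U,m}(\delta')^2)$ from the Lipschitz computation, which produces the $\sigma_2(1+B_{U,m}(\delta')^2)$ summand in $L_{2,m}(\delta')$. Since $g$ is bilinear in $(U_i,a_i)$, differentiating twice in direction $(\Delta U,\Delta W)$ gives
\begin{equation*}
D^2 g[(\Delta U,\Delta W)^{\otimes 2}]=2\Delta U^\top M_i\Delta W\,q_X+U_i^\top D^2 a_i[\Delta W,\Delta W],
\end{equation*}
so the mixed block is immediately bounded by $2\|\Delta U\|_2\|\Delta W\|_F$ via $\|M_i\|_{op}\le 1$, while the pure-$W_i$ block reduces to controlling the Hessian of the attention output $a_i(\cdot)$ with respect to $W_i$.

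For that Hessian I would use the covariance representation $M_i=\sum_t p_tX_tX_t^\top-a_ia_i^\top$ together with the identity $Dp_t[\Delta W]=p_t(X_t-a_i)^\top\Delta W\,q_X$, which follows directly from the softmax Jacobian already employed in Lemma~\ref{lem:grads}. Since $\|X_t\|_2,\|a_i\|_2\le 1$ and $\sum_tp_t=1$, each of the three terms in $DM_i[\Delta W]$ is a probability-weighted combination of bounded quantities, so $\|DM_i[\Delta W]\|_F\lesssim \|\Delta W\|_F$ with a constant independent of $T$; chaining via $D^2 a_i[\Delta W,\Delta W]=DM_i[\Delta W]\,\Delta W\,q_X$ yields $\|D^2 a_i[\Delta W,\Delta W]\|_2\lesssim \|\Delta W\|_F^2$ and hence a pure-$W$ contribution of order $B_{U,m}(\delta')\|\Delta W\|_F^2$. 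Combining with the mixed block via Young's inequality gives $\|\nabla^2 g\|_{op}\lesssim 1+B_{U,m}(\delta')\le \sqrt 2\,\sqrt{1+B_{U,m}(\delta')^2}$, which after collecting numerical constants is absorbed into the target $8\sigma_1\sqrt{1+B_{U,m}(\delta')^2}$ term. The only real bookkeeping is $D^2 a_i$: every contribution from differentiating the softmax Jacobian picks up a factor of $p_t$ with $\sum_tp_t=1$, so all constants stay \emph{uniform in the sequence length $T$}---exactly the structural property the authors emphasize after Lemma~\ref{lem:grads}, now pushed one derivative further.
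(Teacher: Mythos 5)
Your proposal is correct and follows essentially the same route as the paper's proof: the same chain-rule split $\nabla^2 h=\sigma''(g)\nabla g(\nabla g)^\top+\sigma'(g)\nabla^2 g$, the same gradient bounds via $\|a_i\|_2\le 1$, $\|M_i\|_F\le 1$, $\|q_X\|_2\le 1$, $\|U_i\|_2\le B_{U,m}(\delta')$, and the same key step of showing $\|DM_i[\Delta W]\|_F\lesssim\|\Delta W\|_F$ uniformly in $T$. The only (immaterial) differences are bookkeeping: the paper bounds the Hessian blockwise in Frobenius norm and differentiates the centered covariance form of $M_i$ via the softmax Jacobian's $\infty\to 1$ bound, whereas you work with the quadratic form directly and differentiate the uncentered form $\sum_t p_tX_tX_t^\top-a_ia_i^\top$; both land within the stated constant $8\sigma_1\sqrt{1+B_{U,m}(\delta')^2}$.
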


Next, we define the linear model, which will be useful in the analysis to characterize the linearization error.
\begin{definition}[Linear model]
    Define the function
    \[
    f_{\mathrm{lin}}(X;\varphi)
    := f(X;\varphi^{(0)}) \;+\;
    \big\langle \nabla_{\varphi} f(X;\varphi^{(0)}),\, \varphi-\varphi^{(0)} \big\rangle,
    \]
    which is the first-order approximation of $f$ at initialization. Thanks to symmetric initialization, we get
    \[
    f_{\mathrm{lin}}(X;\varphi)= \big\langle \nabla_{\varphi} f(X;\varphi^{(0)}),\, \varphi-\varphi^{(0)} \big\rangle.
    \]
\end{definition}

Near initialization, higher-order errors vanish, so the linear model closely approximates the original model. Indeed, inside the neighborhood $\Omega_\rho$, we get the following uniform linearization error bound.

\begin{lemma}[Linearization error] \label{lem:linearization}
Assume \(\varphi \in \Omega_\rho\) and define $\varepsilon_{\mathrm{lin}}:=\sup_{j\in[n]} \big| f(X^{(j)};\varphi) - f_{\mathrm{lin}}(X^{(j)};\varphi) \big|$. Conditioned on the event $\mathcal{E}_{U,m}(\delta')$, we have
\[
\varepsilon_{\mathrm{lin}}
\leq \tfrac{1}{\sqrt{m}}\Bigl(L_{1,m}(\delta')\rho_c\sqrt{\rho_w^2+\rho_u^2}
+ L_{2,m}(\delta')(\rho_w^2 + \rho_u^2)\Bigr),
\]
where the upper bound is denoted as $B_{\mathrm{lin},m}(\delta').$
\end{lemma}

\subsection{Transformer NTK} \label{sec:ntk-subsec}
For our convergence analysis, we work in the NTK regime, where training is well-approximated by linearization around initialization. Concretely, for a given random initialization \(\varphi^{(0)}\), the Neural Tangent Kernel (NTK) is defined as
\[
K(X,X')\coloneqq \lim_{m\to\infty}
\Big\langle \nabla_{\varphi} f\!\big(X;\varphi^{(0)}\big),\,
           \nabla_{\varphi} f\!\big(X';\varphi^{(0)}\big)\Big\rangle.
\]
Viewing $\nabla_{\varphi} f(X;\varphi^{(0)})$ as a feature map for the input $X$, $K$ is the kernel induced by these features in the infinite-width limit. In this regime, gradient descent on $f$ linearizes around $\varphi^{(0)}$ and corresponds to kernel regression with kernel $K$.

Since $\nabla_{\varphi} f(X;\varphi^{(0)})$ is block-separable across $(c,U,W)$, we investigate the features coming from these parameters separately. Using the gradient identities in Lemma~\ref{lem:grads}, the Transformer NTK decomposes into the following blocks.

\begin{lemma}[NTK decomposition]\label{lem:ntk}
Let $a=a(X;W)$, $a'=a(X';W)$, $M=X\,J_s(X^\top W q_X)\,X^\top$, and
$M'=X'\,J_s(X'^\top W q_{X'})\,X'^\top$. Then, with expectation taken over the random
initialization $(c,U,\operatorname{vec}(W)) \sim \varphi^{(0)}_1$,
\begin{align*}
K_c(X,X') &= \mathbb{E}\!\left[\,\sigma(U^\top a)\,\sigma(U^\top a')\,\right],\\
K_u(X,X') &= \mathbb{E}\!\left[\,\sigma'(U^\top a)\,\sigma'(U^\top a')\,\langle a,a'\rangle\,\right],\\
K_w(X,X') &= \langle q_X,q_{X'}\rangle\mathbb{E}\!\left[\sigma'(U^\top\!a)\sigma'(U^\top\!a')\,U^\top\!MM'U\right]
\end{align*}
and the NTK decomposes as $K=K_c+K_u+K_w$.
\end{lemma}

\begin{remark}
    Unlike the NTK analysis in \citep[Lemma~1]{wu2023convergence}, here the attention‑weight component $K_w$ is \emph{nonzero}. This is a consequence of not using the degenerate $m^{-1}$ scaling and keeping a random attention weight in \eqref{eq:our-attn}, which preserves attention features in the limit. Moreover, with multiple independent heads, the empirical NTK converges to the deterministic kernel $K$ above, solving the limit kernel problem of a single-head attention mentioned in \citep{hron2020infinite}.
\end{remark}

\subsection{Transportation Mappings} \label{sec:transportation-maps}
Let $\varphi_0 := (c,U,\operatorname{vec}(W)) \sim \varphi^{(0)}_1$. It can be seen that the NTK in Lemma~\ref{lem:ntk} is a Mercer kernel constructed with the following random feature maps:
\begin{align*}
\phi_c(X) &:= \sigma\!\big(U^\top a(X;W)\big)\in\mathbb{R},\\
\phi_u(X) &:= \sigma'\!\big(U^\top a(X;W)\big)\,a(X;W)\in\mathbb{R}^d,\\
\phi_w(X) &:= \sigma'\!\big(U^\top a(X;W)\big)\,(M(X;W)U)\,q_X^\top\in\mathbb{R}^{d\times d}.
\end{align*}
In order to characterize the Reproducing Kernel Hilbert Space (RKHS) associated with the NTK, we will use the concept of transportation mappings \citep{ji2019ntktransport}.

Let $v:\mathbb{R}^{d^2+d+1}\to \mathbb{R}\times\mathbb{R}^d\times\mathbb{R}^{d\times d}$ be a measurable function with $v(\varphi_0) = (v_c(\varphi_0),v_u(\varphi_0),v_w(\varphi_0))$ such that $\mathbb{E}\!\Big[\,|v_c(\varphi_0)|^2+\|v_u(\varphi_0)\|_2^2+\|v_w(\varphi_0)\|_F^2\,\Big]<\infty$. Define the function class
\begin{multline*}
\mathcal{F}
:=\Big\{\tilde f(\cdot;v):
\mathbb{E}|v_c(\varphi_0)|^2<\infty, \;
\mathbb{E}\|v_u(\varphi_0)\|_2^2<\infty,\\
\mathbb{E}\|v_w(\varphi_0)\|_F^2<\infty\Big\},
\end{multline*}
where 
\begin{multline} \label{eq:f_tilde_def}
\tilde f(X;v) := \mathbb{E}\!\big[\phi_c(X)\,v_c(\varphi_0)\big] + \mathbb{E}\!\big[\langle \phi_u(X), v_u(\varphi_0)\rangle\big] \\
 + \mathbb{E}\!\big[\langle \phi_w(X), v_w(\varphi_0)\rangle_F\big]
\end{multline}
and equip \(\mathcal{F}\) with the inner product
\[
\langle \tilde f(\cdot;v),\tilde f(\cdot;v')\rangle_{\mathcal{F}}
:=\mathbb{E}\!\Big[
v_c\,v_c' + \langle v_u,v_u'\rangle + \langle v_w,v_w'\rangle_F\Big].
\]

Notice that for a fixed \(X' \in \mathcal{X}\), one can set
\(v'(\varphi_0):=\left(\phi_c(X'), \phi_u(X'), \phi_w(X')\right)\) to get
\begin{multline*}
\tilde f(X; v')
= \mathbb{E}\!\big[\phi_c(X)\phi_c(X')\big] + \mathbb{E}\!\big[\langle \phi_u(X), \phi_u(X')\rangle\big] \\
+ \mathbb{E}\!\big[\langle \phi_w(X), \phi_w(X')\rangle_F\big]
= K(X,X').
\end{multline*}
In other words, we have $K(\cdot,X')\in\mathcal{F}$ for every $X' \in \mathcal{X}$, and
\(\langle \tilde f(\cdot;v),K(\cdot,X')\rangle_{\mathcal{F}}=\tilde f(X';v)\).
Hence, $K$ has the \emph{reproducing property} on $\mathcal{F}$, which implies that the completion
\(\overline{\mathcal{F}}\) is the RKHS associated with the Transformer NTK \(K\).

We will consider the norm-constrained functions from this class. In particular, for a given \(\bar\nu=(\bar\nu_c,\bar\nu_u,\bar\nu_w) \in \mathbb{R}_+^3\), the function class of interest is defined as
\[
\mathcal{F}_{\bar\nu} := \big\{ \tilde f(\cdot; v):\ v \in \mathcal{V}_{\bar\nu} \big\},
\]
where
\begin{align*}
\mathcal{V}_{\bar\nu}
&:= \Big\{ v:\ 
\sup_{\varphi} |v_c(\varphi)| \le \bar\nu_c,\ \sup_\varphi \|v_u(\varphi)\|_2 \le \bar\nu_u,\ \\
&\hspace{3.0em} \sup_\varphi \|v_w(\varphi)\|_F \le \bar\nu_w \Big\}.
\end{align*}

\begin{remark} \label{rem:nu-bar}
Note that for any $\tilde f \in \mathcal{F}_{\bar\nu}$, we have
$\|\tilde{f}\|_{\mathcal{F}_{\bar{\nu}}} \leq \|\bar{\nu}\|_2$.
Thus, $\|\bar{\nu}\|_2$ quantifies the complexity of the function class $\mathcal{F}_{\bar\nu}$ and serves as a natural measure of the hardness of the regression problem. It will play a central role in controlling the approximation error (Lemma~\ref{lem:approximation}) and the convergence of \textsc{ProjGD} (Theorem~\ref{thm:projgd}).
\end{remark}

For any $\tilde{f}\in \mathcal{F}_{\bar{\nu}}$, the transportation mapping $v$ associated with $\tilde{f} $ ``transports'' the initial weights to a point at which the model $f$ more closely approximates $\tilde f$. Indeed, for \(i\in[m]\), define the following parameters
\begin{equation} \label{eq:good-params}
\begin{aligned}
\tilde c_i &:= c_i^{(0)} \;+\; m^{-1/2}\, v_c(\varphi_i^{(0)}),\\
\tilde U_i &:= U_i^{(0)} \;+\; c_i^{(0)} m^{-1/2}\, v_u(\varphi_i^{(0)}),\\
\tilde W_i &:= W_i^{(0)} \;+\; c_i^{(0)} m^{-1/2}\, v_w(\varphi_i^{(0)}),
\end{aligned}    
\end{equation}
and \(\tilde\varphi := (\tilde c_i,\tilde U_i,\operatorname{vec}(\tilde W_i))_{i=1}^m\). Then we have the following approximation error bound.

\begin{lemma}[Approximation error] \label{lem:approximation}
Let $\tilde{f}(\cdot; v) \in \mathcal{F}_{\bar{\nu}}$ and $\tilde{\varphi}$ be as in \eqref{eq:good-params}. Define $\varepsilon_{\mathrm{app}}:=\sup_{j\in[n]}\ \big|f_{\mathrm{lin}}(X^{(j)};\tilde\varphi) - \tilde f(X^{(j)};v)\big|$. Then for any $\delta \in (0,1)$, we have
\[
\varepsilon_{\mathrm{app}}
\leq 4(\sigma_0\bar{\nu}_c + \sigma_1\bar{\nu}_u+\sigma_1\bar{\nu}_w)\sqrt{\tfrac{\log(2n/\delta)}{m}} =: B_{\mathrm{app}, m}(\delta),
\]
with probability at least \(1-\delta\). We denote this event with $\mathcal E_{\mathrm{app}}(\delta)$.
\end{lemma}

\begin{remark}
In contrast to the RNN counterpart \citep[Proposition~3.5]{cayci2024rnngd}, where gradient norms may grow with the sequence length, the softmax-covariance structure of attention yields an approximation bound independent of $T$.
\end{remark}

\subsection{Main Results} \label{sec:main-results}
We now present the main theorem, which formally establishes the result stated in Section~\ref{sec:main-results-informal}.

\newcommand{\poly}{\mathrm{poly}}
\newcommand{\Lone}{L_{1,m}(\delta')}
\newcommand{\Ltwo}{L_{2,m}(\delta')}

\begin{theorem} [\textsc{ProjGD} Convergence] \label{thm:projgd}
Assume $f^\star \in \mathcal{F}_{\bar\nu}$ and the projection radius satisfies $\rho \ge \bar\nu$.
Run \textsc{ProjGD} onto $\Omega_\rho$ for $\tau$ steps with the step size $\eta=1/\sqrt{\tau}$.
Then, conditioned on the event $\mathcal{E}_{U,m}(\delta') \cap \mathcal E_{\mathrm{app}}(\delta)$, we have
\begin{multline*}
\min_{s<\tau}\ \widehat{\mathcal{L}}_n\!\big(\varphi^{(s)}\big)\; \lesssim_{\bar{\sigma}, \bar{\nu}, \rho}
\! \;
\frac{\Lone^{4}}{\sqrt{\tau}} \\
+\Lone(B_{\mathrm{app}, m}(\delta) + B_{\mathrm{lin}, m}(\delta') + \varepsilon_{\mathrm{CoF}}),
\end{multline*}

and for the average iterate $\bar\varphi^{(\tau)}:=\frac{1}{\tau}\sum_{s=0}^{\tau-1}\varphi^{(s)}$,
\begin{multline*}
\widehat{\mathcal L}_n\!\big(\bar\varphi^{(\tau)}\big) \; \lesssim_{\bar{\sigma}, \bar{\nu}, \rho}
\! \; \frac{\Lone^{4}}{\sqrt{\tau}} +\Lone\big(B_{\mathrm{app}, m}(\delta) \\
+ B_{\mathrm{lin}, m}(\delta') + \varepsilon_{\mathrm{CoF}}\big)
+B_{\mathrm{lin}, m}(\delta')^2,
\end{multline*}

where $B_{\mathrm{lin}, m}(\delta')$ and $B_{\mathrm{app}, m}(\delta)$ are defined as in Lemmas~\ref{lem:linearization} and \ref{lem:approximation}, and 
\[
\varepsilon_{\mathrm{CoF}} \; \lesssim_{\bar{\sigma}, \bar{\nu}, \rho} \; \frac{\Lone + \Ltwo}{\sqrt{m}}.
\]
\end{theorem}

\begin{remark}
The realizability condition $f^\star\in\mathcal{F}_{\bar\nu}$ is used only in the approximation step
(Lemma~\ref{lem:approximation}) to control how well $f^\star$ can be represented in the Transformer-NTK
RKHS. If $f^\star$ is not realizable by $\mathcal{F}_{\bar\nu}$, the optimization and linearization
arguments (Lemmas~\ref{lem:h-local-Lipschitz-smooth}--\ref{lem:linearization}) and the Lyapunov analysis in Theorem~\ref{thm:projgd} go through unchanged, and the final bound acquires the additional term
\[
\inf_{\tilde f\in\mathcal{F}_{\bar\nu}}\ \max_{j\in[n]}\big|f^\star(X^{(j)})-\tilde f(X^{(j)})\big|.
\]
We leave a detailed study of the approximation of broader target functions by the Transformer-NTK RKHS for future work.
\end{remark}

\begin{remark}
The condition $\rho\ge \bar\nu$ ensures
that the transported parameters $\tilde\varphi$ constructed from $v\in\mathcal{V}_{\bar\nu}$
(cf.\ \eqref{eq:good-params}) lie in the feasible set $\Omega_\rho$. Larger $\bar\nu$ corresponds to a richer target class (larger RKHS norm budget cf. Remark~\ref{rem:nu-bar}) and requires a larger projection radius, which in turn increases the hidden constants in
Theorem~\ref{thm:projgd}. See Appendix~\ref{ap:projgd} for explicit dependencies.
\end{remark}

We can state a similar result for the convergence of \textsc{ProjSGD} as well.

\begin{proposition} [\textsc{ProjSGD} Convergence] \label{thm:projsgd}
Running \textsc{ProjSGD} under the same setup as Theorem~\ref{thm:projgd} gives, on the event $\mathcal{E}_{U,m}(\delta') \cap \mathcal E_{\mathrm{app}}(\delta)$,
\begin{multline*}
\mathbb{E}\left[\min_{s<\tau}\widehat{\mathcal L}_n(\varphi^{(s)})\,|\,\varphi^{(0)}\right]\; \lesssim_{\bar{\sigma}, \bar{\nu}, \rho}
\! \;
\frac{\Lone^{4}}{\sqrt{\tau}} \\
+\Lone(B_{\mathrm{app}, m}(\delta) + B_{\mathrm{lin}, m}(\delta') + \varepsilon_{\mathrm{CoF}}),
\end{multline*}
and for the average iterate,
\begin{multline*}
\mathbb{E}\left[\widehat{\mathcal L}_n\!\big(\bar\varphi^{(\tau)}\big)\,|\,\varphi^{(0)}\right]
\! \; \lesssim_{\bar{\sigma}, \bar{\nu}, \rho}
\! \;
\frac{\Lone^{4}}{\sqrt{\tau}} \\
+\Lone(B_{\mathrm{app}, m}(\delta) + B_{\mathrm{lin}, m}(\delta') + \varepsilon_{\mathrm{CoF}})
+B_{\mathrm{lin}, m}(\delta')^2.
\end{multline*}
\end{proposition}

\paragraph{Comparison of results.}
Theorem~\ref{thm:projgd} gives a high-probability finite-time bound for \textsc{ProjGD}, stated for the minimum iterate and the averaged iterate. Proposition~\ref{thm:projsgd}
shows that \textsc{ProjSGD} enjoys the \emph{same} error decomposition and the \emph{same} rates in $(\tau,m,n)$, with the
guarantee stated in \emph{conditional expectation} over the sampling randomness.

\section{EXPERIMENTAL RESULTS} \label{sec:exp}
We complement the theory with two experiments. Section~\ref{sec:teacher-student} validates the scaling laws predicted by Lemma~\ref{lem:linearization}, Lemma~\ref{lem:approximation}, and Theorem~\ref{thm:projgd} in a realizable teacher--student setting aligned with $f^\star\in\mathcal{F}_{\bar\nu}$ and near-initialization assumptions. Section~\ref{sec:rnn-vs-transformer} then probes behavior beyond this regime by comparing IndRNNs and our Transformer on an autoregressive AR($L$) task trained without projection, highlighting long-term dependency behavior and the associated memory trade-off.

\subsection{Teacher--Student Scaling-Law Validation} \label{sec:teacher-student}

We assess the three components of our theory---linearization, approximation, and optimization---with controlled teacher--student experiments using the same shallow Transformer as in Section~\ref{sec:arch} with $q_X=X_T$. Throughout, inputs are i.i.d. $X_t \sim \mathcal{N}(0, I_d)$ with token-wise clipping to keep $\max_t \|X_t\|_2\le1$.

\paragraph{Teacher.}
To realize a ground-truth $f^\star\in\mathcal{F}_{\bar\nu}$ as in \eqref{eq:f_tilde_def} we define a sup-norm-bounded transportation mapping $v \in \mathcal{V}_{\bar{\nu}}$ as \(v(\varphi)=\frac{1}{R}\sum_{r=1}^R \, v^{(r)}(\varphi)\), where
\[
\begin{aligned}
v_c^{(r)}(\varphi)&=\tfrac{\nu_c}{\sigma_0}\,\phi_c(X_R^{(r)}),\quad
v_u^{(r)}(\varphi)=\tfrac{\nu_u}{\sigma_1}\,\phi_u(X_R^{(r)}),\\
v_w^{(r)}(\varphi)&=\operatorname{clip}_F\!\big(\phi_w(X_R^{(r)}),\,\nu_w\big),
\end{aligned}
\]
and the anchor samples \(X_R^{(r)}\) are drawn from the same distribution as inputs once and kept fixed. For a large $\tilde{m} \in \mathbb{N}$, we sample an i.i.d.\ pool $\{\varphi_i^{(0)}\}_{i=1}^{\tilde m}$ and generate the labels via a Monte Carlo approximation of~\eqref{eq:f_tilde_def}, i.e., 
\begin{multline*}
    \tilde{y}^{(j)} = \frac{1}{\tilde m}\sum_{i=1}^{\tilde m}\!\big[\phi_c(X^{(j)})\,v_c(\varphi_i^{(0)}) \\+\langle \phi_u(X^{(j)}),v_u(\varphi_i^{(0)})\rangle +\langle \phi_w(X^{(j)}),v_w(\varphi_i^{(0)})\rangle_F\big],
\end{multline*}
so the teacher lies in the RKHS induced by the model’s NTK, matching the realizability assumption used in the proofs. All experiments are conducted with $\tilde{m}=8192$, $d=8$, $T=16$, $n=5000$, $\bar\nu=(3,3,3)$, and $\tanh$ activation.

\paragraph{Students and metrics.}
Students are initialized with the symmetric initialization from Section~\ref{sec:init} so that $f(\cdot;\varphi^{(0)})\equiv0$. For each width $m \in \{8, 16, 32, 64, 128, 256\}$, we record:

\textbf{\textbullet\hskip 6pt Linearization error.} We select parameters on the boundary of the neighborhood $\Omega_\rho$ and compute $\sup_{j\in[n]} \big| f(X^{(j)};\varphi) - f_{\mathrm{lin}}(X^{(j)};\varphi) \big|$, which should scale like $m^{-1/2}$ per Lemma~\ref{lem:linearization}.

\textbf{\textbullet\hskip 6pt Approximation error.} We set the parameters $\tilde\varphi$ as in \eqref{eq:good-params} and measure $\sup_{j\in[n]}\ \big| f_{\mathrm{lin}}(X^{(j)};\tilde\varphi) - \tilde{y}^{(j)} \big|$, which is expected to scale like $m^{-1/2}$ from Lemma~\ref{lem:approximation}.

\textbf{\textbullet\hskip 6pt Minimum training loss.} We train a student Transformer with $\tau=4000$ steps, a constant step size $\eta=1/\sqrt{\tau}$, and projection radii $\rho=\bar\nu$. Minimum training loss $\min_{t<\tau}\widehat{\mathcal{L}}_n(\varphi^{(t)})$ versus $m$ is supposed to scale like $m^{-1/2}$ from Theorem~\ref{thm:projgd}.

\begin{figure*}[!t]
  \centering
  \begin{subfigure}[t]{0.325\textwidth}
    \centering
    \includegraphics[width=\linewidth]{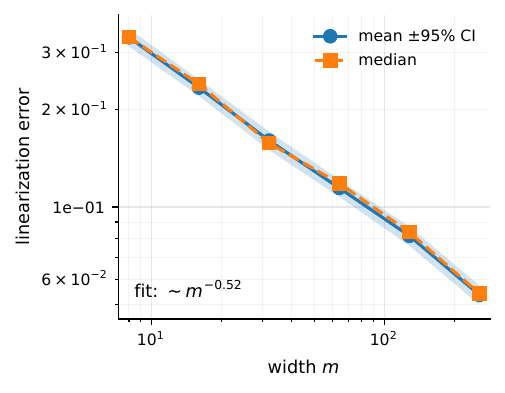}
    \caption{Linearization error}\label{fig:linearization}
  \end{subfigure}\hfill
  \begin{subfigure}[t]{0.325\textwidth}
    \centering
    \includegraphics[width=\linewidth]{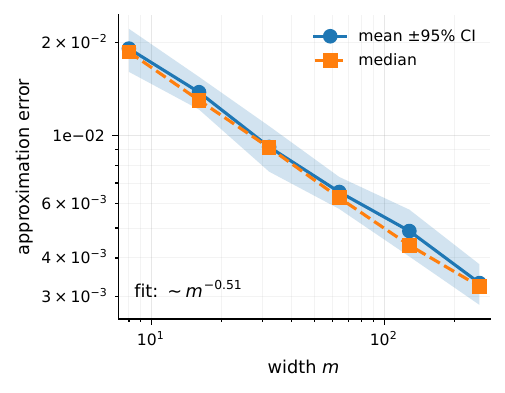}
    \caption{Approximation error}\label{fig:approximation}
  \end{subfigure}\hfill
  \begin{subfigure}[t]{0.325\textwidth}
    \centering
    \includegraphics[width=\linewidth]{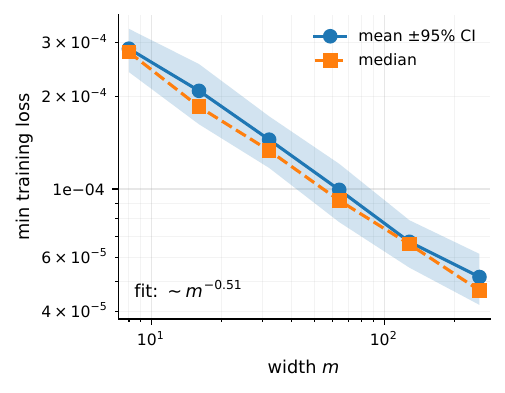}
    \caption{Min training loss}\label{fig:min-train-loss}
  \end{subfigure}
  \caption{Width scaling across three metrics (log–log). Shaded bands show mean $\pm$95\% CI across seeds. Fitted slopes are consistent with the predicted $m^{-1/2}$ decay.}
  \label{fig:teacher-student}
\end{figure*}

We average curves over $10$ seeds, show $\pm$95\% confidence interval (CI) around the mean, and fit a straight line to the log–log curves to estimate the scaling exponent. Figure~\ref{fig:teacher-student} shows that all three quantities—linearization error, approximation error, and minimum training loss—decrease monotonically with width and exhibit a clear power-law dependence on $m$. The fitted slopes are in close agreement with the predicted $m^{-1/2}$ rate. Medians closely track the mean and the CIs remain small across widths, indicating that the observed scaling is robust to random initialization.

\subsection{IndRNN vs. Transformer Architectures} \label{sec:rnn-vs-transformer}
To compare how IndRNNs~\citep{cayci2024rnngd} and our Transformer cope with long-term dependencies and training stability, we use a controlled autoregressive AR($L$) task where the lag $L$ sets the dependency horizon. Theory for diagonal RNNs predicts that, depending on a Lipschitz cutoff, gradients can grow exponentially due to repeated multiplication over a horizon of $T$ steps~\citep{cayci2024rnngd}, whereas our Transformer's bound has no explicit $T$-dependence (cf.\ Lemma~\ref{lem:h-local-Lipschitz-smooth}).

Each input sequence $X\!\in\!\mathbb{R}^{1\times T}$ is constructed as a window from a univariate process
\[
X_t=\alpha\,X_{t-L}+\varepsilon_t,\qquad \varepsilon_t\sim\mathcal{N}(0,\sigma_\varepsilon^2),
\]
and the label is $y=X_{T+1}$. We append fixed positional channels ($d_{\text{pos}}{=}8$ sinusoidal pairs) to the input so that the Transformer has positional information. We fix $n=5{,}000$, $\alpha=0.9$, $\sigma_\varepsilon^2=0.1$, and vary $L$. For the Transformer we set $T=L+1$ so that the lag falls within the context window, yielding a memory footprint $\mathcal{O}(L)$ compared to IndRNN's $\mathcal{O}(1)$. Both models use symmetric initialization and are trained with gradient descent for $\tau=2{,}000$ steps with step size $\eta=1/\sqrt{\tau}$ and no projection to observe gradient-norm dynamics. We set $m=64$ and use $\tanh$ as the activation. Results are averaged over $20$ seeds.

\paragraph{Metrics.}
We track training stability via the maximum Jacobian norm over training, computed with respect to the diagonal recurrent weights (IndRNN) or the attention weights (Transformer). Predictive performance is summarized by the minimum validation loss achieved over training.

Figure~\ref{fig:rnn-vs-trf} summarizes the comparison across lags $L$. The top panel shows that IndRNN exhibits larger and more variable norms, consistent with the potential gradient growth predicted in \citet{cayci2024rnngd}, whereas the Transformer remains stable. The bottom panel shows the best validation loss achieved over training: IndRNN performs well for small lags but degrades as $L$ increases, while the Transformer maintains roughly constant performance as $L$ grows, at the cost of an $\mathcal{O}(L)$ context window. Overall, the experiment illustrates the performance-memory trade-off between the two architectures anticipated by the theory. Although this task lies outside our realizable/projection regime, the observed stability of attention is consistent with the softmax-covariance control highlighted in Remark~\ref{rem:cov-bound}.

\begin{figure}[t]
  \centering
  \includegraphics[width=\linewidth]{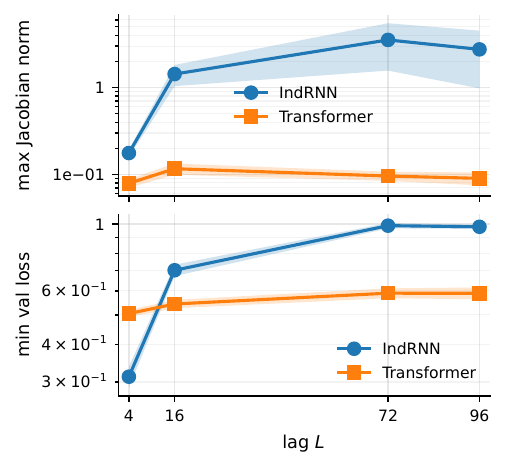}
  \caption{IndRNN vs.\ Transformer on the AR($L$) task with $T=L+1$. Top: maximum Jacobian norm over training (stability); bottom: minimum validation loss achieved over training (performance). Results are averaged over $20$ seeds.}
  \label{fig:rnn-vs-trf}
\end{figure}

\section{CONCLUSION}
We performed a finite-time NTK analysis of projected gradient methods for multi-head shallow Transformers. We established nonasymptotic convergence bounds with explicit dependencies on the training period, width, and the number of samples. To the best of our knowledge, this is the first work that establishes near-optimality of first-order methods without massive overparameterization for Transformers. Interesting future directions that can build on this work include the study of deep Transformer architectures and the extension of the setting to sequence-to-sequence learning.  

\subsubsection*{Acknowledgements}
We thank the anonymous reviewers for their thoughtful comments and helpful suggestions, which improved the clarity and presentation of the paper. This work was supported in part by the NSF grants: NSF AI Institute (AI-EDGE) 2112471, CNS-NeTS-2106679, the ONR Grant N00014-19-1-2621; and the ARO Grant W911NF-24-1-0103.

\bibliography{references}

\pagebreak
\section*{Checklist}



\begin{enumerate}

  \item For all models and algorithms presented, check if you include:
  \begin{enumerate}
    \item A clear description of the mathematical setting, assumptions, algorithm, and/or model. [Yes] \textbf{Justification:} The setup, the model, and the algorithm are described in Section~\ref{sec:setup}. The function that is assumed to generate the underlying data is described in Section~\ref{sec:transportation-maps}.
    \item An analysis of the properties and complexity (time, space, sample size) of any algorithm. [Yes] \textbf{Justification:} The analysis of the algorithms is informally made in Section~\ref{sec:main-results-informal} and formally made in Section~\ref{sec:main-results}.
    \item (Optional) Anonymized source code, with specification of all dependencies, including external libraries. [Yes] \textbf{Justification:} The source code is provided at \url{https://github.com/enesarda22/nonasymptotic-transformer}.
  \end{enumerate}

  \item For any theoretical claim, check if you include:
  \begin{enumerate}
    \item Statements of the full set of assumptions of all theoretical results. [Yes] \textbf{Justification:} Every stated theoretical result contains the assumptions if any.
    \item Complete proofs of all theoretical results. [Yes] \textbf{Justification:} Complete proofs are provided in the Supplementary Material.
    \item Clear explanations of any assumptions. [Yes] \textbf{Justification:} The only assumption we make is on the function generating the underlying data, which is explained in Section~\ref{sec:transportation-maps}.
  \end{enumerate}

  \item For all figures and tables that present empirical results, check if you include:
  \begin{enumerate}
    \item The code, data, and instructions needed to reproduce the main experimental results (either in the supplemental material or as a URL). [Yes] \textbf{Justification:} The code and instructions are provided in a Github repo in the Supplementary Material.
    \item All the training details (e.g., data splits, hyperparameters, how they were chosen). [Yes] \textbf{Justification:} Experimental details are provided in Section~\ref{sec:exp} and additional details are provided in the Supplementary Material.
    \item A clear definition of the specific measure or statistics and error bars (e.g., with respect to the random seed after running experiments multiple times). [Yes] \textbf{Justification:} Experimental details are provided in Section~\ref{sec:exp} and additional details are provided in the Supplementary Material.
    \item A description of the computing infrastructure used. (e.g., type of GPUs, internal cluster, or cloud provider). [No]
    \textbf{Justification:} Conducted experiments are not resource-heavy, so we do not describe the computing infrastructure.

  \end{enumerate}

  \item If you are using existing assets (e.g., code, data, models) or curating/releasing new assets, check if you include:
  \begin{enumerate}
    \item Citations of the creator If your work uses existing assets. [Not Applicable]
    \item The license information of the assets, if applicable. [Not Applicable]
    \item New assets either in the supplemental material or as a URL, if applicable. [Not Applicable]
    \item Information about consent from data providers/curators. [Not Applicable]
    \item Discussion of sensible content if applicable, e.g., personally identifiable information or offensive content. [Not Applicable]
  \end{enumerate}

  \item If you used crowdsourcing or conducted research with human subjects, check if you include:
  \begin{enumerate}
    \item The full text of instructions given to participants and screenshots. [Not Applicable]
    \item Descriptions of potential participant risks, with links to Institutional Review Board (IRB) approvals if applicable. [Not Applicable]
    \item The estimated hourly wage paid to participants and the total amount spent on participant compensation. [Not Applicable]
  \end{enumerate}

\end{enumerate}

\clearpage
\appendix
\thispagestyle{empty}

\setcounter{lemma}{0}
\setcounter{theorem}{0}
\setcounter{proposition}{0}

\onecolumn
\aistatstitle{Supplementary Material}

\paragraph{Organization.}
In Appendix~\ref{sec:app-missing-proofs}, we restate and prove the results in the main text, and in Appendix~\ref{sec:app-exp-details}, we give further experimental details along with the code to reproduce the experiments.

\section{PROOFS} \label{sec:app-missing-proofs}

\subsection{Useful facts}\label{sec:useful-facts}

\begin{fact}[Jacobian of softmax] \label{fact:softmax-jacob}
For \(\sigma_s:\mathbb{R}^T\to\Delta_T\) given by
\((\sigma_s(z))_t=\exp(z_t)/\sum_{s=1}^T\exp(z_s)\),
the Jacobian equals
\[
J_s(z)\;=\;\operatorname{diag}\!\big(\sigma_s(z)\big)\;-\;\sigma_s(z)\,\sigma_s(z)^\top.
\]
It is symmetric, satisfies \(J_s(z) \mathbf{1}=0\), and obeys the operator bound
\(\|J_s(z)\|_{\infty\to 1}\le 2\). Here, $\mathbf{1}$ is the all‑ones vector.
\end{fact}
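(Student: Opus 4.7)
The statement is a compilation of standard calculus facts about the softmax, so my plan is to verify each of the four claims by direct computation, treating the operator-norm bound as the only piece requiring more than one line.

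Writing $p := \sigma_s(z)$ and $S(z) := \sum_{s=1}^T \exp(z_s)$, I would first apply the quotient rule to $(\sigma_s(z))_t = e^{z_t}/S(z)$ and get
\[
\frac{\partial (\sigma_s(z))_t}{\partial z_k} \;=\; \delta_{tk}\,\frac{e^{z_t}}{S(z)} \;-\; \frac{e^{z_t}e^{z_k}}{S(z)^2} \;=\; \delta_{tk}\,p_t - p_t p_k,
\]
which is precisely the $(t,k)$ entry of $\operatorname{diag}(p) - p p^\top$, establishing the closed form. Symmetry is then immediate since both $\operatorname{diag}(p)$ and $p p^\top$ are symmetric. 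For the null-vector identity, I would observe that $\operatorname{diag}(p)\mathbf{1} = p$ and $p p^\top \mathbf{1} = p\,(p^\top \mathbf{1}) = p$ because $p \in \Delta_T$, so $J_s(z)\mathbf{1} = p - p = 0$.

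The only mildly nontrivial step is the operator bound. My plan is to use the standard estimate $\|A\|_{\infty\to 1} \le \sum_{t,k}|A_{tk}|$ (which follows from $\|Ax\|_1 \le \sum_{t,k}|A_{tk}|\|x\|_\infty$) and then compute the entry-sum explicitly:
\[
\sum_{t,k}|(J_s(z))_{tk}|
\;=\; \sum_{t} p_t(1-p_t) \;+\; \sum_{t\neq k} p_t p_k
\;=\; \bigl(1 - \textstyle\sum_t p_t^2\bigr) + \bigl(1 - \sum_t p_t^2\bigr)
\;=\; 2\bigl(1 - \|p\|_2^2\bigr) \;\le\; 2,
\]
using $\sum_t p_t = 1$ to collapse the two sums. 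The only subtlety to watch is the sign pattern (diagonal entries $p_t(1-p_t)$ are nonnegative, off-diagonal entries $-p_t p_k$ are nonpositive), which is why the absolute values combine cleanly into $2(1-\|p\|_2^2)$. I do not anticipate any genuine obstacle, since each sub-claim reduces to a one-line calculation once the closed-form Jacobian is in hand.
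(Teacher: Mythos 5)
Your proposal is correct and follows essentially the same route as the paper: a quotient-rule computation of the Jacobian, the observation that $pp^\top\mathbf{1}=p$ (the paper equivalently notes the row sums of the derivatives vanish), and an elementary bound for $\|J_s(z)\|_{\infty\to1}$. The only cosmetic difference is in the last step, where the paper applies the triangle inequality directly to $\|\operatorname{diag}(\alpha)u-\alpha(\alpha^\top u)\|_1$ while you bound the operator norm by the total entry sum, which even yields the marginally sharper constant $2(1-\|p\|_2^2)$.
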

\begin{proof}
\(\partial(\sigma_s)_i/\partial z_j=\sigma_{s,i}(\delta_{ij}-\sigma_{s,j})\) gives the formula and symmetry.
Row sums vanish since \(\sum_j \partial(\sigma_s)_i/\partial z_j=0\), hence \(J_s(z)\mathbf{1}=0\).
For the \(\infty\!\to\!1\) bound, if \(\|u\|_\infty\le 1\) then
\(\|J_s(z) u\|_1=\|\operatorname{diag}(\alpha)u-\alpha(\alpha^\top u)\|_1
\le \sum_i \alpha_i|u_i|+|\alpha^\top u|\,\|\alpha\|_1\le 1+1=2\) with \(\alpha=\sigma_s(z)\).
\end{proof}

\begin{fact}[Bound for \(a(X;W)\)] \label{fact:a-bound}
Fix $X \in \mathcal{X}$. With \(\alpha=\sigma_s(X^\top W q_X)\in\Delta_T\),
\[
a(X;W)\;=\;X \alpha\;=\;\sum_{t=1}^T \alpha_t\,X_t \quad\Rightarrow\quad
\|a(X;W)\|_2\;\le\;\sum_{t=1}^T \alpha_t\|X_t\|_2\;\le\; \sum_{t=1}^T \alpha_t \;=\;1.
\]
\end{fact}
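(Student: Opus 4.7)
The plan is to unpack the definition of $a(X;W)$ and reduce the claim to a triangle inequality applied to a convex combination of unit-bounded vectors. The two structural ingredients are (i) the fact that softmax outputs lie on the probability simplex, and (ii) the defining bound $\max_t \|X_t\|_2 \le 1$ that is built into $\mathcal{X}$.

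First, I would use the attention layer definition in \eqref{eq:our-attn} together with the abbreviation $\alpha = \sigma_s(X^\top W q_X)$ to write $a(X;W) = X\alpha$. Expanding the matrix-vector product column-by-column immediately yields $X\alpha = \sum_{t=1}^T \alpha_t X_t$, which is the first equality in the displayed chain. Since $\sigma_s$ maps into $\Delta_T$ (every coordinate is a positive exponential divided by the sum of exponentials), we automatically have $\alpha_t \ge 0$ and $\sum_t \alpha_t = 1$, so the expression is a genuine convex combination of the tokens.

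Next, I would apply the triangle inequality to the finite sum; because each $\alpha_t$ is nonnegative, it pulls outside the norm without a sign issue, giving $\|a(X;W)\|_2 \le \sum_t \alpha_t \|X_t\|_2$. The compactness of $\mathcal{X}$ then supplies $\|X_t\|_2 \le 1$ uniformly in $t$, so the sum is bounded by $\sum_t \alpha_t$, and the simplex property closes the chain at $1$.

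There is no real obstacle here; the proof is essentially a one-line computation. The only point worth emphasizing is the conceptual payoff: the final bound has no $T$-dependence because the softmax normalizes across the sequence length, so enlarging $T$ redistributes mass among the $\alpha_t$ without inflating the norm. This is precisely the structural property that, together with Fact~\ref{fact:softmax-jacob}, underlies the $T$-independent gradient norm bounds used in Lemma~\ref{lem:h-local-Lipschitz-smooth} and subsequently in Theorem~\ref{thm:projgd}.
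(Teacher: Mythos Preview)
Your proposal is correct and matches the paper's approach exactly: the paper presents this fact as a self-evident chain of inequalities (triangle inequality on a convex combination, then $\|X_t\|_2\le 1$, then $\sum_t\alpha_t=1$) without a separate proof environment. Your write-up simply unpacks that chain and adds the contextual remark about $T$-independence, which the paper makes in the main text.
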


\begin{fact}[Bound for \(M(X;W)\)] \label{fact:cov-bound}
Fix $X \in \mathcal{X}$. Let \(\alpha=\sigma_s(X^\top W q_X)\) and \(\mu=X\alpha\).
Then
\[
M(X;W) \;=\; X\big(\operatorname{diag}(\alpha)-\alpha \alpha^\top\big)X^\top
\;=\; \sum_{t=1}^T \alpha_t\,(X_t-\mu)(X_t-\mu)^\top
\;\succeq\; 0.
\]
Consequently,
\[
M \;\preceq\; \sum_{t=1}^T \alpha_t\,X_t X_t^\top
\quad\text{and}\quad
\|M\|_F
\;\le\; \mathrm{tr}(M)
\;\le\; \sum_{t=1}^T \alpha_t \|X_t\|_2^2
\;\le\; 1.
\]
\end{fact}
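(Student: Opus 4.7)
The statement is an algebraic identity plus PSD/trace bookkeeping, so the plan is simply to expand, reorganize as a weighted covariance, and then invoke monotonicity of trace on the PSD cone. I would carry this out in four short steps.

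\textbf{Step 1 (the identity).} Write $X=[X_1,\ldots,X_T]$ so that $X\operatorname{diag}(\alpha)X^\top=\sum_t \alpha_t X_t X_t^\top$ and $X\alpha\alpha^\top X^\top=(X\alpha)(X\alpha)^\top=\mu\mu^\top$, which gives $M(X;W)=\sum_t \alpha_t X_t X_t^\top-\mu\mu^\top$. Expanding $\sum_t \alpha_t (X_t-\mu)(X_t-\mu)^\top$ and using $\sum_t \alpha_t=1$ (since $\alpha\in\Delta_T$) to collapse the cross terms reproduces exactly this expression, establishing the second equality.

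\textbf{Step 2 (PSD and the $\preceq$ bound).} The weighted covariance form exhibits $M$ as a nonnegative combination of rank-one PSD matrices $(X_t-\mu)(X_t-\mu)^\top$, hence $M\succeq 0$. Subtracting the relation $M=\sum_t \alpha_t X_t X_t^\top-\mu\mu^\top$ from $\sum_t\alpha_t X_tX_t^\top$ yields $\sum_t\alpha_t X_tX_t^\top-M=\mu\mu^\top\succeq 0$, which is exactly $M\preceq \sum_t\alpha_t X_tX_t^\top$.

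\textbf{Step 3 (Frobenius via trace).} Since $M\succeq 0$, its eigenvalues $\lambda_i\ge 0$ satisfy $\|M\|_F^2=\sum_i \lambda_i^2\le\bigl(\sum_i \lambda_i\bigr)^2=\mathrm{tr}(M)^2$, giving $\|M\|_F\le\mathrm{tr}(M)$. Applying trace monotonicity on the PSD cone to Step~2 and using $\mathrm{tr}(X_tX_t^\top)=\|X_t\|_2^2$ yields $\mathrm{tr}(M)\le\sum_t\alpha_t\|X_t\|_2^2$. Finally, $X\in\mathcal{X}$ enforces $\|X_t\|_2\le 1$, and $\sum_t\alpha_t=1$ closes the chain at $1$.

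\textbf{Main obstacle.} There is no real obstacle: the only place one must be slightly careful is the Frobenius-versus-trace step, where PSD-ness is essential (the inequality $\|A\|_F\le\mathrm{tr}(A)$ fails for general symmetric matrices). Everything else is direct expansion of the softmax Jacobian $\operatorname{diag}(\alpha)-\alpha\alpha^\top$ composed with $X$ from both sides.
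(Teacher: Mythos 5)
Your proof is correct and follows exactly the route the paper intends: the paper states this Fact without a written proof, and the chain $\|M\|_F\le\mathrm{tr}(M)\le\sum_t\alpha_t\|X_t\|_2^2\le1$ displayed in the statement is precisely the weighted-covariance expansion plus PSD/trace bookkeeping you carry out. Your Step 3 correctly supplies the one point that needs PSD-ness ($\|M\|_F\le\mathrm{tr}(M)$), so nothing is missing.
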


\begin{fact}[Gaussian norm concentration]
If \(G\sim\mathcal{N}(0,I_d)\), then
\[
\mathbb{P}\!\left(\big|\|G\|_2-\mathbb{E}\|G\|_2\big|\ge t\right)\;\le\;2e^{-t^2/2}
\quad\text{for all }t>0,
\qquad
\mathbb{E}\|G\|_2 \;\le\; \sqrt{d}.
\]
In particular, for independent \(G_1,\dots,G_{m}\sim\mathcal{N}(0,I_d)\),
\[
\mathbb{P}\!\left(\max_{1\le i\le m}\|G_i\|_2
\;\le\; \sqrt{d}+\sqrt{2\log(m/\delta)}\right)\;\ge\; 1-\delta.
\]    
\end{fact}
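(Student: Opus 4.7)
The statement bundles three standard Gaussian facts, and I would prove each in turn using (i) Gaussian concentration for Lipschitz functions, (ii) Jensen's inequality, and (iii) a union bound. The key observation that drives everything is that the Euclidean norm $G\mapsto \|G\|_2$ is a $1$-Lipschitz function of its argument, since by the reverse triangle inequality $\big|\|G\|_2-\|G'\|_2\big|\le \|G-G'\|_2$ for any $G,G'\in\mathbb{R}^d$.

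\textbf{Step 1: the two-sided deviation bound.} I would invoke the Gaussian concentration inequality for Lipschitz functions (the Borell--TIS / Tsirelson--Ibragimov--Sudakov inequality; see e.g.\ Theorem 5.6 of \citet{boucheron2013concentration}, which is the reference already used in Definition~\ref{def:good-U-init}). Applied to the $1$-Lipschitz map $G\mapsto\|G\|_2$ with $G\sim\mathcal N(0,I_d)$, this gives the one-sided tail $\mathbb P(\|G\|_2-\mathbb E\|G\|_2\ge t)\le e^{-t^2/2}$ and, symmetrically, $\mathbb P(\mathbb E\|G\|_2-\|G\|_2\ge t)\le e^{-t^2/2}$; adding the two yields the claimed $2e^{-t^2/2}$.

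\textbf{Step 2: bounding the mean.} For the inequality $\mathbb E\|G\|_2\le\sqrt d$, I would simply apply Jensen's inequality to the concave function $x\mapsto\sqrt x$:
\[
\mathbb E\|G\|_2 \;=\; \mathbb E\sqrt{\|G\|_2^2} \;\le\; \sqrt{\mathbb E\|G\|_2^2} \;=\; \sqrt{\sum_{k=1}^d \mathbb E G_k^2} \;=\; \sqrt d.
\]

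\textbf{Step 3: the maximum over $m$ independent copies.} Combining Steps 1 and 2, the one-sided deviation bound implies, for any $t>0$,
\[
\mathbb P(\|G_i\|_2 \ge \sqrt d + t)
\;\le\; \mathbb P(\|G_i\|_2 - \mathbb E\|G_i\|_2 \ge t)
\;\le\; e^{-t^2/2}.
\]
Choosing $t=\sqrt{2\log(m/\delta)}$ makes the per-index probability at most $\delta/m$, and a union bound over $i\in[m]$ yields $\mathbb P\big(\max_{i}\|G_i\|_2 \ge \sqrt d + \sqrt{2\log(m/\delta)}\big)\le\delta$, which is the stated inequality. No step poses a real obstacle here; the only subtlety worth flagging is that Step 1 relies on the Gaussian isoperimetric/Lipschitz concentration result as a black box rather than a hands-on computation, so the cleanest presentation is to cite \citet{boucheron2013concentration} for it and spend the bulk of the argument on the elementary Jensen and union-bound steps.
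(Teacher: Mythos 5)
Your proposal is correct and follows essentially the same route as the paper's own proof: Gaussian concentration for the $1$-Lipschitz map $u\mapsto\|u\|_2$ (citing the same reference), Jensen's inequality for $\mathbb{E}\|G\|_2\le\sqrt{d}$, and a union bound over the $m$ copies. No differences worth noting.
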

\begin{proof}
By Lipschitz Gaussian concentration \citep[Theorem~5.6]{boucheron2013concentration} applied to \(u \mapsto \|u\|_2\),
we get the tail bound; the expectation bound follows from
\(\mathbb{E}\|G\|_2\le\sqrt{\mathbb{E}\|G\|_2^2}=\sqrt{d}\).
The union bound over \(m\) copies yields the shown high‑probability bound.
\end{proof}

\subsection{Proof of Lemma~\ref{lem:grads}}

\begin{lemma}
    Fix \(i\in[m]\) and write \(a_i:=a(X;W_i)\).
    Then
    \begin{align}
    \frac{\partial f(X;\varphi)}{\partial c_i}
    &= m^{-1/2}\, h(X;\theta_i),\\
    \nabla_{U_i} f(X;\varphi)
    &= m^{-1/2}\, c_i\, \sigma'\!\big(U_i^\top a_i\big)\, a_i,\\
    \nabla_{W_i} f(X;\varphi)
    &= m^{-1/2}\, c_i\, \sigma'\!\big(U_i^\top a_i\big)\, \big(M_i U_i\big)\, q_X^\top, 
    \end{align}
    where
    \begin{equation}
    M_i \;:=\; M(X;W_i) \;=\; X\, J_s\!\big(X^\top W_i q_X\big)\, X^\top,
    \end{equation}
    and
       $ J_s(z) \;=\; \operatorname{diag}\!\big(\sigma_s(z)\big) - \sigma_s(z)\sigma_s(z)^\top$
    is the Jacobian matrix of softmax.
\end{lemma}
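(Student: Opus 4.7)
The plan is to reduce each gradient to a straightforward chain-rule calculation, where the only subtle piece is pushing the derivative through the softmax inside $a(X;W_i)$. Since $f(X;\varphi)=m^{-1/2}\sum_j c_j\,h(X;\theta_j)$ is linear in each $c_i$ and only the $i$-th summand depends on $\theta_i=(U_i,\operatorname{vec}(W_i))$, I can treat the three gradients one coordinate-block at a time.

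For $\partial f/\partial c_i$ the result is immediate from differentiating the linear combination. For $\nabla_{U_i} f$, I would apply the chain rule to $h(X;\theta_i)=\sigma(U_i^\top a_i)$ holding $W_i$ fixed, obtaining $\sigma'(U_i^\top a_i)\,a_i$, and then multiply by the prefactor $m^{-1/2}c_i$.

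The only real work lies in $\nabla_{W_i} f$, where the $W_i$-dependence enters through both $a_i=X\sigma_s(z)$ and the inner argument $z=X^\top W_i q_X\in\mathbb{R}^T$. I would compute the derivative entrywise: write $\partial z_t/\partial W_{i,kl}=X_{kt}(q_X)_l$, then invoke Fact~1 (the softmax Jacobian) to get $\partial \sigma_s(z)_t/\partial W_{i,kl}=\sum_s (J_s(z))_{ts}X_{ks}(q_X)_l$, and finally contract with $X$ and $U_i$. The double sum $\sum_{p,t,s}U_{i,p}X_{pt}(J_s(z))_{ts}X_{ks}(q_X)_l$ collapses, by symmetry of $J_s(z)$ (hence of $M_i=X J_s(z) X^\top$), to $(M_i U_i)_k (q_X)_l$. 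Recognizing the outer-product structure $(M_i U_i)\,q_X^\top$ gives the stated formula after multiplying by $m^{-1/2}c_i\sigma'(U_i^\top a_i)$ from the outer chain rule.

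The main obstacle is organizing the indices cleanly in the $W_i$ computation so that the softmax-Jacobian contribution assembles into the compact matrix $M_i$ and the query dependence factors out as $q_X^\top$; once that bookkeeping is done, the identification with \eqref{eq:grad-w} is immediate. No concentration or probabilistic argument is needed here since the statement is a deterministic identity.
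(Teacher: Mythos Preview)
Your proposal is correct and follows essentially the same chain-rule argument as the paper. The only cosmetic difference is that the paper computes the $W_i$-gradient via matrix differentials (writing $da_i=M_i\,dW_i\,q_X$ and then identifying $\nabla_{W_i}f$ through the trace/Frobenius pairing), whereas you do the equivalent bookkeeping entrywise; both routes use the softmax Jacobian and the symmetry of $M_i$ in the same way.
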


\begin{proof}
Write \(\alpha_i \coloneqq \sigma_s\!\big(X^\top W_i q_X\big)\in\Delta_T\),
\(a_i = X\alpha_i\in\mathbb{R}^d\), and \(z_i \coloneqq X^\top W_i q_X\in\mathbb{R}^T\).

\paragraph{Derivative w.r.t.\ \(c_i\).}
From \(f(X;\varphi) = \tfrac{1}{\sqrt{m}}\sum_{j=1}^m c_j h(X;\theta_j)\),
\(\partial f/\partial c_i = m^{-1/2} \, h(X;\theta_i)\).

\paragraph{Gradient w.r.t.\ \(U_i\).}
Since \(h(X;\theta_i) = \sigma(U_i^\top a_i)\),
\[
\nabla_{U_i} h(X;\theta_i) \;=\; \sigma'\!\big(U_i^\top a_i\big)\, a_i,
\qquad
\nabla_{U_i} f \;=\; m^{-1/2}\, c_i\, \nabla_{U_i} h
\;=\; m^{-1/2}\, c_i\, \sigma'\!\big(U_i^\top a_i\big)\, a_i.
\]

\paragraph{Gradient w.r.t.\ \(W_i\).}
We compute via differentials. First, by the chain rule for softmax,
\[
d\alpha_i \;=\; J_s(z_i)\, dz_i,
\qquad
dz_i \;=\; X^\top\, dW_i\, q_X,
\]
and hence
\[
da_i \;=\; X\, d\alpha_i \;=\; X\, J_s(z_i)\, X^\top\, dW_i\, q_X \;=\; M_i\, dW_i\, q_X.
\]
Next,
\[
dh(X;\theta_i) \;=\; \sigma'\!\big(U_i^\top a_i\big)\, U_i^\top\, da_i
\;=\; \sigma'\!\big(U_i^\top a_i\big)\, U_i^\top\, M_i\, dW_i\, q_X.
\]
Therefore
\[
df \;=\; m^{-1/2}\, c_i\, dh
\;=\; m^{-1/2}\, c_i\, \sigma'\!\big(U_i^\top a_i\big)\, U_i^\top M_i\, dW_i\, q_X.
\]
Using \(\operatorname{tr}(A^\top B)=\langle A,B\rangle_F\) and cyclicity of trace,
\[
U_i^\top M_i\, dW_i\, q_X
\;=\; \operatorname{tr}\!\big(q_X\, U_i^\top M_i\, dW_i\big)
\;=\; \operatorname{tr}\!\big((M_i U_i q_X^\top)^\top dW_i\big),
\]
where we used that \(M_i\) is symmetric. Identifying the coefficient of
\(dW_i\) in the Frobenius inner product yields
\[
\nabla_{W_i} f(X;\varphi)
\;=\; m^{-1/2}\, c_i\, \sigma'\!\big(U_i^\top a_i\big)\, (M_i U_i)\, q_X^\top,
\]
as claimed.
\end{proof}
\subsection{Proof of Lemma~\ref{lem:h-local-Lipschitz-smooth}}
\begin{lemma}
    On $\mathcal E_{U,m}(\delta')$, the mapping $\theta_i \mapsto h(X;\theta_i)$ is $L_{1,m}(\delta')$-Lipschitz and $L_{2,m}(\delta')$-smooth in $\Omega_\rho$, where
    \[
        L_{1,m}(\delta') := \sigma_1\sqrt{1 + B_{U,m}(\delta')^2},
    \]
    and
    \[
        L_{2,m}(\delta') := \sigma_2(1 + B_{U,m}(\delta')^2) + 8\sigma_1\sqrt{1 + B_{U,m}(\delta')^2},
    \]
where $B_{U,m}(\delta')$ is defined as in \eqref{eq:U-norm-bound}.
\end{lemma}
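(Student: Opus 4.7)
Write $h(X;\theta_i) = \sigma(g(\theta_i))$ with $g(\theta_i) := U_i^\top a(X;W_i)$. The chain rule gives
\[
\nabla_{\theta_i} h = \sigma'(g)\,\nabla g, \qquad \nabla_{\theta_i}^2 h = \sigma''(g)\,\nabla g\,(\nabla g)^\top + \sigma'(g)\,\nabla^2 g,
\]
so Lipschitzness and smoothness of $h$ reduce to uniform upper bounds on $\|\nabla g\|$ and $\|\nabla^2 g\|_{\mathrm{op}}$: it is enough to take $L_{1,m} = \sigma_1 L^{(1)}$ and $L_{2,m} = \sigma_2 (L^{(1)})^2 + \sigma_1 L^{(2)}$ whenever $\|\nabla g\|\le L^{(1)}$ and $\|\nabla^2 g\|_{\mathrm{op}}\le L^{(2)}$. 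This cleanly separates the scalar nonlinearity $\sigma$ from the attention block.

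\textbf{First derivatives.} The gradient blocks are $\nabla_{U_i} g = a_i$ and $\nabla_{W_i} g = (M_i U_i)\,q_X^\top$ (the same computation as in Lemma~\ref{lem:grads}, without the $\sigma'$ factor). The Facts in Appendix~\ref{sec:useful-facts} give $\|a_i\|_2\le 1$ and $\|M_i\|_F\le 1$, hence $\|M_i\|_{\mathrm{op}}\le 1$; and $\mathcal{E}_{U,m}(\delta')$ together with Remark~\ref{rem:U-norm-bound} yields $\|U_i\|_2\le B_{U,m}(\delta')$ uniformly on $\Omega_\rho$. Consequently
\[
\|\nabla g\|^2 = \|a_i\|_2^2 + \|(M_i U_i)\,q_X^\top\|_F^2 \le 1 + B_{U,m}(\delta')^2,
\]
so $L^{(1)} = \sqrt{1+B_{U,m}(\delta')^2}$ and the announced $L_{1,m}(\delta')$ follows.

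\textbf{Second derivatives.} This is the crux. The Hessian of $g$ has three nontrivial blocks: $\partial^2_{U_i U_i} g = 0$ by linearity in $U_i$; the cross block, obtained by differentiating $a_i = X\sigma_s(X^\top W_i q_X)$ in $W_i$, acts on an increment $dW_i$ via $M_i(dW_i)q_X$ and so has operator norm at most $\|M_i\|_{\mathrm{op}}\|q_X\|_2\le 1$; and the $W_i W_i$ block, which requires differentiating $M_i = X J_s(X^\top W_i q_X) X^\top$ in $W_i$. For the last piece I would use the explicit formula $J_s(z) = \operatorname{diag}(\sigma_s(z))-\sigma_s(z)\sigma_s(z)^\top$ together with the increment $d(X^\top W_i q_X) = X^\top(dW_i)q_X$ to expand $dM_i$ as a softmax-weighted sum of token outer products. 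The contractivity $\|J_s\|_{\infty\to 1}\le 2$ from Fact~\ref{fact:softmax-jacob}, combined with $\|X_t\|_2\le 1$, $\|q_X\|_2\le 1$, and $\|U_i\|_2\le B_{U,m}(\delta')$, keeps every term uniformly bounded in $T$. Book-keeping across the blocks (each contributing a multiple of $1$ or of $B_{U,m}(\delta')$) yields $\|\nabla^2 g\|_{\mathrm{op}}\le 8\sqrt{1+B_{U,m}(\delta')^2}$, the constant $8$ absorbing the softmax-derivative factors and the block combination. Plugging into the chain-rule bound gives exactly $L_{2,m}(\delta')$.

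\textbf{Main obstacle.} The delicate step is bounding the third-order tensor $\partial_{W_i} M_i$ uniformly in the sequence length $T$: a crude triangle inequality across tokens would introduce an $O(T)$ factor. What saves us is that $M_i$ and its $W_i$-derivative are softmax-weighted averages of rank-one token structures, so each coordinate is controlled by a probability vector on $[T]$ and stays $O(1)$. This is precisely the $T$-free structural property that later drives the sequence-length-independent optimization bounds in Section~\ref{sec:main-results}, so it is reassuring that it already underlies the Hessian estimate here.
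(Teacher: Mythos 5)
Your proposal is correct and follows essentially the same route as the paper: the chain-rule reduction $h=\sigma(g)$ with $g=U_i^\top a_i$, first-derivative bounds from $\|a_i\|_2\le 1$, $\|M_i\|_F\le 1$, $\|U_i\|_2\le B_{U,m}(\delta')$, and a block-wise Hessian estimate in which the only delicate piece is $\partial_{W_i}M_i$, controlled via the softmax-weighted covariance form and the $\|J_s\|_{\infty\to1}\le 2$ contractivity. The only difference is one of detail: the paper carries out the final bookkeeping explicitly (obtaining $\|D[M_i][\Delta]\|_F\le 8\|\Delta\|_F$ from $4\times 2$, then $\sqrt{2+64\|U_i\|_2^2}\le 8\sqrt{1+\|U_i\|_2^2}$), whereas you assert the constant $8$; the ingredients you list do indeed produce it.
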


\begin{proof}
Throughout, $D[\cdot]$ denotes the Fr\'echet derivative. For a scalar map $g:\mathbb{R}^{d\times d} \to \mathbb{R}$ the Fréchet derivative is $Dg(W)[\Delta]=\left.\frac{d}{dt}g(W+t\Delta)\right|_{t=0}$.
By the Riesz representation theorem, there is a unique $\nabla_W g(W)$ with 
$Dg(W)[\Delta]=\langle\nabla_W g(W),\Delta\rangle_F$, hence
\(
\|\nabla_W g(W)\|_F=\sup_{\|\Delta\|_F=1}|Dg(W)[\Delta]|.
\)
For a matrix-valued operator $G$, we measure $DG(W)$ by the induced norm 
$\|DG(W)\|_{F\to F}:=\sup_{\|\Delta\|_F=1}\|DG(W)[\Delta]\|_F$.

\medskip
Fix \(i\in[m]\) and write
\[
\alpha_i \;=\; \sigma_s\!\big(X^\top W_i q_X\big)\in\Delta_T, \qquad
a_i \;=\; a(X;W_i) \;=\; X \alpha_i,\qquad
z_i \;=\; U_i^\top a_i,
\]
and
\[
M_i \;=\; X\, J_s\!\big(X^\top W_i q_X\big)\, X^\top,
\]
where $J_s$ is the softmax Jacobian.

\paragraph{(a) Local Lipschitz continuity.}
From Lemma~\ref{lem:grads},
\[
\nabla_{U_i} h(X;\theta_i) = \sigma'(z_i)\,a_i,
\qquad
\nabla_{W_i} h(X;\theta_i) = \sigma'(z_i)\, (M_i U_i)\, q_X^\top .
\]
Therefore
\[
\big\|\nabla_{U_i} h(X;\theta_i)\big\|_2 \le \sigma_1 \|a_i\|_2 \le \sigma_1,
\qquad
\big\|\nabla_{W_i} h(X;\theta_i)\big\|_F
\le \sigma_1 \|M_i\|_F\|U_i\|_2\|q_X\|_2 \le \sigma_1 \|U_i\|_2.
\]
Collecting the two blocks,
\[
\big\|\nabla_{\theta_i} h(X;\theta_i)\big\|_2^2
= \big\|\nabla_{U_i} h\big\|_2^2 + \big\|\nabla_{W_i} h\big\|_F^2
\;\le\; \sigma_1^2\,(1+\|U_i\|_2^2).
\]
Hence, on \(\mathcal E_{U,m}(\delta')\), the mapping \( \theta_i \mapsto h(X;\theta_i)\) is \(L_{1,m}(\delta')\)-Lipschitz in \(\Omega_\rho\) with
\(
L_{1,m}(\delta') = \sigma_1\sqrt{1+B_{U,m}(\delta')^2}.
\)

\paragraph{(b) Local smoothness.}
Write \(h_i := h(X;\theta_i) = \sigma(z_i)\).
By the chain rule,
\[
\nabla_{\theta_i} h_i = \sigma'(z_i)\,\nabla_{\theta_i} z_i,
\qquad
\nabla_{\theta_i}^2 h_i = \sigma''(z_i)\,(\nabla_{\theta_i} z_i)(\nabla_{\theta_i} z_i)^\top
+ \sigma'(z_i)\,\nabla_{\theta_i}^2 z_i,
\]
and thus
\begin{equation}\label{eq:hess-bound-template}
\big\|\nabla_{\theta_i}^2 h_i\big\|_{\mathrm{op}}
\;\le\; \sigma_2 \underbrace{\big\|\nabla_{\theta_i} z_i\big\|_2^2}_{(i)}
\;+\; \sigma_1 \underbrace{\big\|\nabla_{\theta_i}^2 z_i\big\|_{\mathrm{op}}}_{(ii)}.
\end{equation}

\emph{Bounding \((i)\).}
We have \(\nabla_{U_i} z_i = a_i\) and, for any direction \(\Delta\in\mathbb{R}^{d\times d}\),
\[
D z_i[\Delta] \;=\; U_i^\top M_i\, \Delta\, q_X,
\quad\Rightarrow\quad
\|\nabla_{W_i} z_i\|_F \le \|M_i\|_{F}\, \|U_i\|_2\, \|q_X\|_2 \le \|U_i\|_2.
\]
Therefore
\begin{equation}\label{eq:gradzi-bound}
\big\|\nabla_{\theta_i} z_i\big\|_2^2
= \|\nabla_{U_i} z_i\|_2^2 + \|\nabla_{W_i} z_i\|_F^2
\;\le\; 1 + \|U_i\|_2^2.
\end{equation}

\emph{Bounding \((ii)\).}
Note \(\nabla_{U_i}^2 z_i = 0\) and the $W_i$–$U_i$ block of the Hessian equals the Jacobian of \(a_i\) w.r.t.~\(W_i\):
\[
D\!\left[\nabla_{U_i} z_i\right][\Delta] \;=\; D[a_i][\Delta] \;=\; M_i\, \Delta\, q_X,
\quad\Rightarrow\quad
\big\|\nabla^2_{U_i,\operatorname{vec}(W_i)} z_i\big\|_{\mathrm{op}} \leq \big\|\nabla^2_{U_i,\operatorname{vec}(W_i)} z_i\big\|_F \;\le\; \|M_i\|_{F}\,\|q_X\|_2 \;\le\; 1.
\]
For the \(W_i\)–\(W_i\) block, differentiate \(\nabla_{W_i} z_i = M_i U_i q_X^\top\) once more in the direction \(\Delta\):
\[
D\!\left[\nabla_{W_i} z_i\right][\Delta]
\;=\; D[M_i][\Delta]\, U_i\, q_X^\top.
\]
From the covariance form of \(M_i\),
\(
D[M_i][\Delta] = \sum_{t=1}^T (D[\alpha_i][\Delta])_t\, (X_t-\mu_i)(X_t-\mu_i)^\top,
\)
with \(\mu_i := \sum_{t=1}^T (\alpha_i)_t X_t = a_i\) and
\(D[\alpha_i][\Delta] = J_s(X^\top W_i q_X)\, X^\top \Delta\, q_X\).
Hence
\[
\|D[M_i][\Delta]\|_F \;\le\; \sum_{t=1}^T |(D \alpha_i[\Delta])_t|\, \|X_t-\mu_i\|_2^2
\;\le\; 4\,\|D \alpha_i[\Delta]\|_1,
\]
and using \(\| \operatorname{diag}(p)-pp^\top \|_{\,\infty\to 1}\le 2\) (Fact~\ref{fact:softmax-jacob}),
\[
\|D \alpha_i[\Delta]\|_1
= \big\|(\operatorname{diag}(\alpha_i)-\alpha_i \alpha_i^\top)\, X^\top \Delta\, q_X\big\|_1
\;\le\; 2\,\|X^\top \Delta\, q_X\|_\infty
\;\le\; 2\,\|\Delta\|_F,
\]
where we used \(|X_t^\top \Delta q_X|\le \|X_t\|_2\,\|\Delta\|_F\,\|q_X\|_2\le \|\Delta\|_F\).
Thus \(\|D[M_i][\Delta]\|_F \le 8 \|\Delta\|_F\), and therefore
\[
\|D\!\left[\nabla_{W_i} z_i\right][\Delta]\|_F
\leq \|D[M_i][\Delta]\|_F \|U_i\|_2 \|q_X\|_2
\leq 8 \|\Delta\|_F \|U_i\|_2,
\]
which implies 
\[
\big\|\nabla^2_{\operatorname{vec}(W_i)} z_i\big\|_{\mathrm{op}} \;\le\; 8\,\|U_i\|_2.
\]
Let $B_i := \nabla^2_{U_i,\operatorname{vec}(W_i)} z_i$ and $C_i := \nabla^2_{\operatorname{vec}(W_i)} z_i.$
Then
\[
\nabla_{\theta_i}^2 z_i
=
\begin{pmatrix}
0 & B_i^\top\\
B_i & C_i
\end{pmatrix}.
\]
For any \((u,v)\) with \(\|u\|_2^2+\|v\|_2^2=1\),
\[
\left\|
\begin{pmatrix}
0 & B_i^\top\\
B_i & C_i
\end{pmatrix}
\binom{u}{v}
\right\|_2^2
\le
\|B_i\|_{\mathrm{op}}^2 \|v\|_2^2
+
\big(\|B_i\|_{\mathrm{op}}\|u\|_2+\|C_i\|_{\mathrm{op}}\|v\|_2\big)^2
\le
2\|B_i\|_{\mathrm{op}}^2+\|C_i\|_{\mathrm{op}}^2.
\]
Therefore
\begin{equation}\label{eq:hesszi-bound}
\big\|\nabla_{\theta_i}^2 z_i\big\|_{\mathrm{op}}
\;\le\;
\sqrt{2\big\|\nabla^2_{U_i,\operatorname{vec}(W_i)} z_i\big\|_{\mathrm{op}}^2
      + \big\|\nabla^2_{\operatorname{vec}(W_i)} z_i\big\|_{\mathrm{op}}^2}
\;\le\; \sqrt{2 + 64\,\|U_i\|_2^2}
\;\le\; 8\sqrt{1+\|U_i\|_2^2}.
\end{equation}

Finally, plugging \eqref{eq:gradzi-bound} and \eqref{eq:hesszi-bound} into \eqref{eq:hess-bound-template} yields
\[
\big\|\nabla_{\theta_i}^2 h(X;\theta_i)\big\|_{\mathrm{op}}
\;\le\; \sigma_2\,(1+\|U_i\|_2^2) + 8\sigma_1\,\sqrt{1+\|U_i\|_2^2}.
\]
Therefore, on \(\mathcal E_{U,m}(\delta')\) and for all \(U_i\in\mathcal{U}_{\rho_u,i}\),
\[
\big\|\nabla_{\theta_i} h(X;\theta_i)\big\|_2 \le \sigma_1\sqrt{1+B_{U,m}(\delta')^2}
\quad\text{and}\quad
\big\|\nabla_{\theta_i}^2 h(X;\theta_i)\big\|_{\mathrm{op}}
\le \sigma_2\!\big(1+B_{U,m}(\delta')^2\big)
   + 8\sigma_1\sqrt{1+B_{U,m}(\delta')^2}.
\]
\end{proof}

\subsection{Proof of Lemma~\ref{lem:linearization}}
\begin{lemma}
Assume \(\varphi \in \Omega_\rho\) and define $\varepsilon_{\mathrm{lin}}:=\sup_{j\in[n]} \big| f(X^{(j)};\varphi) - f_{\mathrm{lin}}(X^{(j)};\varphi) \big|$. Conditioned on the event $\mathcal{E}_{U,m}(\delta')$, we have
\begin{equation*}
\varepsilon_{\mathrm{lin}}
\leq \tfrac{1}{\sqrt{m}}\Bigl(L_{1,m}(\delta')\rho_c\sqrt{\rho_w^2+\rho_u^2}
+ L_{2,m}(\delta')(\rho_w^2 + \rho_u^2)\Bigr),
\end{equation*}
where the upper bound is denoted as $B_{\mathrm{lin},m}(\delta').$
\end{lemma}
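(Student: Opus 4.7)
The plan is to exploit the symmetric initialization, expand $f(X;\varphi) - f_{\mathrm{lin}}(X;\varphi)$ head by head, and show that after cancellation only a cross term in $c_i$ and a per-head second-order Taylor remainder in $\theta_i$ survive, both of which can then be controlled via Lemma~\ref{lem:h-local-Lipschitz-smooth}. Since $f(X;\varphi^{(0)})=0$ on the symmetric initialization, I would start from
\[
f(X;\varphi)-f_{\mathrm{lin}}(X;\varphi)
= \frac{1}{\sqrt m}\sum_{i=1}^m\Bigl[c_i h(X;\theta_i) - c_i^{(0)} h(X;\theta_i^{(0)}) - (c_i-c_i^{(0)}) h(X;\theta_i^{(0)}) - c_i^{(0)}\bigl\langle\nabla_{\theta_i} h(X;\theta_i^{(0)}),\,\theta_i-\theta_i^{(0)}\bigr\rangle\Bigr],
\]
using the gradient identities from Lemma~\ref{lem:grads} to identify the $c_i$, $U_i$, and $W_i$ linear blocks in $f_{\mathrm{lin}}$.

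Next I would perform a simple algebraic rearrangement of the first three terms, writing $c_i = c_i^{(0)} + (c_i - c_i^{(0)})$, to obtain the clean two-term decomposition
\[
f(X;\varphi)-f_{\mathrm{lin}}(X;\varphi)
= \frac{1}{\sqrt m}\sum_{i=1}^m(c_i-c_i^{(0)})\bigl(h(X;\theta_i)-h(X;\theta_i^{(0)})\bigr)
+ \frac{1}{\sqrt m}\sum_{i=1}^m c_i^{(0)}\,R_i(X),
\]
where $R_i(X) := h(X;\theta_i) - h(X;\theta_i^{(0)}) - \langle\nabla_{\theta_i} h(X;\theta_i^{(0)}), \theta_i-\theta_i^{(0)}\rangle$ is the first-order Taylor remainder of the map $\theta_i\mapsto h(X;\theta_i)$ at $\theta_i^{(0)}$. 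This is the crucial reduction: the cross-term will absorb the Lipschitz constant and the remainder will absorb the smoothness constant.

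Then, conditioning on $\mathcal E_{U,m}(\delta')$ and restricting to $\varphi\in\Omega_\rho$, I would apply Lemma~\ref{lem:h-local-Lipschitz-smooth} to bound $|h(X;\theta_i)-h(X;\theta_i^{(0)})|\le L_{1,m}(\delta')\|\theta_i-\theta_i^{(0)}\|_2$ and $|R_i(X)|\le \tfrac12 L_{2,m}(\delta')\|\theta_i-\theta_i^{(0)}\|_2^2$, and use the per-head radii from Definition~\ref{def:parameter-set}, namely $|c_i-c_i^{(0)}|\le \rho_c/\sqrt m$ and $\|\theta_i-\theta_i^{(0)}\|_2\le\sqrt{\rho_u^2+\rho_w^2}/\sqrt m$, together with $|c_i^{(0)}|=1$. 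Summing $m$ identical per-head contributions and multiplying by $m^{-1/2}$ produces exactly the stated rate: the cross term gives $m^{-1/2}L_{1,m}(\delta')\rho_c\sqrt{\rho_u^2+\rho_w^2}$ and the remainder contributes $m^{-1/2}L_{2,m}(\delta')(\rho_u^2+\rho_w^2)$ (absorbing the $1/2$). Taking the supremum over $j\in[n]$ is free because every bound is uniform in $X\in\mathcal X$.

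There is no real obstacle in this argument; the main subtlety is just making sure the linearization terms from $f_{\mathrm{lin}}$ line up precisely with the head-wise expansion so that the cross term is isolated cleanly, which is exactly where the symmetric initialization (giving $c_i^{(0)}=\pm1$ and $f(X;\varphi^{(0)})=0$) pays off. The bound is deterministic once we are on the event $\mathcal E_{U,m}(\delta')$, since that event is used only to activate the uniform Lipschitz and smoothness constants $L_{1,m}(\delta')$, $L_{2,m}(\delta')$ on $\Omega_\rho$.
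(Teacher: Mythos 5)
Your proposal is correct and follows essentially the same route as the paper's proof: the same head-wise decomposition into a cross term $(c_i-c_i^{(0)})(h(X;\theta_i)-h(X;\theta_i^{(0)}))$ plus $c_i^{(0)}$ times the first-order Taylor remainder, bounded respectively by the Lipschitz and smoothness constants from Lemma~\ref{lem:h-local-Lipschitz-smooth} together with the radii of $\Omega_\rho$. The paper likewise absorbs the factor $\tfrac12$ from the remainder into the stated constant, so the two arguments coincide.
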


\begin{proof}
Fix any $j\in[n]$ and write $X=X^{(j)}$.
For $i\in[m]$ set $\delta c_i := c_i-c_i^{(0)}$, $\delta U_i := U_i-U_i^{(0)}$, $\delta W_i := W_i-W_i^{(0)}$, and
\[
\delta \theta_i \;:=\; (\delta U_i,\operatorname{vec}(\delta W_i)), 
\qquad 
\|\delta\theta_i\|_2^2 \;=\; \|\delta U_i\|_2^2 + \|\delta W_i\|_F^2 .
\]
We can write $f_{\mathrm{lin}}(X;\varphi)$ as
\begin{align*}
f_{\mathrm{lin}}(X;\varphi)
&= f\big(X;\varphi^{(0)}\big)
   \;+\; \sum_{i=1}^m \Big\langle \nabla_{c_i} f\big(X;\varphi^{(0)}\big),\, \delta c_i \Big\rangle
   \;+\; \sum_{i=1}^m \Big\langle \nabla_{\theta_i} f\big(X;\varphi^{(0)}\big),\, \delta\theta_i \Big\rangle\\
&= f\big(X;\varphi^{(0)}\big)
   \;+\; \frac{1}{\sqrt{m}} \sum_{i=1}^m
   \Big[\, h\!\big(X;\theta_i^{(0)}\big)\, \delta c_i
         + c_i^{(0)} \,\big\langle \nabla_{\theta_i} h\!\big(X;\theta_i^{(0)}\big),\, \delta\theta_i \big\rangle \Big].
\end{align*}
A direct expansion gives the per-sample linearization error
\begin{equation}\label{eq:elin-decomp}
\begin{aligned}
e_{\mathrm{lin}}(X)
&:= f(X;\varphi)-f_{\mathrm{lin}}(X;\varphi)\\
&= \frac{1}{\sqrt{m}} \sum_{i=1}^m \delta c_i\,\Big( h(X;\theta_i)-h\!\big(X;\theta_i^{(0)}\big) \Big)\\
&\quad + \frac{1}{\sqrt{m}} \sum_{i=1}^m c_i^{(0)}
\Big( h(X;\theta_i)-h\!\big(X;\theta_i^{(0)}\big)
      - \big\langle \nabla_{\theta_i} h\!\big(X;\theta_i^{(0)}\big),\, \delta\theta_i \big\rangle \Big).
\end{aligned}
\end{equation}
where we use the fact that $f(X;\varphi^{(0)})=0$. Since $\varphi\in\Omega_\rho$, we have
\[
|\delta c_i|\le \frac{\rho_c}{\sqrt{m}},
\qquad
\|\delta\theta_i\|_2 \le \frac{ \sqrt{\rho_u^2+\rho_w^2} }{\sqrt{m}},
\qquad
\|\delta\theta_i\|_2^2 \le \frac{ \rho_u^2+\rho_w^2 }{m}.
\]
Using Lemma~\ref{lem:h-local-Lipschitz-smooth} and the fact that $|c_i^{(0)}|= 1$ at initialization, on the event $\mathcal{E}_{U,m}(\delta')$ we obtain
\begin{align*}
|e_{\mathrm{lin}}(X)|
&\le \frac{1}{\sqrt{m}}\sum_{i=1}^m
      \frac{\rho_c}{\sqrt{m}}\cdot L_{1,m}(\delta')\cdot \frac{\sqrt{\rho_u^2+\rho_w^2}}{\sqrt{m}}
   \;+\; \frac{1}{\sqrt{m}}\sum_{i=1}^m
      1\cdot \frac{L_{2,m}(\delta')}{2}\cdot \frac{\rho_u^2+\rho_w^2}{m}\\
&= \frac{1}{\sqrt{m}}\, L_{1,m}(\delta')\, \rho_c\, \sqrt{\rho_u^2+\rho_w^2}
   \;+\; \frac{1}{\sqrt{m}}\, \frac{L_{2,m}(\delta')}{2}\, (\rho_u^2+\rho_w^2)\\
&\le \frac{1}{\sqrt{m}}
\Big( L_{1,m}(\delta')\, \rho_c\, \sqrt{\rho_u^2+\rho_w^2}
     + L_{2,m}(\delta')\, (\rho_u^2+\rho_w^2) \Big).
\end{align*}
This bound holds for every $X=X^{(j)}$, hence after taking a supremum over $j\in[n]$ we get
\[
\varepsilon_{\mathrm{lin}} \;=\; \sup_{j\in[n]} |e_{\mathrm{lin}}(X^{(j)})|
\;\le\; \frac{1}{\sqrt{m}}
\Big( L_{1,m}(\delta')\, \rho_c\, \sqrt{\rho_u^2+\rho_w^2}
     + L_{2,m}(\delta')\, (\rho_u^2+\rho_w^2) \Big)
\;=: B_{\mathrm{lin},m}(\delta') .
\]
\end{proof}

\subsection{Proof of Lemma~\ref{lem:ntk}}

\begin{lemma}
Let $a=a(X;W)$, $a'=a(X';W)$, $M=X\,J_s(X^\top W q_X)\,X^\top$, and
$M'=X'\,J_s(X'^\top W q_{X'})\,X'^\top$. Then, with expectation taken over the random
initialization $(c,U,\operatorname{vec}(W)) \sim \varphi^{(0)}_1$,
\begin{align*}
K_c(X,X') &= \mathbb{E}\!\left[\,\sigma(U^\top a)\,\sigma(U^\top a')\,\right],\\
K_u(X,X') &= \mathbb{E}\!\left[\,\sigma'(U^\top a)\,\sigma'(U^\top a')\,\langle a,a'\rangle\,\right],\\
K_w(X,X') &= \langle q_X, q_{X'}\rangle
\mathbb{E}\!\left[\sigma'(U^\top a)\sigma'(U^\top a')U^\top M M' U\right]
\end{align*}
and the NTK decomposes as $K=K_c+K_u+K_w$.
\end{lemma}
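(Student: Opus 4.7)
The plan is to start from the definition of the NTK and exploit the block-separability of $\nabla_\varphi f(X;\varphi^{(0)})$ over heads and parameter groups. Because the parameterization writes $\varphi=(\varphi_1,\ldots,\varphi_m)$ with $\varphi_i=(c_i,U_i,\mathrm{vec}(W_i))$ and the head-wise outputs are summed, we have
\[
\bigl\langle \nabla_\varphi f(X;\varphi^{(0)}),\,\nabla_\varphi f(X';\varphi^{(0)})\bigr\rangle
\;=\;\sum_{i=1}^m \Bigl[ \tfrac{\partial f}{\partial c_i}\,\tfrac{\partial f}{\partial c_i} + \langle \nabla_{U_i}f,\nabla_{U_i}f\rangle + \langle \nabla_{W_i}f,\nabla_{W_i}f\rangle_F \Bigr].
\]
Substituting the gradient formulas from Lemma~\ref{lem:grads} and using the $m^{-1/2}$ prefactors, each of the three terms comes with a $1/m$ scaling, so we obtain an average of $m$ per-head summands, each a function of the independent triple $(c_i^{(0)},U_i^{(0)},W_i^{(0)})$.

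The next step is to simplify each block separately. The $c$-block gives the empirical average of $\sigma(U_i^{\top}a_i)\sigma(U_i^{\top}a_i')$, which does not involve $c_i$. The $U$-block produces a factor $c_i^2=1$ almost surely under the Rademacher initialization, leaving $\sigma'(U_i^{\top}a_i)\sigma'(U_i^{\top}a_i')\langle a_i,a_i'\rangle$. For the $W$-block, I would apply the Frobenius trace identity
\[
\bigl\langle (M_iU_i)q_X^\top,\,(M_i'U_i)q_{X'}^\top\bigr\rangle_F
=\mathrm{tr}\!\bigl(q_X U_i^\top M_i^\top M_i' U_i q_{X'}^\top\bigr)
=\langle q_X,q_{X'}\rangle\,U_i^\top M_i M_i' U_i,
\]
where symmetry of $M_i$ is used. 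Together with $c_i^2=1$, this pulls $\langle q_X,q_{X'}\rangle$ out of the per-head summand, yielding exactly the integrand of $K_w$.

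The symmetric initialization does not obstruct this: head $i+m/2$ shares $(U_i^{(0)},W_i^{(0)})$ with head $i$ and has $c_{i+m/2}^{(0)}=-c_i^{(0)}$, so in all three blocks the paired summands are identical (the $c$-block is $c$-independent; the other two involve $c_i^2$). Hence each sum is $\frac{2}{m}\sum_{i=1}^{m/2}(\cdot)$, i.e. an empirical mean over $m/2$ i.i.d.\ draws from the distribution of $\varphi_1^{(0)}$. Passing to the limit $m\to\infty$ via the law of large numbers (which is applicable because the integrands are bounded by constants depending on $\sigma_0,\sigma_1,\|U\|_2$, the latter of which has all moments since $U\sim\mathcal{N}(0,I_d)$) gives the three claimed expectations.

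The main technical point is really the $W$-block trace manipulation and confirming that the Rademacher signs cancel correctly in the paired initialization so that no cross terms between heads $i$ and $i+m/2$ spoil the LLN argument; the rest is a careful bookkeeping of Lemma~\ref{lem:grads} together with the identity $c_i^2\equiv 1$.
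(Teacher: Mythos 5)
Your proposal is correct and follows essentially the same route as the paper's proof: decompose the finite-width kernel into the $c$-, $U$-, and $W$-blocks via Lemma~\ref{lem:grads}, use $c_i^2=1$ and the Frobenius identity $\langle xy^\top,x'y'^\top\rangle_F=\langle x,x'\rangle\langle y,y'\rangle$ together with symmetry of $M_i$ for the $W$-block, observe that the symmetric initialization reduces each sum to an average over $m/2$ i.i.d.\ heads, and conclude by the strong law of large numbers. No gaps.
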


\begin{proof}
Define the width-$m$ neural tangent kernel at initialization as
\[
K_m(X,X') \;\coloneqq\; \big\langle \nabla_\varphi f\!\big(X;\varphi^{(0)}\big),\, \nabla_\varphi f\!\big(X';\varphi^{(0)}\big)\big\rangle.
\]
By block structure, this decomposes as a sum over units and parameter blocks:
\begin{multline*}
K_m(X,X') \;=\; \sum_{i=1}^m\!\Biggl[
\underbrace{\frac{\partial f}{\partial c_i}(X; \varphi^{(0)})\,\frac{\partial f}{\partial c_i}(X'; \varphi^{(0)})}_{\text{$c$–block}}
\;+\; \underbrace{\big\langle \nabla_{U_i} f(X; \varphi^{(0)}),\, \nabla_{U_i} f(X'; \varphi^{(0)}) \big\rangle}_{\text{$U$–block}}
\\ \;+\; \underbrace{\big\langle \nabla_{W_i} f(X; \varphi^{(0)}),\, \nabla_{W_i} f(X'; \varphi^{(0)}) \big\rangle_F}_{\text{$W$–block}}
\Biggr].
\end{multline*}

We remove the superscript and write $(c_i^{(0)}, U_i^{(0)}, W_i^{(0)}) = (c_i, U_i, W_i)$
\paragraph{$c$–component.}
\begin{align*}
\sum_{i=1}^m \frac{\partial f}{\partial c_i}(X; \varphi^{(0)})\,\frac{\partial f}{\partial c_i}(X'; \varphi^{(0)})
&= \frac{1}{m}\sum_{i=1}^m \sigma\!\big({U_i}^\top a(X;W_i)\big)\, \sigma\!\big(U_i^\top a(X';W_i)\big) \\
&= \frac{2}{m}\sum_{i=1}^{m/2} \sigma\!\big({U_i}^\top a(X;W_i)\big)\, \sigma\!\big(U_i^\top a(X';W_i)\big).
\end{align*}
Since the pairs $(c_i,U_i,W_i)_{i=1}^{m/2}$ are i.i.d.\ at initialization, by the strong law of large numbers (SLLN),
\[
K_c(X,X') \;:=\; \lim_{m\to\infty} \frac{2}{m}\sum_{i=1}^{m/2} \sigma\!\big(U_i^\top a(X;W_i)\big)\, \sigma\!\big(U_i^\top a(X';W_i)\big)
= \mathbb{E}\!\left[\,\sigma(U^\top a)\,\sigma(U^\top a')\,\right].
\]

\paragraph{$U$–component.}
Writing $a_i=a(X;W_i)$ and $a_i'=a(X';W_i)$,
\begin{align*}
\sum_{i=1}^m \big\langle \nabla_{U_i} f(X;\varphi^{(0)}),\, \nabla_{U_i} f(X';\varphi^{(0)}) \big\rangle
&= \frac{1}{m}\sum_{i=1}^m c_i^2\, \sigma'\!\big(U_i^\top a_i\big)\,\sigma'\!\big(U_i^\top a_i'\big)\; \langle a_i,a_i'\rangle \\
&= \frac{2}{m}\sum_{i=1}^{m/2} \sigma'\!\big(U_i^\top a_i\big)\,\sigma'\!\big(U_i^\top a_i'\big)\; \langle a_i,a_i'\rangle \\
\end{align*}
where we used the fact that $c_i^2=1$ surely at initialization. Then SLLN yields
\[
K_u(X,X') \;:=\; \lim_{m\to\infty} \frac{2}{m}\sum_{i=1}^{m/2} \sigma'\!\big(U_i^\top a_i\big)\,\sigma'\!\big(U_i^\top a_i'\big)\; \langle a_i,a_i'\rangle
= \mathbb{E}\!\left[\,\sigma'(U^\top a)\,\sigma'(U^\top a')\,\langle a,a'\rangle\,\right].
\]

\paragraph{$W$–component.}
Writing $M_i=X\,J_s(X^\top W_i q_X)\,X^\top$ and
$M_i'=X'\,J_s(X'^\top W_i q_{X'})\,X'^\top$, using the Frobenius product identity $\langle xy^\top, x'y'^\top\rangle_F=\langle x,x'\rangle\,\langle y,y'\rangle$ and that $M_i$ and $M_i'$ are symmetric,
\begin{align*}
\sum_{i=1}^m \big\langle \nabla_{W_i} f(X),\, \nabla_{W_i} f(X') \big\rangle_F
&= \frac{1}{m}\sum_{i=1}^m \sigma'\!\big(U_i^\top a_i\big)\sigma'\!\big(U_i^\top a_i'\big)
   \big\langle (M_i U_i) q_X^\top,\; (M_i' U_i) q_{X'}^\top \big\rangle_F \\
&= \frac{2}{m}\sum_{i=1}^{m/2} \sigma'\!\big(U_i^\top a_i\big)\sigma'\!\big(U_i^\top a_i'\big)
   \;\langle q_X,q_{X'}\rangle\; U_i^\top M_i M_i' U_i.
\end{align*}
Again using SLLN,
\begin{align*}
K_w(X,X') &:= \lim_{m\to\infty} \frac{2}{m}\sum_{i=1}^{m/2} \sigma'\!\big(U_i^\top a_i\big)\sigma'\!\big(U_i^\top a_i'\big)\,
   \langle q_X,q_{X'}\rangle\, U_i^\top M_i M_i' U_i \\
&= \langle q_X,q_{X'}\rangle\,\mathbb{E}\!\left[ \sigma'(U^\top a)\sigma'(U^\top a')\, U^\top M M' U \right].
\end{align*}

Putting the three blocks together and taking $m\to\infty$,
\[
K(X,X') \;=\; \lim_{m\to\infty} K_m(X,X') \;=\; K_c(X,X') + K_u(X,X') + K_w(X,X').
\]
\end{proof}

\subsection{Proof of Lemma~\ref{lem:approximation}}

\begin{lemma}
Let $\tilde{f}(\cdot; v) \in \mathcal{F}_{\bar{\nu}}$ and $\tilde{\varphi}$ be as in \eqref{eq:good-params}. Define 
\[\varepsilon_{\mathrm{app}}:=\sup_{j\in[n]}\ \big|f_{\mathrm{lin}}(X^{(j)};\tilde\varphi) - \tilde f(X^{(j)};v)\big|.\] 
Then for any $\delta \in (0,1)$, we have
\[
\varepsilon_{\mathrm{app}}
\;\leq\; 4(\sigma_0\bar{\nu}_c + \sigma_1\bar{\nu}_u+\sigma_1\bar{\nu}_w)\sqrt{\tfrac{\log(2n/\delta)}{m}} =: B_{\mathrm{app}, m}(\delta),
\]
with probability at least \(1-\delta\). We denote this event with $\mathcal E_{\mathrm{app}}(\delta)$.
\end{lemma}

\begin{proof}
Fix \(X\in\mathcal X\), and write $a:=a(X;W)$, $M:=M(X;W)$. Define, for each \(i\in[m]\),
\[
g_i(X):=g_i^{(c)}(X)+g_i^{(u)}(X)+g_i^{(w)}(X),
\]
where
\begin{align*}
g_i^{(c)}(X)
&:= \phi_c\!\big(X;\varphi_i^{(0)}\big)\,v_c\!\big(\varphi_i^{(0)}\big),\\
g_i^{(u)}(X)
&:= \Big\langle \phi_u\!\big(X;\varphi_i^{(0)}\big),\,v_u\!\big(\varphi_i^{(0)}\big)\Big\rangle,\\
g_i^{(w)}(X)
&:= \Big\langle \phi_w\!\big(X;\varphi_i^{(0)}\big),\,v_w\!\big(\varphi_i^{(0)}\big)\Big\rangle_F,
\end{align*}
with
\[
\phi_c(X;\varphi_0)=\sigma(U^\top a(X;W)),\qquad
\phi_u(X;\varphi_0)=\sigma'(U^\top a(X;W))\,a(X;W),
\]
and
\[
\phi_w(X;\varphi_0)=\sigma'(U^\top a(X;W))\,(M(X;W)U)\,q_X^\top.
\]
By the definition of \(f_{\mathrm{lin}}\) and \(\tilde f(\cdot;v)\),
\[
f_{\mathrm{lin}}(X;\tilde\varphi)=\frac1m\sum_{i=1}^m g_i(X),
\qquad
\tilde f(X;v)=\mathbb E[g_1(X)].
\]
Hence
\[
f_{\mathrm{lin}}(X;\tilde\varphi)-\tilde f(X;v)
=
\frac1m\sum_{i=1}^m\big(g_i(X)-\mathbb E[g_i(X)]\big).
\]

Using the paired initialization, define
\[
\bar g_i^{(\ell)}(X):=\frac12\Big(g_i^{(\ell)}(X)+g_{i+m/2}^{(\ell)}(X)\Big),
\qquad \ell\in\{c,u,w\},\quad i\in[m/2].
\]
Since the pairs \(\big(\varphi_i^{(0)},\varphi_{i+m/2}^{(0)}\big)\) are i.i.d. across \(i\in[m/2]\), the random variables
\[
Z_i(X):=
\sum_{\ell\in\{c,u,w\}}
\Big(\bar g_i^{(\ell)}(X)-\mathbb E[\bar g_1^{(\ell)}(X)]\Big),
\qquad i\in[m/2],
\]
are i.i.d. and mean zero, and
\[
f_{\mathrm{lin}}(X;\tilde\varphi)-\tilde f(X;v)
=
\frac2m\sum_{i=1}^{m/2} Z_i(X).
\]

\paragraph{\(c\)-component.}
Since \(|\phi_c|\le \sigma_0\) and \(|v_c|\le \bar\nu_c\), we have
\[
|g_i^{(c)}(X)|\le \sigma_0\bar\nu_c,
\qquad
|\bar g_i^{(c)}(X)|\le \sigma_0\bar\nu_c.
\]
Therefore, by Hoeffding's lemma,
\[
\mathbb E\exp\!\Big(\lambda\big(\bar g_i^{(c)}(X)-\mathbb E\bar g_i^{(c)}(X)\big)\Big)
\le
\exp\!\Big(\frac{\lambda^2 (\sigma_0\bar\nu_c)^2}{2}\Big),
\qquad \forall \lambda\in\mathbb R.
\]

\paragraph{\(u\)-component.}
By Fact~\ref{fact:a-bound}, \(\|a(X;W)\|_2\le 1\). Since \(|\sigma'|\le \sigma_1\) and \(\|v_u\|_2\le \bar\nu_u\),
\[
|g_i^{(u)}(X)|
\le
\|\phi_u(X;\varphi_i^{(0)})\|_2\,\|v_u(\varphi_i^{(0)})\|_2
\le
\sigma_1\bar\nu_u,
\]
and hence \(|\bar g_i^{(u)}(X)|\le \sigma_1\bar\nu_u\). Another application of Hoeffding's lemma gives
\[
\mathbb E\exp\!\Big(\lambda\big(\bar g_i^{(u)}(X)-\mathbb E\bar g_i^{(u)}(X)\big)\Big)
\le
\exp\!\Big(\frac{\lambda^2 (\sigma_1\bar\nu_u)^2}{2}\Big),
\qquad \forall \lambda\in\mathbb R.
\]

\paragraph{\(w\)-component.}
Let $Y_i := \bar g_i^{(w)}(X)$. We have
$|Y_i| \le \sigma_1\bar\nu_w \|M(X;W_i^{(0)}) U_i^{(0)}\|_2$.
We claim that
\[
\mathbb{E}\exp\!\Big(\lambda(Y_i - \mathbb{E}Y_i)\Big) \le \exp(2\lambda^2 (\sigma_1\bar\nu_w)^2), \qquad \forall\,\lambda \in \mathbb{R}.
\]

By standard symmetrization trick, it suffices to show
\begin{equation}\label{eq:Yi-moment}
\mathbb{E}|Y_i|^{2k} \le (\sigma_1\bar\nu_w)^{2k} \cdot \frac{(2k)!}{2^k k!}, \qquad k \ge 1.
\end{equation}

Conditional on $W_i^{(0)}$, let $s_1,\dots,s_d$ be the singular values of $M(X;W_i^{(0)})$.
Since $U_i^{(0)} \sim \mathcal{N}(0,I_d)$, we have
$\|M U_i^{(0)}\|_2^2 \stackrel{d}{=} \sum_{r=1}^d s_r^2 G_r^2$
with $G_r \stackrel{\mathrm{i.i.d.}}{\sim} \mathcal{N}(0,1)$.
By Jensen's inequality (viewing $\{s_r^2/\sum s_r^2\}$ as weights and using convexity of $x \mapsto x^k$),
\[
\Big(\sum_{r} s_r^2 G_r^2\Big)^k
\le \Big(\sum_{r} s_r^2\Big)^{k-1} \sum_{r} s_r^2 G_r^{2k}.
\]
Taking conditional expectation and using $\sum_r s_r^2 = \mathrm{tr}(M^\top M) \le (\mathrm{tr}\,M)^2 \le 1$ (Fact~\ref{fact:cov-bound}), we get
\[
\mathbb{E}\big[\|M U_i^{(0)}\|_2^{2k} \mid W_i^{(0)}\big]
\le \mathbb{E}|G_1|^{2k} = \frac{(2k)!}{2^k k!},
\]
which gives \eqref{eq:Yi-moment} after multiplying by $(\sigma_1\bar\nu_w)^{2k}$ and taking full expectation.

Substituting \eqref{eq:Yi-moment} into the symmetrized MGF series yields
\[
\mathbb{E}\,e^{\lambda(Y_i - \mathbb{E}Y_i)}
\le \sum_{k=0}^\infty \frac{(2\lambda \sigma_1\bar\nu_w)^{2k}}{(2k)!} \cdot \frac{(2k)!}{2^k k!}
= \sum_{k=0}^\infty \frac{(2\lambda^2 (\sigma_1\bar\nu_w)^2)^k}{k!}
= e^{2\lambda^2 (\sigma_1\bar\nu_w)^2},
\]
proving the claim.

\paragraph{Combining the bounds.}
Write
\[
Z_i^{(c)}:=\bar g_i^{(c)}-\mathbb E\bar g_i^{(c)},
\qquad
Z_i^{(u)}:=\bar g_i^{(u)}-\mathbb E\bar g_i^{(u)},
\qquad
Z_i^{(w)}:=\bar g_i^{(w)}-\mathbb E\bar g_i^{(w)}.
\]
Then \(Z_i=Z_i^{(c)}+Z_i^{(u)}+Z_i^{(w)}\). For any \(\lambda\in\mathbb R\), applying Hölder's inequality with exponents \(1/p_c+1/p_u+1/p_w=1\) we get
\begin{align*}
\mathbb E e^{\lambda Z_i}
&\le
\Big(\mathbb E e^{p_c\lambda Z_i^{(c)}}\Big)^{1/p_c}
\Big(\mathbb E e^{p_u\lambda Z_i^{(u)}}\Big)^{1/p_u}
\Big(\mathbb E e^{p_w\lambda Z_i^{(w)}}\Big)^{1/p_w}\\
&\le
\exp\!\Big(\frac{p_c\lambda^2 (\sigma_0\bar\nu_c)^2}{2}\Big)
\exp\!\Big(\frac{p_u\lambda^2 (\sigma_1\bar\nu_u)^2}{2}\Big)
\exp\!\Big(2p_w\lambda^2 (\sigma_1\bar\nu_w)^2\Big)\\
&=
\exp\!\Big(\frac{\lambda^2}{2}(\sigma_0\bar\nu_c+\sigma_1\bar\nu_u+2\sigma_1\bar\nu_w)^2\Big),
\end{align*}
where in the last equality we set
\[
p_c=\frac{\sigma_0\bar\nu_c+\sigma_1\bar\nu_u+2\sigma_1\bar\nu_w}{\sigma_0\bar\nu_c},\qquad p_u=\frac{\sigma_0\bar\nu_c+\sigma_1\bar\nu_u+2\sigma_1\bar\nu_w}{\sigma_1\bar\nu_u},\qquad p_w=\frac{\sigma_0\bar\nu_c+\sigma_1\bar\nu_u+2\sigma_1\bar\nu_w}{2\sigma_1\bar\nu_w}.
\]

Thus each \(Z_i\) is centered sub-Gaussian with proxy \(\sigma_0\bar\nu_c+\sigma_1\bar\nu_u+2\sigma_1\bar\nu_w\). Since \(Z_1,\dots,Z_N\) are i.i.d., for every \(\lambda\in\mathbb R\),
\[
\mathbb E\exp\!\Big(\lambda\sum_{i=1}^{m/2} Z_i\Big)
\le
\exp\!\Big(\frac{m\lambda^2}{4}(\sigma_0\bar\nu_c+\sigma_1\bar\nu_u+2\sigma_1\bar\nu_w)^2\Big).
\]
By Chernoff's bound,
\[
\mathbb P\!\left(
\left|\frac2m\sum_{i=1}^{m/2} Z_i\right|\ge t
\right)
\le
2\exp\!\left(
-\frac{m t^2}{4(\sigma_0\bar\nu_c+\sigma_1\bar\nu_u+2\sigma_1\bar\nu_w)^2}
\right).
\]
Choosing
\[
t=
\sqrt2\,(\sigma_0\bar\nu_c+\sigma_1\bar\nu_u+2\sigma_1\bar\nu_w)\sqrt{\frac{2\log(2n/\delta)}{m}},
\]
and a union bound over \(X^{(1)},\dots,X^{(n)}\) yields, with probability at least \(1-\delta\),
\[
\varepsilon_{\mathrm{app}}
=
\sup_{j\in[n]}
\left|f_{\mathrm{lin}}(X^{(j)};\tilde\varphi)-\tilde f(X^{(j)};v)\right|
\le
\sqrt2\,
(\sigma_0\bar\nu_c+\sigma_1\bar\nu_u+2\sigma_1\bar\nu_w)
\sqrt{\frac{2\log(2n/\delta)}{m}}.
\]
This proves the lemma.
\end{proof}

\subsection{Proof of Theorem~\ref{thm:projgd}} \label{ap:projgd}
The following lemma will be useful.
\begin{lemma}[Uniform envelope bounds]\label{lem:envelope}
On the event $\mathcal{E}_{U,m}(\delta')$ and for any $\varphi\in\Omega_\rho$,
\[
\sup_{X\in\mathcal{X}}|f(X;\varphi)|
\;\le\; L_{1,m}(\delta')\,\sqrt{\rho_u^2+\rho_w^2}\;+\;\sigma_0\,\rho_c =: f^{\mathrm{max}}.
\]
Moreover, for any $\tilde{f}\in\mathcal{F}_{\bar\nu}$,
\[
\sup_{X\in\mathcal{X}}|\tilde f(X;v)|
\;\le\; \sigma_0\,\bar\nu_c \;+\; \sigma_1\,\bar\nu_u \;+\; \sigma_1\,\sqrt{d}\,\bar\nu_w =: y^{\mathrm{max}}.
\]
\end{lemma}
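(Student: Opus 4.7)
The approach is to reduce both bounds to per-block calculations. For the first, I use the symmetric initialization to cancel the zero-th order contribution and then invoke the Lipschitz control from Lemma~\ref{lem:h-local-Lipschitz-smooth}. For the second, I peel off each of the three feature blocks inside the expectation defining $\tilde f$ in \eqref{eq:f_tilde_def} and apply Cauchy--Schwarz with the sup-norm constraints defining $\mathcal{V}_{\bar\nu}$.

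\paragraph{Bound on $|f(X;\varphi)|$.}
Starting from $f(X;\varphi)=\tfrac{1}{\sqrt m}\sum_{i=1}^m c_i\, h(X;\theta_i)$, I would add and subtract $c_i^{(0)}h(X;\theta_i)$ to obtain
\[
f(X;\varphi)=\tfrac{1}{\sqrt m}\sum_{i=1}^m (c_i-c_i^{(0)})\,h(X;\theta_i)\;+\;\tfrac{1}{\sqrt m}\sum_{i=1}^m c_i^{(0)}\,h(X;\theta_i).
\]
Because $\theta_{i+m/2}^{(0)}=\theta_i^{(0)}$ while $c_{i+m/2}^{(0)}=-c_i^{(0)}$, the symmetric initialization yields $\sum_i c_i^{(0)}h(X;\theta_i^{(0)})=0$, so the second sum equals $\tfrac{1}{\sqrt m}\sum_i c_i^{(0)}\bigl[h(X;\theta_i)-h(X;\theta_i^{(0)})\bigr]$. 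On $\mathcal{E}_{U,m}(\delta')$, Lemma~\ref{lem:h-local-Lipschitz-smooth} bounds each increment by $L_{1,m}(\delta')\|\theta_i-\theta_i^{(0)}\|_2\le L_{1,m}(\delta')\sqrt{\rho_u^2+\rho_w^2}/\sqrt m$, and $|c_i^{(0)}|=1$, giving a total contribution of $L_{1,m}(\delta')\sqrt{\rho_u^2+\rho_w^2}$. The first sum is bounded by $\sigma_0\rho_c$ using $|c_i-c_i^{(0)}|\le \rho_c/\sqrt m$ and $|h(X;\theta_i)|\le \sigma_0$. Summing yields $f^{\max}$.

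\paragraph{Bound on $|\tilde f(X;v)|$.}
Treat each of the three expectations in \eqref{eq:f_tilde_def} separately. The $c$-block gives $|\phi_c(X)\,v_c(\varphi_0)|\le \sigma_0\bar\nu_c$ a.s. For the $u$-block, the bound $\|\phi_u(X)\|_2\le \sigma_1\|a(X;W)\|_2\le \sigma_1$ combined with $\|v_u\|_2\le \bar\nu_u$ gives $\sigma_1\bar\nu_u$. For the $w$-block, factor $\|\phi_w(X)\|_F=|\sigma'(U^\top a)|\,\|(MU)q_X^\top\|_F\le \sigma_1\|M\|_F\|U\|_2\|q_X\|_2\le \sigma_1\|U\|_2$, so $|\langle \phi_w(X),v_w(\varphi_0)\rangle_F|\le \sigma_1\|U\|_2\,\bar\nu_w$; Jensen's inequality then yields $\mathbb{E}\|U\|_2\le \sqrt{\mathbb{E}\|U\|_2^2}=\sqrt d$. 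Adding the three block contributions gives $y^{\max}$.

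\paragraph{Main obstacle.}
Neither bound is deep: both follow from the structural bounds on $a$, $M$, and the feature maps $\phi_*$ recorded in Section~\ref{sec:useful-facts} and in Lemma~\ref{lem:h-local-Lipschitz-smooth}, together with the envelope constraints encoded in $\rho$ and $\bar\nu$. The only subtle point is the $\sqrt d$ factor in the $w$-block of $y^{\max}$, which arises because $\tilde f$ is a population object and $U$ must be controlled in expectation (via Jensen) rather than uniformly on $\mathcal{E}_{U,m}(\delta')$ as is possible for the $f$-bound.
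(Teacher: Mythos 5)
Your proof is correct and follows essentially the same route as the paper: the same splitting of $f$ into the $\delta c_i$ term and the $c_i^{(0)}$-weighted increments (using $f(X;\varphi^{(0)})=0$ from the symmetric initialization plus Lemma~\ref{lem:h-local-Lipschitz-smooth}), and the same per-block bounds on $\tilde f$ with $\mathbb{E}\|U\|_2\le\sqrt{d}$ via Jensen for the $w$-block. No gaps.
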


\begin{proof}
\textbf{Bounding $|f(X;\varphi)|$.}
Fix $X \in \mathcal{X}$ and write $\delta c_i=c_i-c_i^{(0)}$, $\delta\theta_i=(U_i-U_i^{(0)},\operatorname{vec}(W_i-W_i^{(0)}))$.
Thanks to symmetric initialization, $f(X;\varphi^{(0)})=0$, hence
\[
|f(X;\varphi)|
= \underbrace{\frac{1}{\sqrt{m}}\sum_{i=1}^m \Big| c_i^{(0)}\big(h(X;\theta_i)-h(X;\theta_i^{(0)})\big)\Big|}_{(i)}
  + \underbrace{\frac{1}{\sqrt{m}}\sum_{i=1}^m \Big| \delta c_i\, h(X;\theta_i)\Big|}_{(ii)}.
\]
On $\mathcal{E}_{U,m}(\delta')$, Lemma~\ref{lem:h-local-Lipschitz-smooth} gives
$\big|h(X;\theta_i)-h(X;\theta_i^{(0)})\big|\le L_{1,m}(\delta')\|\delta\theta_i\|_2$ and we also have $|c_i^{(0)}| = 1$ surely. Using Cauchy--Schwarz and $\varphi\in\Omega_\rho$, we get
\[
(i)\leq\frac{1}{\sqrt{m}}\sum_{i=1}^m L_{1,m}(\delta')\|\delta\theta_i\|_2
\;\le\; L_{1,m}(\delta')\sqrt{\rho_u^2+\rho_w^2}.
\]
Similarly, since we have $|h(X;\theta_i)|\le \sigma_0$, we get
\[
(ii)
\;\le\; \frac{1}{\sqrt{m}}\sum_{i=1}^m \frac{\rho_c}{\sqrt{m}}\sigma_0
\;=\; \sigma_0\,\rho_c.
\]
Therefore,
$|f(X;\varphi)| \le L_{1,m}(\delta')\sqrt{\rho_u^2+\rho_w^2}+\sigma_0\rho_c$,
and taking the supremum over $X \in \mathcal{X}$ gives the stated bound.

\textbf{Bounding $|\tilde{f}(X;v)|$.}
Recall
\[
\tilde f(X;v)
= \mathbb{E}\!\left[\phi_c(X)\,v_c(\varphi_0)\right]
+ \mathbb{E}\!\left[\langle \phi_u(X), v_u(\varphi_0)\rangle\right]
+ \mathbb{E}\!\left[\langle \phi_w(X), v_w(\varphi_0)\rangle_F\right],
\]
with
$\phi_c(X)=\sigma(U^\top a)$,
$\phi_u(X)=\sigma'(U^\top a)\,a$,
$\phi_w(X)=\sigma'(U^\top a)\,(M U)\,q_X^\top$.
Using $|\sigma|\le\sigma_0$, $|\sigma'|\le\sigma_1$, $\|a\|_2\le 1$, and
$\|M\|_F\le 1$, we obtain
\begin{align*}
\big|\mathbb{E}[\phi_c(X)\,v_c(\varphi_0)]\big|
&\le \sigma_0\,\bar\nu_c,\\
\big|\mathbb{E}[\langle \phi_u(X), v_u(\varphi_0)\rangle]\big|
&\le \sigma_1\,\bar\nu_u,\\
\big|\mathbb{E}[\langle \phi_w(X), v_w(\varphi_0)\rangle_F]\big|
&\le \sigma_1\,\mathbb{E}\!\left[\|(M U)q_X^\top\|_F\right]\bar\nu_w
 \;\le\; \sigma_1\,\mathbb{E}\!\left[\|U\|_2\right]\bar\nu_w
 \;\le\; \sigma_1\,\sqrt{d}\,\bar\nu_w,
\end{align*}
where the last inequality uses $\mathbb{E}\|U\|_2\le\sqrt{d}$ for $U\sim\mathcal{N}(0,I_d)$.
Summing the three bounds and taking the supremum over $X \in \mathcal{X}$ proves the claim.

\end{proof}

\begin{theorem}
Assume $f^\star \in \mathcal{F}_{\bar\nu}$ and the projection radius satisfies $\rho \ge \bar\nu$.
Run \textsc{ProjGD} onto $\Omega_\rho$ for $\tau$ steps with the step size $\eta=1/\sqrt{\tau}$.
Then, conditioned on the event $\mathcal{E}_{U,m}(\delta') \cap \mathcal E_{\mathrm{app}}(\delta)$, we have
\begin{equation*}
\min_{s<\tau}\ \widehat{\mathcal{L}}_n\!\big(\varphi^{(s)}\big)\; \lesssim_{\bar{\sigma}, \bar{\nu}, \rho}
\! \;
\frac{\Lone^{4}}{\sqrt{\tau}} \\
+\Lone(B_{\mathrm{app}, m}(\delta) + B_{\mathrm{lin}, m}(\delta') + \varepsilon_{\mathrm{CoF}}),
\end{equation*}
and for the average iterate $\bar\varphi^{(\tau)}:=\frac{1}{\tau}\sum_{s=0}^{\tau-1}\varphi^{(s)}$,
\begin{equation*}
\widehat{\mathcal L}_n\!\big(\bar\varphi^{(\tau)}\big) \; \lesssim_{\bar{\sigma}, \bar{\nu}, \rho}
\! \;
\frac{\Lone^{4}}{\sqrt{\tau}} \\
+\Lone(B_{\mathrm{app}, m}(\delta) + B_{\mathrm{lin}, m}(\delta') + \varepsilon_{\mathrm{CoF}})
+B_{\mathrm{lin}, m}(\delta')^2,
\end{equation*}
where $B_{\mathrm{lin}, m}(\delta')$ and $B_{\mathrm{app}, m}(\delta)$ are defined as in Lemmas~\ref{lem:linearization} and \ref{lem:approximation}, and 
\[
\varepsilon_{\mathrm{CoF}} \; \lesssim_{\bar{\sigma}, \bar{\nu}, \rho} \; \frac{\Lone + \Ltwo}{\sqrt{m}}.
\]
\end{theorem}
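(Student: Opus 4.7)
The plan is to compare the projected gradient iterates on the nonlinear loss against the convex linearized surrogate $\widehat{\mathcal L}^{\mathrm{lin}}_n(\varphi) := \frac{1}{n}\sum_{j} (f_{\mathrm{lin}}(X^{(j)};\varphi)-y^{(j)})^2$, using as the reference point the transport parameters $\tilde\varphi$ from \eqref{eq:good-params} produced by $v\in\mathcal V_{\bar\nu}$ with $\tilde f(\cdot;v)=f^\star$. Since $\rho\ge\bar\nu$ we have $\tilde\varphi\in\Omega_\rho$, so $\Pi_{\Omega_\rho}$ is non-expansive toward $\tilde\varphi$ and $\|\varphi^{(s)}-\tilde\varphi\|$ is bounded by a constant in $\bar\nu,\rho$. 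Three error sources then arise: the linearization error (Lemma~\ref{lem:linearization}), the approximation error at $\tilde\varphi$ (Lemma~\ref{lem:approximation}), and a convergence-of-features quantity $\varepsilon_{\mathrm{CoF}}:=\sup_{\varphi\in\Omega_\rho,X\in\mathcal X}\|\nabla_\varphi f(X;\varphi)-\nabla_\varphi f(X;\varphi^{(0)})\|$.

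First I would collect uniform envelopes: Lemma~\ref{lem:envelope} gives $|f(X;\varphi)|,|y^{(j)}|\lesssim_{\bar\sigma,\bar\nu,\rho} L_{1,m}(\delta')$ on $\mathcal E_{U,m}(\delta')$ since the relevant $\sqrt d$ is dominated by $L_{1,m}$, while the per-sample Lipschitz bound $\|\nabla_\varphi f(X;\varphi)\|\lesssim L_{1,m}$ comes from Lemma~\ref{lem:h-local-Lipschitz-smooth}; together they yield $\|\nabla\widehat{\mathcal L}_n(\varphi)\|^2\lesssim L_{1,m}^4$. Next I would bound $\varepsilon_{\mathrm{CoF}}$ by expanding $\nabla_\varphi f-\nabla_\varphi f^{(0)}$ blockwise: the $c_i$ block is controlled by the $L_{1,m}$-Lipschitzness of $\theta_i\mapsto h$, and the $\theta_i$ block decomposes as $\tfrac{c_i^{(0)}}{\sqrt m}[\nabla_{\theta_i}h(\theta_i)-\nabla_{\theta_i}h(\theta_i^{(0)})] + \tfrac{c_i-c_i^{(0)}}{\sqrt m}\nabla_{\theta_i}h(\theta_i)$, controlled by the $L_{2,m}$-smoothness and the Lipschitz bound respectively. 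Squaring and summing over $i\in[m]$ the per-head radii $\|\delta\theta_i\|^2\le\|\rho\|^2/m$ and $|\delta c_i|^2\le\rho_c^2/m$ cancel the outer factor $m$, producing $\varepsilon_{\mathrm{CoF}}\lesssim_{\bar\sigma,\bar\nu,\rho} (L_{1,m}+L_{2,m})/\sqrt m$.

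The core estimate is the standard projected gradient descent recursion
\[
\|\varphi^{(s+1)}-\tilde\varphi\|^2 \le \|\varphi^{(s)}-\tilde\varphi\|^2 - 2\eta\langle\nabla\widehat{\mathcal L}_n(\varphi^{(s)}),\varphi^{(s)}-\tilde\varphi\rangle + \eta^2\|\nabla\widehat{\mathcal L}_n(\varphi^{(s)})\|^2.
\]
I would then write $\nabla\widehat{\mathcal L}_n=\nabla\widehat{\mathcal L}^{\mathrm{lin}}_n+e_s$ with
\[
e_s = \tfrac{2}{n}\sum_j\bigl[(f-f_{\mathrm{lin}})\nabla f(X^{(j)};\varphi^{(s)}) + (f_{\mathrm{lin}}-y^{(j)})\bigl(\nabla f(X^{(j)};\varphi^{(s)})-\nabla f(X^{(j)};\varphi^{(0)})\bigr)\bigr],
\]
which together with the envelopes, Lemma~\ref{lem:linearization}, and the CoF bound gives $\|e_s\|\lesssim L_{1,m}(B_{\mathrm{lin},m}(\delta')+\varepsilon_{\mathrm{CoF}})$. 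Convexity of $\widehat{\mathcal L}^{\mathrm{lin}}_n$ on the first summand, Cauchy--Schwarz on $\langle e_s,\varphi^{(s)}-\tilde\varphi\rangle$, telescoping over $s<\tau$, and the step size $\eta=1/\sqrt\tau$ then yield $\tau^{-1}\sum_s(\widehat{\mathcal L}^{\mathrm{lin}}_n(\varphi^{(s)})-\widehat{\mathcal L}^{\mathrm{lin}}_n(\tilde\varphi)) \lesssim L_{1,m}^4/\sqrt\tau + L_{1,m}(B_{\mathrm{lin},m}+\varepsilon_{\mathrm{CoF}})$. Back-translating to $\widehat{\mathcal L}_n$ via the algebraic identity $(f-y)^2=(f_{\mathrm{lin}}-y)^2+2(f_{\mathrm{lin}}-y)(f-f_{\mathrm{lin}})+(f-f_{\mathrm{lin}})^2$, combined with the anchor residual $\widehat{\mathcal L}^{\mathrm{lin}}_n(\tilde\varphi)\le B_{\mathrm{app},m}(\delta)^2\lesssim L_{1,m}B_{\mathrm{app},m}$ from Lemma~\ref{lem:approximation}, produces both displays in the statement; the quadratic remainder $B_{\mathrm{lin},m}^2$ survives only in the average-iterate bound because its translation is applied at the single point $\bar\varphi^{(\tau)}$ (then Jensen on the convex $\widehat{\mathcal L}^{\mathrm{lin}}_n$ closes the loop), whereas for the min iterate $\min_s\widehat{\mathcal L}_n\le\tau^{-1}\sum_s\widehat{\mathcal L}_n$ lets $B_{\mathrm{lin},m}^2$ be absorbed into $L_{1,m}B_{\mathrm{lin},m}$.

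The hard part will be Step~2 (convergence of features): despite having $m$ independent heads each contributing a gradient drift, the blockwise squared drifts must sum to exactly cancel the outer factor of $m$ so that $\varepsilon_{\mathrm{CoF}}$ decays as $m^{-1/2}$ rather than $m^0$. The $m^{-1/2}$ normalization of $f$, the symmetric initialization (which keeps $|c_i^{(0)}|=1$ and $f(\cdot;\varphi^{(0)})\equiv 0$), and the radius scaling $\|\delta\theta_i\|,|\delta c_i|=O(1/\sqrt m)$ of $\Omega_\rho$ together conspire to produce this tight cancellation; without it, the $L_{1,m}+L_{2,m}$ factor in $\varepsilon_{\mathrm{CoF}}$ would be multiplied by $\sqrt m$ and the logarithmic overparameterization claim would collapse.
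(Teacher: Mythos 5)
Your proposal is correct in substance and identifies exactly the right ingredients (Lyapunov drift toward the transport point $\tilde\varphi$, the three error sources, the envelope bounds, the per-head change-of-features computation whose $O(1/m)$ blockwise bounds sum to $O(1/\sqrt m)$), but it organizes the descent argument differently from the paper. The paper never decomposes the gradient as $\nabla\widehat{\mathcal L}_n=\nabla\widehat{\mathcal L}^{\mathrm{lin}}_n+e_s$; instead it expands the directional derivative $\langle \nabla_\varphi f(X^{(j)};\varphi^{(s)}),\tilde\varphi-\varphi^{(s)}\rangle$ per sample as $\varepsilon^{\mathrm{app}}_j+(f^\star_j-f^{(s)}_j)+\varepsilon^{\mathrm{lin}}_j+\varepsilon^{\mathrm{CoF}}_j$, so that pairing with the residual $f^{(s)}_j-f^\star_j$ produces the exact term $-2\widehat{\mathcal L}_n(\varphi^{(s)})$ \emph{of the true loss}; convexity of the linearized surrogate is invoked only for the average iterate, via a sandwich inequality and Jensen. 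Your route runs the regret analysis on the convex surrogate $\widehat{\mathcal L}^{\mathrm{lin}}_n$ and back-translates at the end. Both are valid, but yours pays two small prices that the paper's does not: (i) the min-iterate bound inherits quadratic remainders $B_{\mathrm{lin},m}^2$ and $B_{\mathrm{app},m}^2$ from the back-translation and the anchor residual $\widehat{\mathcal L}^{\mathrm{lin}}_n(\tilde\varphi)$, and absorbing these into $L_{1,m}B_{\mathrm{lin},m}$ and $L_{1,m}B_{\mathrm{app},m}$ requires $B_{\mathrm{lin},m}\lesssim L_{1,m}$ and $B_{\mathrm{app},m}\lesssim L_{1,m}$, i.e.\ roughly $m\gtrsim d+\log(m/\delta')$ and $m\gtrsim\log(n/\delta)$ — true in the regime of interest but not unconditional under the stated $\lesssim_{\bar\sigma,\bar\nu,\rho}$ convention, so you should either state this mild width condition or keep the quadratic terms explicit in the min-iterate display; (ii) your Cauchy--Schwarz step on $\langle e_s,\varphi^{(s)}-\tilde\varphi\rangle$ needs the bound $\|\varphi^{(s)}-\tilde\varphi\|_2\le\|\rho\|_2+\|\bar\nu\|_2$, which you assert but should make explicit. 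What your approach buys in exchange is a cleaner separation between the (convex) optimization problem and the (nonlinear) perturbation, and in fact a slightly tighter $B_{\mathrm{app},m}^2$ dependence on the approximation error where the absorption holds.
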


\begin{proof}
Choose $v$ with $f^\star = \tilde{f}(\cdot;v)$ and set the Lyapunov function $\mathcal V(\varphi)=\|\varphi-\tilde\varphi\|_2^2$, where $\tilde\varphi$ is from \eqref{eq:good-params}. Write the pre-projection point 
$\varphi'^{(s)}=\varphi^{(s)}-\eta\,\nabla \widehat{\mathcal L}_n(\varphi^{(s)})$.
By non-expansivity of Euclidean projection onto a convex set and the Pythagorean identity,
\begin{equation}\label{eq:drift-raw}
\mathcal V(\varphi^{(s+1)}) \;\le\; \mathcal V(\varphi'^{(s)}) \;=\;
\mathcal V(\varphi^{(s)}) 
+ 2\eta\,\underbrace{\big\langle \nabla \widehat{\mathcal L}_n(\varphi^{(s)}),\,\tilde\varphi-\varphi^{(s)}\big\rangle}_{\text{(i)}}
+ \eta^2\,\underbrace{\big\|\nabla \widehat{\mathcal L}_n(\varphi^{(s)})\big\|_2^2}_{\text{(ii)}} .
\end{equation}

\paragraph{Controlling (i).}
For shorthand write $f^{(s)}_j:=f(X^{(j)};\varphi^{(s)})$, $f^\star_j:=f^\star(X^{(j)})$ and
$g^{(s)}_j:=\nabla_\varphi f(X^{(j)};\varphi^{(s)})$, $g^{(0)}_j:=\nabla_\varphi f(X^{(j)};\varphi^{(0)})$.
Then
\begin{align}
\big\langle \nabla \widehat{\mathcal L}_n(\varphi^{(s)}),\,\tilde\varphi-\varphi^{(s)}\big\rangle
&= \frac{2}{n}\sum_{j=1}^n \bigl(f^{(s)}_j-f^\star_j\bigr)\;\big\langle g^{(s)}_j,\,\tilde\varphi-\varphi^{(s)}\big\rangle . \label{eq:center-0}
\end{align}
Introduce the change-of-feature error
\begin{equation} \label{eq:cof-error}
\varepsilon^{\mathrm{CoF}}_j(\varphi^{(s)}):=\big\langle g^{(s)}_j-g^{(0)}_j,\,\tilde\varphi-\varphi^{(s)}\big\rangle,
\end{equation}
and note that
\begin{align}
\big\langle g^{(s)}_j,\,\tilde\varphi-\varphi^{(s)}\big\rangle
&= \big\langle g^{(0)}_j,\,\tilde\varphi-\varphi^{(s)}\big\rangle + \varepsilon^{\mathrm{CoF}}_j(\varphi^{(s)}) \notag\\
&= f_{\mathrm{lin}}(X^{(j)};\tilde\varphi) - f_{\mathrm{lin}}(X^{(j)};\varphi^{(s)}) + \varepsilon^{\mathrm{CoF}}_j(\varphi^{(s)}).
\label{eq:center-1}
\end{align}
Now add and subtract $f^\star_j - f^{(s)}_j$ in \eqref{eq:center-1} to obtain
\begin{equation}
\big\langle g^{(s)}_j,\,\tilde\varphi-\varphi^{(s)}\big\rangle
= \underbrace{f_{\mathrm{lin}}(X^{(j)};\tilde\varphi) - f^\star_j}_{\varepsilon^{\mathrm{app}}_j}
+ \bigl(f^\star_j - f^{(s)}_j\bigr) \notag
+ \underbrace{f^{(s)}_j - f_{\mathrm{lin}}(X^{(j)};\varphi^{(s)})}_{\varepsilon^{\mathrm{lin}}_j(\varphi^{(s)})}
+ \varepsilon^{\mathrm{CoF}}_j(\varphi^{(s)}).
\end{equation}

From Lemmas~\ref{lem:linearization} and \ref{lem:approximation} we have,
\begin{align}
\big|\varepsilon^{\mathrm{lin}}_j(\varphi^{(s)})\big|
&\le B_{\mathrm{lin},m}(\delta'), \label{eq:lin-bound-clean}\\[0.25em]
\big|\varepsilon^{\mathrm{app}}_j\big|
&\le B_{\mathrm{app},m}(\delta), \label{eq:app-bound-clean}
\end{align}

To bound $\big|\varepsilon^{\mathrm{CoF}}_j(\varphi^{(s)})\big|$, we start from \eqref{eq:cof-error} and use the triangle and Cauchy--Schwarz inequalities,
\begin{align}\label{eq:cof-cs}
\big|\varepsilon^{\mathrm{CoF}}_j(\varphi^{(s)})\big|
\;&\le\; \sum_{i=1}^m \Big|\big\langle \nabla_{\varphi_i} f(X^{(j)};\varphi^{(s)}) - \nabla_{\varphi_i} f(X^{(j)};\varphi^{(0)}),\; \tilde{\varphi}_i - \varphi^{(s)}_i \big\rangle\Big| \\
\;&\le\; \sum_{i=1}^m \big\|\nabla_{\varphi_i} f(X^{(j)};\varphi^{(s)}) - \nabla_{\varphi_i} f(X^{(j)};\varphi^{(0)})\big\|_2\, \big\|\tilde{\varphi}_i - \varphi^{(s)}_i\big\|_2.
\end{align}
By construction of $\tilde{\varphi}$ and $\Omega_\rho$,
\begin{equation}\label{eq:param-radius}
\big\|\tilde{\varphi}_i - \varphi^{(s)}_i\big\|_2
\;\le\; \frac{\|\bar{\nu}\|_2 + \|\rho\|_2}{\sqrt{m}}.
\end{equation}
Set $\delta c_i:=c_i^{(s)}-c_i^{(0)}$, $\delta\theta_i:=\theta_i^{(s)}-\theta_i^{(0)}$.
From Lemma~\ref{lem:grads},
\[
\nabla_{\theta_i} f(X;\varphi) \;=\; m^{-1/2}\, c_i\, \nabla_{\theta_i} h(X;\theta_i),
\qquad
\nabla_{c_i} f(X;\varphi) \;=\; m^{-1/2}\, h(X;\theta_i).
\]
Hence
\begin{align}
\nabla_{\theta_i} f\big(X^{(j)};\varphi^{(s)}\big) - \nabla_{\theta_i} f\big(X^{(j)};\varphi^{(0)}\big)
&= \frac{c_i^{(0)}}{\sqrt{m}}\Big(\nabla_{\theta_i} h\big(X^{(j)};\theta_i^{(s)}\big) - \nabla_{\theta_i} h\big(X^{(j)};\theta_i^{(0)}\big)\Big)
   + \frac{\delta c_i}{\sqrt{m}}\, \nabla_{\theta_i} h\big(X^{(j)};\theta_i^{(s)}\big), \label{eq:theta-diff}\\
\nabla_{c_i} f\big(X^{(j)};\varphi^{(s)}\big) - \nabla_{c_i} f\big(X^{(j)};\varphi^{(0)}\big)
&= \frac{1}{\sqrt{m}}\Big(h\big(X^{(j)};\theta_i^{(s)}\big)-h\big(X^{(j)};\theta_i^{(0)}\big)\Big). \label{eq:c-diff}
\end{align}
Invoking Lemma~\ref{lem:h-local-Lipschitz-smooth}, we have for all $i$:
\[
\big\|\nabla_{\theta_i} h(X;\theta_i^{(s)}) - \nabla_{\theta_i} h(X;\theta_i^{(0)})\big\|_2 \le L_{2,m}(\delta')\, \|\delta\theta_i\|_2,
\qquad
\big|h(X;\theta_i^{(s)}) - h(X;\theta_i^{(0)})\big| \le L_{1,m}(\delta')\, \|\delta\theta_i\|_2,
\]
and $\|\nabla_{\theta_i} h(X;\theta_i^{(s)})\|_2 \le L_{1,m}(\delta')$. With $|c_i^{(0)}|=1$ at initialization,
$|\delta c_i|\le \rho_c/\sqrt{m}$, and $\|\delta\theta_i\|_2 \le \sqrt{\rho_u^2+\rho_w^2}/\sqrt{m}$, the bounds
\eqref{eq:theta-diff}--\eqref{eq:c-diff} yield
\begin{align}
\big\|\nabla_{\theta_i} f(X^{(j)};\varphi^{(s)}) - \nabla_{\theta_i} f(X^{(j)};\varphi^{(0)})\big\|_2
&\le \frac{L_{2,m}(\delta')}{\sqrt{m}}\,\frac{\sqrt{\rho_u^2+\rho_w^2}}{\sqrt{m}} \notag
     \;+\; \frac{\rho_c}{\sqrt{m}}\,\frac{L_{1,m}(\delta')}{\sqrt{m}} \\ 
 &= \frac{L_{2,m}(\delta')\sqrt{\rho_u^2+\rho_w^2} + \rho_c L_{1,m}(\delta')}{m}, \label{eq:theta-block}\\
\big\|\nabla_{c_i} f(X^{(j)};\varphi^{(s)}) - \nabla_{c_i} f(X^{(j)};\varphi^{(0)})\big\|_2
&\le \frac{L_{1,m}(\delta')}{\sqrt{m}}\,\frac{\sqrt{\rho_u^2+\rho_w^2}}{\sqrt{m}}
 = \frac{L_{1,m}(\delta')\sqrt{\rho_u^2+\rho_w^2}}{m}. \label{eq:c-block}
\end{align}
Combining the $c$- and $\theta$-blocks via $\|(x,y)\|_2\le \|x\|_2+\|y\|_2$,
\begin{equation}\label{eq:block-sum}
\big\|\nabla_{\varphi_i} f(X^{(j)};\varphi^{(s)}) - \nabla_{\varphi_i} f(X^{(j)};\varphi^{(0)})\big\|_2
\;\le\; \frac{(L_{1,m}(\delta')+L_{2,m}(\delta'))\sqrt{\rho_u^2+\rho_w^2} + \rho_c L_{1,m}(\delta')}{m}.
\end{equation}
Plugging \eqref{eq:param-radius} and \eqref{eq:block-sum} into \eqref{eq:cof-cs}, we obtain, for every $j$,
\begin{equation}\label{eq:cof-bound-clean}
\big|\varepsilon^{\mathrm{CoF}}_j(\varphi^{(s)})\big|
\;\le\; \Big(\,(L_{1,m}(\delta')+L_{2,m}(\delta'))\sqrt{\rho_u^2+\rho_w^2} + \rho_c L_{1,m}(\delta')\,\Big)\,
\frac{\|\bar{\nu}\|_2 + \|\rho\|_2}{\sqrt{m}}
\;=:\; B_{\mathrm{CoF},m}(\delta').
\end{equation}

Putting \eqref{eq:center-1} in \eqref{eq:center-0} and using the bounds
\eqref{eq:lin-bound-clean}, \eqref{eq:app-bound-clean} and \eqref{eq:cof-bound-clean}, we have, for all $s\ge0$,
\begin{align}
\big\langle \nabla \widehat{\mathcal L}_n(\varphi^{(s)}),\,\tilde\varphi-\varphi^{(s)}\big\rangle
&= -\,2\,\widehat{\mathcal L}_n(\varphi^{(s)})
+ \frac{2}{n}\sum_{j=1}^n \bigl(f^{(s)}_j-f^\star_j\bigr)\Big(\varepsilon^{\mathrm{app}}_j + \varepsilon^{\mathrm{lin}}_j(\varphi^{(s)}) + \varepsilon^{\mathrm{CoF}}_j(\varphi^{(s)})\Big) \notag\\
&\le -\,2\,\widehat{\mathcal L}_n(\varphi^{(s)})
+ 2\,(f^{\max}+y^{\max})\Big(B_{\mathrm{app},m}(\delta)+B_{\mathrm{lin},m}(\delta')+B_{\mathrm{CoF},m}(\delta')\Big),
\label{eq:control-i-final}
\end{align}
where $f^{\max}$ and $y^{\max}$ are defined as in Lemma~\ref{lem:envelope}.

\paragraph{Controlling (ii).}
For any $j\in[n]$ and head $i\in[m]$,
\[
\Big\|\nabla_{\varphi_i} f\big(X^{(j)};\varphi^{(s)}\big)\Big\|_2
\;\le\; \frac{|h(X^{(j)};\theta_i^{(s)})|}{\sqrt{m}}
+\frac{|c_i^{(s)}|}{\sqrt{m}}\,
\Big\|\nabla_{\theta_i} h\big(X^{(j)};\theta_i^{(s)}\big)\Big\|_2
\;\le\; \frac{1}{\sqrt{m}}\Big(\sigma_0 + |c_i^{(s)}|\,L_{1,m}(\delta')\Big),
\]
and since $|c_i^{(s)}|\le 1+\rho_c/\sqrt{m}$ on $\Omega_\rho$,
\begin{equation}\label{eq:per-head-grad}
\Big\|\nabla_{\varphi_i} f\big(X^{(j)};\varphi^{(s)}\big)\Big\|_2
\;\le\; \frac{1}{\sqrt{m}}\Big(\sigma_0 + (1+\tfrac{\rho_c}{\sqrt{m}})\,L_{1,m}(\delta')\Big).
\end{equation}
Summing the squares over $i=1,\dots,m$ gives
\begin{equation}\label{eq:whole-grad-f}
\Big\|\nabla_{\varphi} f\big(X^{(j)};\varphi^{(s)}\big)\Big\|_2
\;\le\; A_{m}(\delta')
\qquad\text{with}\qquad
A_{m}(\delta'):=\sigma_0 + \Big(1+\tfrac{\rho_c}{\sqrt{m}}\Big)L_{1,m}(\delta').
\end{equation}
Therefore,
\begin{align}
\Big\|\nabla \widehat{\mathcal L}_n(\varphi^{(s)})\Big\|_2
&= \left\| \frac{2}{n}\sum_{j=1}^n \bigl(f^{(s)}_j-f^\star_j\bigr)\,
\nabla_{\varphi} f\big(X^{(j)};\varphi^{(s)}\big) \right\|_2 \notag\\
&\le \frac{2}{n}\sum_{j=1}^n |f^{(s)}_j-f^\star_j|\,\Big\|\nabla_{\varphi} f\big(X^{(j)};\varphi^{(s)}\big)\Big\|_2
\;\le\; 2\,(f^{\max}+y^{\max})\,A_{m}(\delta'), \notag
\end{align}
and hence
\begin{equation}\label{eq:Bgrad}
\Big\|\nabla \widehat{\mathcal L}_n(\varphi^{(s)})\Big\|_2^2
\;\le\; 4\,(f^{\max}+y^{\max})^2\,A_{m}(\delta')^2
\;=:\; B_{\mathrm{grad},m}(\delta').
\end{equation}

\paragraph{Lyapunov drift.}
Plugging \eqref{eq:control-i-final} and \eqref{eq:Bgrad} into \eqref{eq:drift-raw} yields
\begin{equation}\label{eq:drift-final}
\mathcal V(\varphi^{(s+1)})-\mathcal V(\varphi^{(s)})
\;\le\;
-4\eta\,\widehat{\mathcal L}_n(\varphi^{(s)})
+ 4\eta\,(f^{\max}+y^{\max})\big(B_{\mathrm{app},m}(\delta)+B_{\mathrm{lin},m}(\delta')+B_{\mathrm{CoF},m}(\delta')\big)
+ \eta^2 B_{\mathrm{grad},m}(\delta') .
\end{equation}
Summing \eqref{eq:drift-final} over $s=0,\dots,\tau-1$ and using
$\sum_{s<\tau}(\mathcal V(\varphi^{(s+1)})-\mathcal V(\varphi^{(s)}))
=\mathcal V(\varphi^{(\tau)})-\mathcal V(\varphi^{(0)})$,
we obtain after rearranging the terms
\begin{equation}\label{eq:avg-risk}
\frac{1}{\tau}\sum_{s=0}^{\tau-1}\widehat{\mathcal L}_n(\varphi^{(s)})
\;\le\;
\frac{\mathcal V(\varphi^{(0)})}{4\eta\tau}
+ (f^{\max}+y^{\max})\big(B_{\mathrm{app},m}(\delta)+B_{\mathrm{lin},m}(\delta')+B_{\mathrm{CoF},m}(\delta')\big)
+ \frac{\eta}{4}\,B_{\mathrm{grad},m}(\delta') .
\end{equation}
Choosing $\eta=\tau^{-1/2}$ and using $\mathcal{V}(\varphi^{(0)}) \leq \|\bar{\nu}\|_2^2$ gives
\begin{equation}\label{eq:min-risk-final}
\min_{0\le s<\tau}\widehat{\mathcal L}_n\!\big(\varphi^{(s)}\big)
\;\le\;
\frac{\| \bar{\nu} \|_2^2 + B_{\mathrm{grad},m}(\delta')}{4\sqrt{\tau}}
+ (f^{\max}+y^{\max})\big(B_{\mathrm{app},m}(\delta)+B_{\mathrm{lin},m}(\delta')+B_{\mathrm{CoF},m}(\delta')\big).
\end{equation}

\paragraph{Average iterate.} Define $\widehat{\mathcal L}_n^{\mathrm{lin}}(\varphi):=\frac{1}{n}\sum_{j=1}^n(f_{\mathrm{lin}}(X^{(j)};\varphi)-y^{(j)})^2$. From the uniform linearization bound in Lemma~\ref{lem:linearization}, for all $\varphi\in\Omega_\rho$,
\[
\widehat{\mathcal L}_n(\varphi)\le 2\,\widehat{\mathcal L}_n^{\mathrm{lin}}(\varphi)+2\,B_{\mathrm{lin},m}(\delta')^2,
\qquad
\widehat{\mathcal L}_n^{\mathrm{lin}}(\varphi)\le 2\,\widehat{\mathcal L}_n(\varphi)+2\,B_{\mathrm{lin},m}(\delta')^2.
\]
By convexity of $\widehat{\mathcal L}_n^{\mathrm{lin}}$ and Jensen,
\[
\widehat{\mathcal L}_n^{\mathrm{lin}}\!\big(\bar\varphi^{(\tau)}\big)
\;\le\;\frac1\tau\sum_{s=0}^{\tau-1}\widehat{\mathcal L}_n^{\mathrm{lin}}\!\big(\varphi^{(s)}\big)
\;\le\;\frac{2}{\tau}\sum_{s=0}^{\tau-1}\widehat{\mathcal L}_n\!\big(\varphi^{(s)}\big)+2\,B_{\mathrm{lin},m}(\delta')^2.
\]
Combining the two inequalities yields the “sandwich” inequality
\begin{equation} \label{eq:sandwich}
\widehat{\mathcal L}_n\!\big(\bar\varphi^{(\tau)}\big)
\;\le\;\frac{4}{\tau}\sum_{s=0}^{\tau-1}\widehat{\mathcal L}_n\!\big(\varphi^{(s)}\big)+6\,B_{\mathrm{lin},m}(\delta')^2.
\end{equation}
Substituting \eqref{eq:avg-risk} into \eqref{eq:sandwich} gives the desired inequality.
\end{proof}

\subsection{Proof of Proposition~\ref{thm:projsgd}}

\begin{proposition}
Under the same conditions as Theorem~\ref{thm:projgd}, we have, on the event $\mathcal{E}_{U,m}(\delta') \cap \mathcal E_{\mathrm{app}}(\delta)$,
\begin{equation*}
\mathbb{E}\left[\min_{s<\tau}\widehat{\mathcal L}_n(\varphi^{(s)})\,|\,\varphi^{(0)}\right]\; \lesssim_{\bar{\sigma}, \bar{\nu}, \rho}
\! \;
\frac{\Lone^{4}}{\sqrt{\tau}}
+\Lone(B_{\mathrm{app}, m}(\delta) + B_{\mathrm{lin}, m}(\delta') + \varepsilon_{\mathrm{CoF}}),
\end{equation*}
and for the average iterate,
\begin{equation*}
\mathbb{E}\left[\widehat{\mathcal L}_n\!\big(\bar\varphi^{(\tau)}\big)\,|\,\varphi^{(0)}\right]
\! \; \lesssim_{\bar{\sigma}, \bar{\nu}, \rho}
\! \;
\frac{\Lone^{4}}{\sqrt{\tau}} \\
+\Lone(B_{\mathrm{app}, m}(\delta) + B_{\mathrm{lin}, m}(\delta') + \varepsilon_{\mathrm{CoF}})
+B_{\mathrm{lin}, m}(\delta')^2.
\end{equation*}
\end{proposition}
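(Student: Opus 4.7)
The plan is to mirror the Lyapunov drift argument from the proof of Theorem~\ref{thm:projgd}, conditioning on the filtration generated by the stochastic sampling to exploit unbiasedness. Set $\mathcal{V}(\varphi) = \|\varphi - \tilde\varphi\|_2^2$, let $g_s := \nabla \widehat{\mathcal{L}}_{B_s}(\varphi^{(s)})$ with $B_s=\{J_s\}$, $J_s \stackrel{\mathrm{i.i.d.}}{\sim} \mathrm{Unif}([n])$, and write $\mathcal{F}_s := \sigma(\varphi^{(0)},J_0,\ldots,J_{s-1})$. By non-expansivity of the Euclidean projection onto the convex set $\Omega_\rho$ and expanding the square,
\[
\mathcal{V}(\varphi^{(s+1)}) \;\le\; \mathcal{V}(\varphi^{(s)}) + 2\eta\,\langle g_s,\,\tilde\varphi-\varphi^{(s)}\rangle + \eta^2\,\|g_s\|_2^2.
\]

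Next, I would take conditional expectation given $\mathcal{F}_s$. Since $\mathbb{E}[g_s \mid \mathcal{F}_s] = \nabla\widehat{\mathcal{L}}_n(\varphi^{(s)})$, the cross term collapses exactly to the full-batch quantity bounded in step (i) of Theorem~\ref{thm:projgd}'s proof, giving the $-2\widehat{\mathcal{L}}_n(\varphi^{(s)})$ descent together with the three uniform error terms $B_{\mathrm{app},m}(\delta)$, $B_{\mathrm{lin},m}(\delta')$, and $B_{\mathrm{CoF},m}(\delta')$. For the second-order term, the key observation is that $\nabla \ell_j(\varphi^{(s)}) = 2(f^{(s)}_j-y^{(j)})\,\nabla_\varphi f(X^{(j)};\varphi^{(s)})$ admits, on $\mathcal{E}_{U,m}(\delta')$, the same uniform-in-$j$ bound
$\|\nabla\ell_j(\varphi^{(s)})\|_2 \le 2(f^{\max}+y^{\max})\,A_m(\delta')$
as the full-batch gradient (via Lemma~\ref{lem:envelope} and the per-head bound \eqref{eq:whole-grad-f}). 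Consequently $\mathbb{E}[\|g_s\|_2^2 \mid \mathcal{F}_s] \le B_{\mathrm{grad},m}(\delta')$, the same constant as in the deterministic proof.

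Taking total expectation conditional on $\varphi^{(0)}$, summing over $s=0,\ldots,\tau-1$, using the telescoping identity for $\mathcal{V}$ and $\mathcal{V}(\varphi^{(0)}) \le \|\bar\nu\|_2^2$, and setting $\eta=1/\sqrt{\tau}$, I obtain the deterministic bound \eqref{eq:avg-risk} with $\widehat{\mathcal{L}}_n(\varphi^{(s)})$ replaced by $\mathbb{E}[\widehat{\mathcal{L}}_n(\varphi^{(s)})\mid\varphi^{(0)}]$. The first assertion then follows from $\mathbb{E}[\min_{s<\tau}\widehat{\mathcal{L}}_n(\varphi^{(s)})\mid\varphi^{(0)}] \le \frac{1}{\tau}\sum_{s=0}^{\tau-1}\mathbb{E}[\widehat{\mathcal{L}}_n(\varphi^{(s)})\mid\varphi^{(0)}]$. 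For the averaged iterate, I would apply the sandwich inequality \eqref{eq:sandwich} in expectation: since $\widehat{\mathcal L}_n^{\mathrm{lin}}$ is convex, Jensen and the uniform linearization control from Lemma~\ref{lem:linearization} give
\[
\mathbb{E}\!\left[\widehat{\mathcal L}_n(\bar\varphi^{(\tau)})\mid\varphi^{(0)}\right]
\;\le\; \frac{4}{\tau}\sum_{s=0}^{\tau-1}\mathbb{E}\!\left[\widehat{\mathcal L}_n(\varphi^{(s)})\mid\varphi^{(0)}\right] + 6\,B_{\mathrm{lin},m}(\delta')^2,
\]
and the previous estimate closes the argument.

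The main subtlety, rather than a true obstacle, is verifying that the per-pathwise bounds from Theorem~\ref{thm:projgd} (envelopes, per-sample gradient norms, change-of-feature error) continue to hold almost surely along the ProjSGD trajectory; this is automatic because the iterates $\varphi^{(s)}$ stay in $\Omega_\rho$ by construction, so all bounds derived under the event $\mathcal{E}_{U,m}(\delta')\cap\mathcal{E}_{\mathrm{app}}(\delta)$ apply uniformly in $s$. Notably, no extra variance term appears: the uniform pointwise bound on $\|\nabla\ell_j(\varphi^{(s)})\|_2$ ensures that moving from full-batch to single-sample stochastic gradients does not inflate the second-order term in the drift beyond $B_{\mathrm{grad},m}(\delta')$.
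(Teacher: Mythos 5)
Your proposal is correct and follows essentially the same route as the paper's own proof: conditioning on the past to exploit unbiasedness of the single-sample gradient, reusing the full-batch drift bounds from Theorem~\ref{thm:projgd} (including the almost-sure bound $\|\nabla\ell_{J_s}(\varphi^{(s)})\|_2^2\le B_{\mathrm{grad},m}(\delta')$ so no extra variance term appears), telescoping, and applying the sandwich inequality in conditional expectation for the averaged iterate. No gaps.
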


\begin{proof}
Choose $v$ with $f^\star = \tilde{f}(\cdot;v)$ and define the Lyapunov function $\mathcal V(\varphi)=\|\varphi-\tilde\varphi\|_2^2$, where $\tilde\varphi$ is from \eqref{eq:good-params}. Write the pre‑projection iterate $\varphi'^{(s)}=\varphi^{(s)}-\eta\,\nabla_\varphi \ell_{J_s}(\varphi^{(s)})$.
By non‑expansivity of projection and the Pythagorean identity,
\begin{equation}\label{eq:psgd-drift}
\mathcal V(\varphi^{(s+1)}) \le \mathcal V(\varphi^{(s)}) 
+ 2\eta\underbrace{\langle \nabla_\varphi \ell_{J_s}(\varphi^{(s)}),\,\tilde\varphi-\varphi^{(s)}\rangle}_{\text{(i)}}
+ \eta^2\underbrace{\|\nabla_\varphi \ell_{J_s}(\varphi^{(s)})\|_2^2}_{\text{(ii)}}.
\end{equation}

\paragraph{Controlling (i).}
Let $E_s[\cdot]:=\mathbb{E}_{J_s}[\cdot\,|\,J_{<s}, \varphi^{(0)}]$ denote expectation w.r.t.\ the fresh index $J_s$, keeping the past $J_{<s}$ and $\varphi^{(0)}$ fixed (so $\varphi^{(s)}$ is fixed).
Since $E_s\big[\nabla_\varphi \ell_{J_s}(\varphi^{(s)})\big]=\nabla \widehat{\mathcal L}_n(\varphi^{(s)})$, the same “Controlling (i)” computation as in the proof of Theorem~\ref{thm:projgd} gives
\[
E_s\big[\langle \nabla_\varphi \ell_{J_s}(\varphi^{(s)}),\,\tilde\varphi-\varphi^{(s)}\rangle\big]
\le -\,2\,\widehat{\mathcal L}_n(\varphi^{(s)}) + 2\,(f^{\max}+y^{\max})\Big(B_{\mathrm{app},m}(\delta)+B_{\mathrm{lin},m}(\delta')+B_{\mathrm{CoF},m}(\delta')\Big).
\]

\paragraph{Controlling (ii).}
Similarly, using the same arguments leading to \eqref{eq:Bgrad}, we have
\(
\|\nabla_\varphi \ell_{J_s}(\varphi^{(s)})\|_2^2\le B_{\mathrm{grad},m}(\delta')
\)
almost surely and therefore
\[
E_s\big[\|\nabla_\varphi \ell_{J_s}(\varphi^{(s)})\|_2^2\big]\le B_{\mathrm{grad},m}(\delta').
\]

\paragraph{Lyapunov drift.}
Taking $E_s[\cdot]$ in \eqref{eq:psgd-drift} and combining the previous two steps yields
\[
E_s\big[\mathcal V(\varphi^{(s+1)})-\mathcal V(\varphi^{(s)})\big]
\le -4\eta\,\widehat{\mathcal L}_n(\varphi^{(s)})
+ 4\eta\,(f^{\max}+y^{\max})\Big(B_{\mathrm{app},m}(\delta)+B_{\mathrm{lin},m}(\delta')+B_{\mathrm{CoF},m}(\delta')\Big)
+ \eta^2 B_{\mathrm{grad},m}(\delta').
\]
Taking expectation over $J_0,\ldots,J_{\tau-1}$, summing $s=0,\dots,\tau-1$, using 
$\sum_{s<\tau}\big(\mathcal V(\varphi^{(s+1)})-\mathcal V(\varphi^{(s)})\big)
=\mathcal V(\varphi^{(\tau)})-\mathcal V(\varphi^{(0)})$, and $\mathcal{V}(\varphi^{(0)}) \leq \|\bar{\nu}\|_2^2$, we obtain after rearranging the terms
\begin{equation} \label{eq:average-loss-sgd}
\frac{1}{\tau}\sum_{s=0}^{\tau-1}\mathbb{E}\big[\widehat{\mathcal L}_n(\varphi^{(s)}) | \varphi^{(0)} \big]
\le \frac{\|\bar\nu\|_2^2}{4\eta\tau}
+ (f^{\max}+y^{\max})\Big(B_{\mathrm{app},m}(\delta)+B_{\mathrm{lin},m}(\delta')+B_{\mathrm{CoF},m}(\delta')\Big)
+ \frac{\eta}{4}B_{\mathrm{grad},m}(\delta').
\end{equation}
With $\eta=\tau^{-1/2}$ and $\min_{0\le s<\tau}\widehat{\mathcal L}_n(\varphi^{(s)})
\le \tfrac1\tau\sum_{s<\tau}\widehat{\mathcal L}_n(\varphi^{(s)})$, we get
\begin{equation}
\mathbb{E}\left[\min_{0\le s<\tau}\widehat{\mathcal L}_n\!\big(\varphi^{(s)}\big) | \varphi^{(0)}\right]
\;\le\;
\frac{\|\bar\nu\|_2^2 + B_{\mathrm{grad},m}(\delta')}{4\sqrt{\tau}}
+ (f^{\max}+y^{\max})\Big(B_{\mathrm{app},m}(\delta)+B_{\mathrm{lin},m}(\delta')+B_{\mathrm{CoF},m}(\delta')\Big).
\end{equation}
\paragraph{Average iterate.} Taking conditional expectation given $\varphi^{(0)}$ on both sides of \eqref{eq:sandwich}, we get
\begin{equation} \label{eq:sandwich-sgd}
\mathbb{E}\!\left[\widehat{\mathcal L}_n\!\big(\bar\varphi^{(\tau)}\big)\,\middle|\,\varphi^{(0)}\right]
\;\le\;
\frac{4}{\tau}\sum_{s=0}^{\tau-1}\mathbb{E}\!\left[\widehat{\mathcal L}_n\!\big(\varphi^{(s)}\big)\,\middle|\,\varphi^{(0)}\right]
+6\,B_{\mathrm{lin},m}(\delta')^2.
\end{equation}
Substituting \eqref{eq:average-loss-sgd} into \eqref{eq:sandwich-sgd} gives the desired inequality.
\end{proof}

\section{EXPERIMENTAL DETAILS} \label{sec:app-exp-details}
\paragraph{Code and Reproducibility.}
All code to reproduce our experiments is available at the following repository:
\[
\texttt{\url{https://github.com/enesarda22/nonasymptotic-transformer}}
\]

\subsection{IndRNN Architecture}
Our recurrent model is an Independent RNN (IndRNN) studied by \citet{cayci2024rnngd} with one key difference: because the prediction target in our case is a scalar, we use only the final hidden state and apply a linear layer, rather than producing a sequence of outputs. Concretely, for an input sequence
\(
X=[X_1,\ldots,X_T]\in\mathbb{R}^{d\times T},
\)
width $m$, activation $\sigma$, hidden matrix $U\in\mathbb{R}^{m\times d}$, diagonal recurrent weights $w\in\mathbb{R}^m$, and linear layer weights $c\in\mathbb{R}^m$, we define
\begin{align}
h_0 &= 0\in\mathbb{R}^m,\\
h_t &= \sigma\!\big(U\,X_t + w\odot h_{t-1}\big)\in\mathbb{R}^m,\qquad t=1,\ldots,T,\\
f_{\mathrm{rnn}}(X;\varphi_{\mathrm{rnn}}) &:= \frac{1}{\sqrt{m}}\,c^\top h_T\in\mathbb{R},
\end{align}
where $\odot$ denotes elementwise multiplication and $\varphi_{\mathrm{rnn}}=(\varphi_{\mathrm{rnn}, i})_{i=1}^m$ with $\varphi_{\mathrm{rnn}, i}:=(U_i,w_i,c_i)$.

\paragraph{Symmetric initialization.}
Similar to Section~\ref{sec:init}, for IndRNN, we use a symmetric initialization: assume $m$ is even and draw the first half independently as
\begin{equation} \label{eq:rnn-init}
U_i^{(0)}\sim\mathcal{N}(0,I_d),\qquad
w_i^{(0)}\sim\mathrm{Rad}(\gamma),\quad
c_i^{(0)}\sim\mathrm{Rad}(1),\qquad i=1,\ldots,\tfrac{m}{2}.
\end{equation}
The second half is set as
\[
U_{i+m/2}^{(0)}=U_i^{(0)},\quad w_{i+m/2}^{(0)}=w_i^{(0)},\quad c_{i+m/2}^{(0)}=-\,c_i^{(0)}.
\]
This pairing ensures that the initial predictor is identically zero for any input $X \in \mathcal{X}$.

\paragraph{Parameter set.}
Similar to Definition~\ref{def:parameter-set}, for IndRNN, we define the neighborhood for projection to be
\[
\Omega_{\mathrm{rnn},\rho}
:=\Big\{(U,w,c):\ \|U-U^{(0)}\|_{2,\text{row}}\le \tfrac{\rho_u}{\sqrt m},\ 
\|w-w^{(0)}\|_\infty\le \tfrac{\rho_w}{\sqrt m},\ 
\|c-c^{(0)}\|_\infty\le \tfrac{\rho_c}{\sqrt m}\Big\}.
\]

\paragraph{Training error for IndRNN.}
For completeness, we state the training loss result for the IndRNN architecture with scalar outputs, which follows as a corollary of \citep[Thm.~4.4]{cayci2024rnngd}.

\begin{proposition}
Assume $\tilde{f} \in \mathcal{F}_{\mathrm{rnn},\bar\nu}$ (see \citep[Eq.~(3.5)]{cayci2024rnngd}), and that the targets are generated as $y^{(j)}=\tilde{f}(X^{(j)})_T$ for $j=1,\dots,n$, with projection radius $\rho \ge \bar\nu$.
Run \textsc{ProjGD} on $f_{\mathrm{rnn}}$ with projection onto $\Omega_{\mathrm{rnn},\rho}$ for $\tau$ steps with step size $\eta=1/\sqrt{\tau}$.
Then, for any $\delta \in (0,1)$, with probability at least $1-\delta$,
\[
\min_{s<\tau}\ \widehat{\mathcal{L}}_n\!\big(\varphi_\mathrm{rnn}^{(s)}\big)\;\lesssim_{\bar{\sigma},\bar{\nu},\rho,\mu_T}\; \frac{1}{\sqrt{\tau}} \;+\; \sqrt{\frac{\log(2n/\delta)}{m}},
\]
where $\mu_T=\mathcal{O}(1)$ if $\gamma+\tfrac{\rho_w}{\sqrt{m}} \le \tfrac{1}{\sigma_1}$, and $\mu_T=\exp(\Omega(T))$ otherwise. Here $\gamma>0$ is the Rademacher scale in the symmetric initialization of the diagonal recurrent weights (see \eqref{eq:rnn-init}).
\end{proposition}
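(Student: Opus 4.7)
The plan is to carry out the same Lyapunov-drift program that underlies the proof of Theorem~\ref{thm:projgd}, but with the Transformer-specific ingredients (Lemma~\ref{lem:h-local-Lipschitz-smooth}, the attention gradient identities in Lemma~\ref{lem:grads}, and the RKHS $\mathcal{F}_{\bar\nu}$) replaced by their IndRNN analogs developed in \citet{cayci2024rnngd}. The starting observation is that the scalar IndRNN studied here is a restriction of the sequence-to-sequence IndRNN analyzed in \citet{cayci2024rnngd}: the map $X \mapsto f_{\mathrm{rnn}}(X;\varphi_{\mathrm{rnn}}) = m^{-1/2}\,c^\top h_T$ is obtained by reading out only the terminal hidden state through a linear layer. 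Thus, the per-time-step Lipschitz and smoothness bounds for $\varphi_{\mathrm{rnn},i}\mapsto h_t$ established in [cayci2024rnngd, Prop.~3.1--3.3] transfer verbatim, and only the $t=T$ contribution enters the Lyapunov analysis.

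First, I would verify that the symmetric pairing in \eqref{eq:rnn-init} yields $f_{\mathrm{rnn}}(X;\varphi_{\mathrm{rnn}}^{(0)})\equiv 0$ for all $X\in\mathcal{X}$, by the same cancellation argument used in Section~\ref{sec:init} (the pairing flips only $c_i$, while the hidden state depends on $(U_i,w_i)$ alone). Next, I would invoke the IndRNN gradient decomposition from [cayci2024rnngd, Prop.~3.1] to obtain per-head gradient bounds of the form $\|\nabla_{\varphi_{\mathrm{rnn},i}} f_{\mathrm{rnn}}(X;\varphi_{\mathrm{rnn}})\|_2 \lesssim_{\bar\sigma,\rho} m^{-1/2}\,\mu_T$, where $\mu_T$ records the product of time-step Jacobians through the unrolled recursion. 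Analogous local Lipschitz continuity and smoothness statements for $\varphi_{\mathrm{rnn},i}\mapsto h_T$ follow from the compounded contraction analysis, again with constants proportional to $\mu_T$. The approximation error from the IndRNN transportation mapping gives a uniform bound of order $\sqrt{\log(2n/\delta)/m}$ by the same sub-Gaussian union-bound argument as in Lemma~\ref{lem:approximation}, and the linearization error on $\Omega_{\mathrm{rnn},\rho}$ is controlled by $\mathrm{poly}(\mu_T)/\sqrt{m}$ as in Lemma~\ref{lem:linearization}.

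With these four ingredients in hand, I would run the Lyapunov drift with $\mathcal{V}(\varphi_{\mathrm{rnn}})=\|\varphi_{\mathrm{rnn}}-\tilde\varphi_{\mathrm{rnn}}\|_2^2$, reproducing the inequality
\[
\mathcal{V}(\varphi_{\mathrm{rnn}}^{(s+1)}) - \mathcal{V}(\varphi_{\mathrm{rnn}}^{(s)})
\;\le\; -4\eta\,\widehat{\mathcal L}_n(\varphi_{\mathrm{rnn}}^{(s)}) + \eta\cdot O_{\mu_T}(B_{\mathrm{app}}+B_{\mathrm{lin}}+B_{\mathrm{CoF}}) + \eta^2\cdot O_{\mu_T}(1),
\]
telescoping over $s=0,\dots,\tau-1$, using $\mathcal{V}(\varphi_{\mathrm{rnn}}^{(0)})\le \|\bar\nu\|_2^2$, and setting $\eta = 1/\sqrt{\tau}$. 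Since only the $\min$-iterate bound is needed (and not an average-iterate claim), the analysis avoids the sandwich step \eqref{eq:sandwich}, giving the stated rate $\tau^{-1/2} + \sqrt{\log(2n/\delta)/m}$, with all hidden constants absorbed into $\lesssim_{\bar\sigma,\bar\nu,\rho,\mu_T}$. The high-probability event is the intersection of the IndRNN analog of $\mathcal{E}_{U,m}(\delta/2)$ (Gaussian concentration for $\max_i \|U_i^{(0)}\|_2$) and the approximation-error event $\mathcal{E}_{\mathrm{app}}(\delta/2)$, together having probability at least $1-\delta$.

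The main obstacle is the dichotomy between the two regimes of $\mu_T$. The recurrent Jacobians are products $\prod_{t'=1}^{T} \mathrm{diag}(\sigma'(\cdot))\,\mathrm{diag}(w)$, so their operator norms grow like $(\sigma_1\|w\|_\infty)^T$. On $\Omega_{\mathrm{rnn},\rho}$ the recurrent weights lie in a ball of radius $\rho_w/\sqrt{m}$ around the $\mathrm{Rad}(\gamma)$-initialized values, hence $\|w\|_\infty \le \gamma + \rho_w/\sqrt{m}$. Whenever this quantity stays below $1/\sigma_1$, the products telescope to a bound uniform in $T$ and $\mu_T = O(1)$; otherwise the best worst-case bound degrades to $\exp(\Omega(T))$. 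Isolating this cutoff and propagating it consistently through both the gradient-norm and smoothness estimates (in contrast to the Transformer case, where Facts 2 and 3 give $T$-free bounds via $\mathrm{tr}(M)\le 1$ and $\|a\|_2\le 1$) is the technically delicate step; once it is done, the rest of the argument mirrors Theorem~\ref{thm:projgd} line-by-line.
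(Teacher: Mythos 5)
Your proposal is a correct blueprint, but it takes a substantially longer route than the paper does. The paper offers essentially no proof of this proposition: it is stated as an immediate corollary of \citep[Thm.~4.4]{cayci2024rnngd}, the only observation needed being that the scalar-output IndRNN here is the terminal-coordinate restriction of the sequence-to-sequence IndRNN analyzed in that work (so the cited bound, specialized to the $T$-th output and to targets $y^{(j)}=\tilde f(X^{(j)})_T$, applies verbatim with the same $\mu_T$ dichotomy governed by whether $\gamma+\rho_w/\sqrt{m}\le 1/\sigma_1$). What you have written is, in effect, a reconstruction of the proof of that cited theorem --- the Lyapunov drift with $\mathcal V(\varphi_{\mathrm{rnn}})=\|\varphi_{\mathrm{rnn}}-\tilde\varphi_{\mathrm{rnn}}\|_2^2$, the four error ingredients (approximation, linearization, change-of-feature, gradient envelope) with constants proportional to powers of $\mu_T$, and the telescoping with $\eta=\tau^{-1/2}$ --- which is indeed how \citet{cayci2024rnngd} organize their argument, so nothing in your plan would fail. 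The trade-off is clear: the paper's citation buys a one-line justification at the cost of opacity about where $\mu_T$ enters, while your self-contained derivation makes the $T$-dependence mechanism explicit (the product of per-step Jacobians bounded by $(\sigma_1\|w\|_\infty)^{T-t}$, versus the $T$-free bounds $\|a\|_2\le 1$ and $\operatorname{tr} M\le 1$ on the Transformer side) but duplicates work already published. One small caution if you were to execute the plan in full: at the boundary $\sigma_1(\gamma+\rho_w/\sqrt{m})=1$ the geometric sum of Jacobian products grows linearly in $T$ rather than staying $O(1)$, so you should track whether the cited cutoff is strict or whether the $O(1)$ claim absorbs a factor the cited theorem normalizes away.
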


\paragraph{Max Jacobian norms in training.}
For Transformer and IndRNN we calculate
\[
\mathcal{G}_{\mathrm{rnn}}(\varphi_\mathrm{rnn})
:= \frac{1}{n}\sum_{j=1}^n \nabla_w f_{\mathrm{rnn}}(X^{(j)};\varphi_{\mathrm{rnn}}),
\qquad
\mathcal{G}_{\mathrm{tf}}(\varphi)
:= \frac{1}{n}\sum_{j=1}^n \nabla_W f(X^{(j)};\varphi).
\]
In Figure~\ref{fig:rnn-vs-trf}, we plot $\max_{s<\tau}\|\mathcal{G}_{\mathrm{rnn}}(\varphi_{\mathrm{rnn}}^{(s)})
\|_2$ and $\max_{s<\tau}\|\mathcal{G}_{\mathrm{tf}}(\varphi^{(s)})\|_F$.

\end{document}